\def\showdraftbox{0}
\def\showauthornotes{0}
\newcommand{\defeq}{\stackrel{\textup{def}}{=}}
\newtheorem{theorem}{Theorem}[section]
\newtheorem{lemma}[theorem]{Lemma}
\newtheorem{definition}[theorem]{Definition} 
\newtheorem{corollary}[theorem]{Corollary}
\newtheorem{proposition}[theorem]{Proposition}
\def\abs#1{\left| #1 \right|}
\renewcommand{\norm}[1]{\ensuremath{\left\lVert #1 \right\rVert}}
\newcommand\rea{\mathbb R}
\newcommand{\marginlabel}[1]%
{\mbox{}\marginpar{\it{\raggedleft\hspace{0pt}#1}}}
\DeclareMathOperator*{\argmin}{arg\,min}
\DeclareMathOperator*{\argmax}{arg\,max}
\definecolor{Mygray}{gray}{0.8}
\let\csname ifcommentflag\expandafter\endcsname
\newcommand{\Authornote}[2]{{\sf\color{red}{[#1: #2]}}}
\newcommand{\Authoredit}[2]{{\sf\color{red}{[#1]}\color{blue}{#2}}}
\newcommand{\Authorcomment}[2]{{\sf \color{gray}{[#1: #2]}}}
\newcommand{\Authorfnote}[2]{\footnote{\color{red}{#1: #2}}}
\newcommand{\Authorfixme}[1]{\Authornote{#1}{\textbf{??}}}
\newcommand{\Authormarginmark}[1]{\marginpar{\textcolor{red}{\fbox{
#1:!}}}}
\newcommand{\Authornote}[2]{}
\newcommand{\Authoredit}[2]{}
\newcommand{\Authorcomment}[2]{}
\newcommand{\Authorfnote}[2]{}
\newcommand{\Authorfixme}[1]{}
\newcommand{\Authormarginmark}[1]{}
\newlength{\pgmtab}  
\newcommand {\ELSE}{{\bf else\ }}
\newcommand {\IF}{{\bf if\ }}
\newcommand {\FOR}{{\bf for\ }}
\newcommand {\TO}{{\bf to\ }}
\newcommand {\WHILE}{{\bf while\ }}
\newcommand {\THEN}{\mbox{\bf then\ }}
\newcommand {\RETURN}{\mbox{\bf return\ }}
\newcommand {\TRUE}{\mbox{\bf true\ }}
\newcommand {\FALSE}{\mbox{\bf false\ }}
\def\qedsketch{\ifmmode\Box\else{\unskip\nobreak\hfil
\penalty50\hskip1em\null\nobreak\hfil$\Box$
\parfillskip=0pt\finalhyphendemerits=0\endgraf}\fi}
\newenvironment{proof}{\begin{trivlist} \item {\bf Proof:~~}}
   {\hfill $\Box$ \end{trivlist}}
\newenvironment{proofof}[1]{\begin{trivlist} \item {\bf Proof
#1:~~}}
  { \hfill $\Box$ \end{trivlist}}
\newlength{\tpush}
\newcommand{\handout}[5]{
   \noindent
   \begin{center}
   \framebox{ \vbox{ \hbox to \textwidth { {\bf \coursenum\ :\  \coursename} \hfill #5 }
       \vspace{3mm}
       \hbox to \textwidth { {\Large \hfill #2  \hfill} }
       \vspace{1mm}
       \hbox to \textwidth { {\it #3 \hfill #4} }
     }
   }
   \end{center}
   \vspace*{4mm}
   \newcommand{\lecturenum}{#1}
   \addcontentsline{toc}{chapter}{Lecture #1 -- #2}
}
\newtheorem{observation}[theorem]{Observation}
\newtheorem{claim}[theorem]{Claim}
\newdimen\pIR
\newcommand\StevesR{{\rm I\kern\pIR R}}
\def\Reals#1{\StevesR^{#1}}
\def\defeq{=}
\def\setof#1{\left\{#1  \right\}}
\def\sizeof#1{\left|#1  \right|}
\def\union{\cup}
\def\intersect{\cap}
\def\abs#1{\left|#1  \right|}
\def\norm#1{\left\| #1 \right\|}
\newcommand\dd{\boldsymbol{\mathit{d}}}
\newcommand{\Anote}{\Authornote{A}}
\newcommand{\eps}{\epsilon}
\newcommand\len{\ensuremath{\ell}}
\newcommand\fixpath{\ensuremath{\mathsf{fix}}}
\newcommand\grad{\ensuremath{\mathsf{grad}}}
\newcommand\lex{\ensuremath{\mathsf{lex}}}
\newcommand\lexless{\preceq}
\newcommand\nlexless{\not\preceq}
\newcommand{\interior}{\operatorname{int}}
\newcommand{\dis}{{\mathsf{dist}}}
\newcommand{\meta}{{\sc Meta-Lex}}
\newcommand{\nbr}[1]{\left\|#1\right\|}
\newenvironment{tight_enumerate}{
\begin{enumerate}
  \setlength{\itemsep}{0pt}
  \setlength{\topsep}{30pt}
  \setlength{\parskip}{0pt}
}{\end{enumerate}}
\newcommand\Ehat{{\widehat{{E}}}}
\newcommand{\terminalpressure}{{\sc Term-Pressure}}
\newcommand{\tcdagvc}{\textsc{Konig-Cover}}
\newcommand{\outlieralg}{\textsc{Outlier}}
\newcommand{\approxoutlieralg}{\textsc{Approx-Outlier}}
\newcommand\Vhat{{\widehat{{V}}}}
\begin{document}

\title{Algorithms for Lipschitz Learning on Graphs
  \thanks{This research was partially supported by
   AFOSR Award FA9550-12-1-0175,
   NSF grant CCF-1111257, a Simons Investigator Award to Daniel Spielman, and a MacArthur Fellowship.}
    \thanks{Code used in this work is available at \texttt{https://github.com/danspielman/YINSlex}}
    }

 \author{
 Rasmus Kyng \\
 Yale University\\
 \texttt{rasmus.kyng@yale.edu}
\and
 Anup Rao \\
 Yale University\\
  \texttt{anup.rao@yale.edu}
  \and
Sushant Sachdeva\\
 Yale University \\
 \texttt{sachdeva@cs.yale.edu}
  \and
Daniel A. Spielman\\
Yale University\\
 \texttt{spielman@cs.yale.edu}
 }

\maketitle


\begin{abstract}
  We develop fast algorithms for solving regression problems on graphs
  where one is given the value of a function at some vertices, and
  must find its smoothest possible extension to all vertices.
  The extension we compute is the absolutely minimal
  Lipschitz extension, and is the limit for large $p$ of $p$-Laplacian
  regularization.  
  We present an algorithm that computes a minimal Lipschitz extension
  in expected linear time, and an algorithm that computes an absolutely
  minimal Lipschitz extension in expected time $\widetilde{O} (m n)$.
  The latter algorithm has variants that seem to run much faster in practice.
  These
  extensions are particularly amenable to regularization: we can
  perform $l_{0}$-regularization on the given values in polynomial
  time and $l_{1}$-regularization on the initial function values and on graph edge weights in time
  $\widetilde{O} (m^{3/2})$.
  
  Our definitions and algorithms naturally extend to
  directed graphs.
\end{abstract}






\section{Introduction}


We consider a problem in which we are given a weighted undirected graph $G = (V,E,\len)$ and values $v_{0}: T \to \mathbb{R}$
  on a subset $T$ of its vertices.  
We view the weights $\len$ as indicating the lengths of edges, with shorter length
  indicating greater similarity.
Our goal it to assign values to every vertex
  $v \in V \backslash T$ so that the values assigned are as {\it smooth} as possible across edges.
A minimal Lipschitz extension of $v_{0}$ is a vector $v$ that minimizes
\begin{equation}\label{eqn:inf}
  \max_{(x,y) \in E}  (\len{(x,y)})^{-1} \abs{v (x) - v (y)} ,
\end{equation}
subject to $v (x) = v_{0} (x)$ for all  $x \in T$.
We call such a vector an inf-minimizer.
Inf-minimizers are not unique.
So, among inf-minimizers
  we seek vectors that minimize the second-largest absolute value of
  $ \len{(x,y)}^{-1} \abs{v (x) - v (y)} $
  across edges,
  and then the third-largest given that, and so on.
We call such a vector $v$ a lex-minimizer.
It is also known as an absolutely minimal Lipschitz extension of $v_{0}$.

These are the limit of the solution to $p$-Laplacian minimization problems for large $p$, namely the vectors that solve
 \begin{align}
\label{eq:pLap}
 \min_{\substack{v \in \mathbb{R}^n \\ v |_T = v_{0}|_{T}}} 
   \sum_{(x,y) \in E} (\len{(x,y)})^{-p} |v(x) - v(y)|^p. 
 \end{align}
The use of $p=2$ was suggested in the foundational paper of \cite{Zhu03},
  and is particularly nice because it can be obtained by solving a 
  system of linear equations in a symmetric diagonally dominant matrix, which can be done very quickly
  (\cite{cohen2014solving}).
The use of larger values of $p$ has been discussed
  by \cite{Ulrike_p_resist}, and by \cite{NBridle}, but it is much more complicated to compute.
The fastest algorithms we know for this problem require convex programming, and
  then require very high accuracy to obtain the values at most vertices.
By taking the limit as $p$ goes to infinity, we recover the lex-minimizer, which we will show can be computed quickly.

The lex-minimization problem has a remarkable amount of structure.
For example, in uniformly weighted graphs the value of the
lex-minimizer at every vertex not in $T$ is equal to the average of
the minimum and maximum of the values at its neighbors.  This is
analogous to the property of the $2$-Laplacian minimizer that the
value at every vertex not in $T$ equals the average of the values at
its neighbors.

\subsection{Contributions}
We first present several important structural properties of
lex-minimizers in Section \ref{ssec:characterizations}. As we shall
point out, some of these were known from previous work, sometimes in
restricted settings. We state them generally and prove them for
completeness. We also prove that the lex-minimizer is as stable as
possible under perturbations of $v_{0}$ (Section
\ref{ssec:stability}).

The structure of the lex-minimization problem has led us to develop elegant algorithms
  for its solution.
Both the algorithms and their analyses could be taught to undergraduates.
We believe that these algorithms could be used in place of $2$-Laplacian minimization
  in many applications.

We present algorithms for the following problems.  Throughout, $m =
\sizeof{E}$ and $n = \sizeof{V}$.
 \begin{description}
 \item[Inf-minimization:] An algorithm that runs in expected time $O(m
   + n \log n)$  (Section \ref{sec:infAlg}).
 \item[Lex-minimization:] An algorithm that runs in expected time 
 $O(n (m + n \log n))$ (Section \ref{sec:algs}), along with a variant
 that runs quickly in practice (Section \ref{sec:fastAlg}).
  \item[$l_1$-regularization of edge lengths for inf-minimization:]
  The problem of minimizing \eqref{eqn:inf} 
  given a limited budget with which one can increase edge lengths is a linear programming problem.  
  We show how to solve it in time $\widetilde{O} (m^{3/2})$ with an interior point method
  by using fast Laplacian solvers (Section~\ref{sec:l1_reg}).
  The same algorithm can accommodate $l_{1}$-regularization of the values given in $v_{0}$.

 \item[$l_0$-regularization of vertex values for inf-minimization:] 
  We give a polynomial time algorithm for
  $l_{0}$-regularization of the values at vertices.
  That is, we minimize \eqref{eqn:inf} given a budget of a number of vertices that can
  be proclaimed outliers and removed from $T$ (Section~\ref{sec:l0reg}).
  We solve this problem by reducing it to the problem of computing minimum vertex
  covers on transitively closed directed acyclic graphs, a special case of minimum vertex cover that can be solved in polynomial time.
 \end{description}
After any regularization for inf-minimization, we suggest computing the lex-minimizer.
We find the result for $l_{0}$-regularization of vertex values to be particularly surprising,
  especially because we prove that the analogous problem for $2$-Laplacian minimization
  is NP-Hard (Section~\ref{sec:l2hardness}).

  All of our algorithms extend naturally to \textbf{directed graphs} (Section~\ref{sec:directed}).
  This is in contrast with the problem of minimizing $2$-Laplacians on
  directed graphs, which corresponds to computing electrical flows in
  networks of resistors and diodes, for which fast algorithms are not
  presently known.

  We present a few \textbf{experiments} on examples demonstrating that
  the lex-minimizer can overcome known deficiencies of the
  $2$-Laplacian minimizer (Section~\ref{sec:related},
  Figures~\ref{fig:gauss},\ref{fig:cube4}), as well as a demonstration of the
  performance of the directed analog of our algorithms on the WebSpam
  dataset of \cite{Castillo2006} (Section~\ref{sec:experiments}).  In the WebSpam problem we use the
  link structure of a collection of web sites to flag some sites as
  spam, given a small number of labeled sites known to be spam or
  normal.

\subsection{Relation to Prior Work}
\label{sec:related}

We first encountered the idea of 
  using the minimizer of the 2-Laplacian given by  \eqref{eq:pLap}
  for regression and classification on graphs in the work of \cite{Zhu03}
  and \cite{BelkinRegression} on semi-supervised learning.
These works transformed learning problems on sets of vectors into problems on graphs
  by identifying vectors with vertices and constructing graphs with edges between
  nearby vectors.
One shortcoming of this approach (see \cite{Nadler}, \cite{Ulrike_p_resist}, \cite{NBridle})
  is that if the number of vectors grows while the number of labeled vectors remains fixed,
  then almost all the values of the 2-Laplacian minimizer 
  converge to the mean of the labels on most natural examples.
For example, \cite{Nadler} consider sampling
  points from two Gaussian distributions centered at $0$ and $4$ on the real line.
They place edges between every pair of points $(x,y)$ with length $\exp (\abs{x-y}^{2}/2 \sigma^{2})$
  for $\sigma = 0.4$, and provide only the labels $v_{0} (0) = -1$ and $v_{0} (4) = 1$.
Figure~\ref{fig:gauss} shows the values of the $2$-Laplacian
  minimizer  in red, which are all approximately zero.
In contrast, the values of the lex-minimizer  in blue,  which are smoothly distributed between the labeled points,
  are shown.

The ``manifold hypothesis'' (see \cite{ChaSchZie06}, \cite{Ma:2011:MLT:2207974})
  holds that much natural data lies near a low-dimensional manifold and that
  natural functions we would like to learn on this data are smooth functions on the manifold.
Under this assumption, one should expect lex-minimizers to interpolate well.
In contrast, the $2$-Laplacian minimizers degrade (dotted lines) if the number of labeled
  points remains fixed while the total number of points grows.
In Figure~\ref{fig:cube4}, we demonstrate this by sampling many points uniformly from the
  unit cube in 4 dimensions, form their 8-nearest neighbor graph, 
  and consider the problem of regressing the first coordinate.
We performed 8 experiments, varying the number of labeled points in $\setof{50, 100, 500, 1000}$.
Each data point is the mean average $l_{1}$ error over 100 experiments.
The plots for root mean squared error are similar.
The standard deviation of the estimations of the mean are within one pixel, and so are not displayed.
The performance of the lex-minimizer (solid lines) does not degrade as the number of unlabeled points grows.

\begin{figure}
\centering
\begin{minipage}{.4\textwidth}
  \centering
  \includegraphics[width=\textwidth]{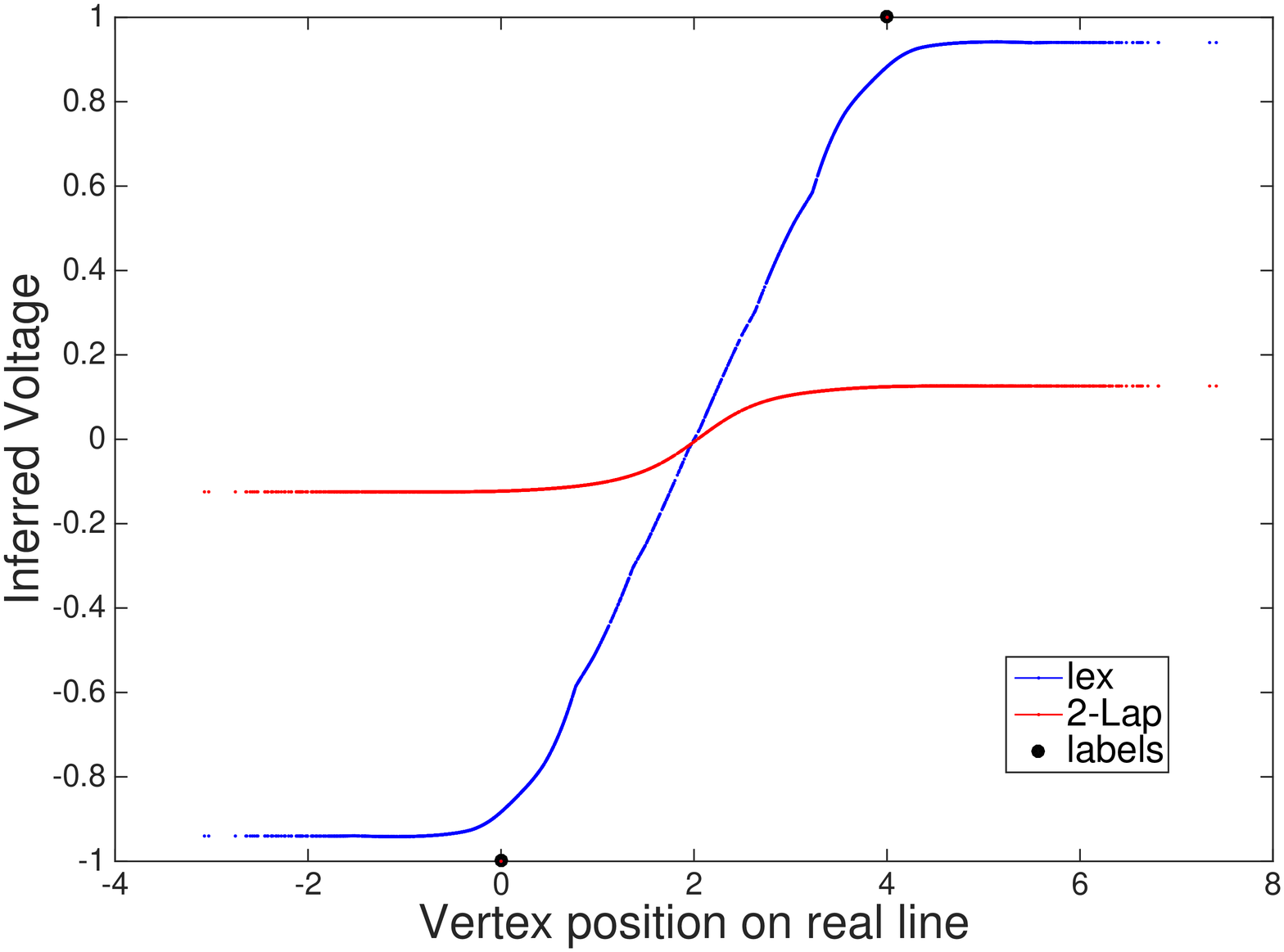}
  \captionof{figure}{Lex vs 2-Laplacian on 1D gaussian clusters.}
  \label{fig:gauss}
\end{minipage}%
\,\,\,
\begin{minipage}{.4\textwidth}
  \centering
  \includegraphics[width=\textwidth]{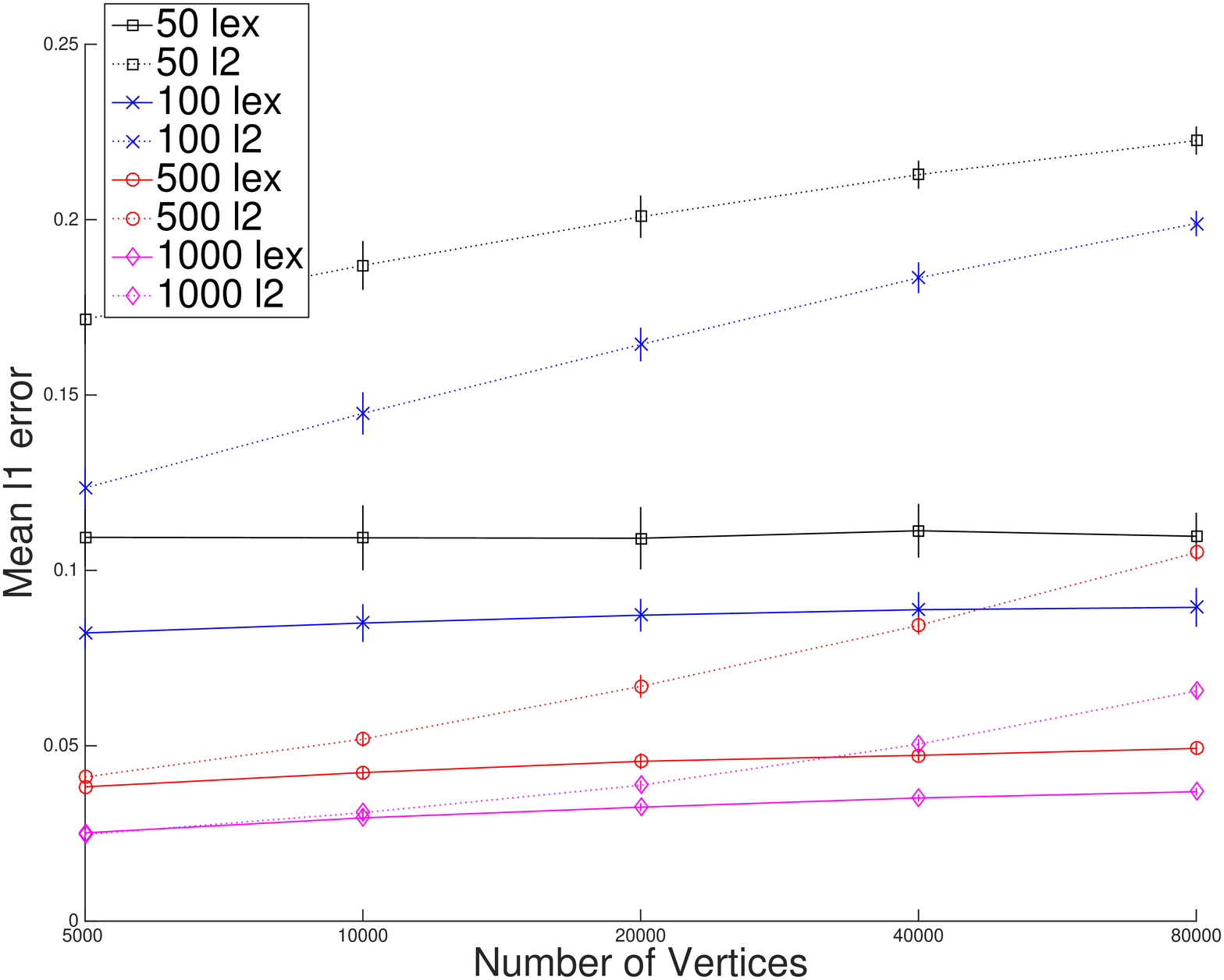}
  \captionof{figure}{{kNN graphs on samples from 4D cube.}
}
  \label{fig:cube4}
\end{minipage}

\end{figure}


Analogous to our inf-minimizers, minimal Lipschitz extensions of
functions in Euclidean space and over more general metric spaces have
been studied extensively in Mathematics (\cite{Kirszbraun},
\cite{McShane}, \cite{Whitney}). \cite{UlrikeLip} employ Lipschitz
extensions on metric spaces for classification and relate these to
Support Vector Machines.  Their work inspired improvements in
classification and regression in metric spaces with low doubling
dimension (\cite{KrauthgamerClassification},
\cite{KrauthgamerRegression}).  Theoretically fast, although not
actually practical, algorithms have been given for constructing
minimal Lipschitz extensions of functions on low-dimensional Euclidean
spaces (\cite{Fefferman}, \cite{Fefferman1}, \cite{Fefferman3}).
\cite{SinopGrady} suggest using inf-minimizers for binary
classification problems on graphs.  For this special case, where all
of the given values are either 0 or 1, they present an
$O(m + n \log n)$ time algorithm for computing an inf-minimizer.  The
case of general given values, which we solve in this paper, is much
more complicated.  To compensate for the non-uniqueness of
inf-minimizers, they suggest choosing the inf-minimizer that minimizes
\eqref{eq:pLap} with $p = 2$.  We believe that the lex-minimizer is a
more natural choice.

The analog of our lex-minimizer over continuous spaces is called the
absolutely minimal Lipschitz extension (AMLE).  Starting with the work
of \cite{Aronsson}, there have been several characterizations and
proofs of the existence and uniqueness of the AMLE (\cite{Jensen},
\cite{Crandall}, \cite{Barles01}, \cite{Aronsson_atour}).  Many of
these results were later extended to general metric spaces, including
graphs (\cite{Milman}, \cite{Peres}, \cite{Naor}, \cite{Smart}).
However, to the best of our knowledge, fast algorithms for computing
lex-minimizers on graphs were not known.  For the special case of
undirected, unweighted graphs, \cite{Lazarus} presented both a
polynomial-time algorithm and an iterative method. \cite{Oberman}
suggested computing the AMLE in Euclidean space by first discretizing
the problem and then solving the corresponding graph problem by an
iterative method.  However, no run-time guarantees were obtained for
either iterative method.


\section{Notation and Basic Definitions}
\label{sec:lex_basics}
\paragraph{Lexicographic Ordering.}
Given a vector $r \in \rea^m,$ let $\pi_r$ denote a permutation that
sorts $r$ in non-increasing order by absolute value, \emph{i.e.},
$\forall i \in [m-1], |r(\pi_r(i))| \ge |r(\pi_r(i+1))|.$
Given two vectors $r, s \in \rea^m,$ we write $r \lexless s$ to indicate
that $r$ is smaller than $s$ in the \emph{lexicographic ordering} on
sorted absolute values,  \emph{i.e.}
\begin{align*}
  & \exists j \in [m],  \abs{r(\pi_{r}(j))} < \abs{s(\pi_{s}(j))} \text{ and } \forall i\in[j-1], \abs{r(\pi_{r}(i))} = \abs{s(\pi_{s}(i))} \\
  \text{ or } 
  & \forall i \in [m], \abs{r(\pi_{r}(i))} = \abs{s(\pi_{s}(i))}.
\end{align*}
Note that it is possible that 
  $r \lexless s$ and $s \lexless r$ while $r \neq s$. 
It is a
total relation: for every $r$ and $s$ at least one of
$r \lexless s$ or $s \lexless r$ is true.

\paragraph{Graphs and Matrices.} 
We will work with weighted graphs. Unless explicitly stated, we will
assume that they are undirected. For a graph $G$, we let $V_{G}$ be
its set of vertices, $E_{G}$ be its set of edges, and
$\len_{G} : E_{G} \to \rea_+$ be the assignment of positive lengths to
the edges.  We let $|V_{G}| =n,$ and $|E_{G}| = m.$
We assume $\len_{G}$ is
  symmetric,
\emph{i.e.}, $\len_{G}(x,y) = \len_{G}(y,x).$ 
When $G$ is clear from the
  context, we drop the subscript.

A \textbf{path} $P$ in $G$ is an ordered sequence of (not necessarily
distinct) vertices $P = (x_{0},x_1,\ldots,x_k),$ such that
  $(x_{i-1},x_i) \in E$ for $i \in [k].$ 
The \textbf{endpoints} of $P$ are
denoted by $\partial_{0} P = x_{0}, \partial_1 P = x_k.$ 
The set of \textbf{interior} vertices of $P$ is defined to be
$\interior(P) \defeq \{x_i : 0 < i < k\}.$ 
For $0 \le i < j \le k,$ we
use the notation $P[x_i:x_j]$ to denote the subpath
$(x_i,\ldots,x_j).$ The length of $P$ is 
$\len(P) \defeq \sum_{i=1}^k \len(x_{i-1},x_i).$

A function $v_{0} : V \to \rea \cup \{*\}$ is called a 
  \textbf{voltage assignment}
 (to $G$). A vertex $x \in V$ is a \textbf{terminal} with
respect to $v_{0}$ iff $v_{0}(x) \neq *.$ 
The other vertices, for which $v_{0} (x) = *$, are \textbf{non-terminals}.  
We let $T(v_{0})$ denote the set of terminals with respect to $v_{0}.$ If
$T(v_{0}) = V,$ we call $v_{0}$ a \textbf{complete voltage assignment} (to $G$). We
say that an assignment $v : V \to \rea \cup \{*\}$ \textbf{extends} $v_{0}$ if
$v(x) = v_{0}(x)$ for all $x$ such that $v_{0}(x) \neq *$.

Given an assignment $v_{0} : V \to \rea \cup \{*\}$, and 
  two terminals $x,y \in T(v_{0})$ for which $(x,y) \in E$,
  we define the \textbf{gradient} on $(x,y)$ due to $v_{0}$ to be \vspace{-6pt}
\[ 
\vspace{-6pt}
\grad_{G}[v_0](x,y) = \left. \frac{v_0(x)-v_0(y)}{\len(x,y)} \right. .
\] 
It may be useful to view
   $\grad_{G}[v_0](x,y)$ as the \emph{current} in the
edge $(x,y)$ induced  by voltages $v_{0}$.
When $v_0$ is a complete voltage assignment, we interpret $\grad_{G}[v_0]$ as a vector in
$\rea^m,$ with one entry for each edge.
However,
for convenience, we define $\grad_{G}[v_{0}](x,y) = -\grad_{G}[v_{0}](y,x).$ 
When $G$ is clear from the context, we drop the subscript.

A graph $G$ along with a voltage assignment $v$ to $G$ is called a
\textbf{partially-labeled graph}, denoted $(G,v).$
We say that a partially-labeled graph $(G,v_{0})$ is a 
  \textbf{well-posed instance} if for every
  maximal connected component
 $H$ of $G,$ we have
  $T(v_{0}) \cap V_H \neq \emptyset.$

A path $P$ in a partially-labeled graph $(G,v_{0})$ is
called a \textbf{terminal path} if
 both endpoints are terminals.
We define $\nabla P(v_{0})$ to be its gradient:  \vspace{-6pt}
\[
\vspace{-6pt}
\nabla P(v_{0}) \defeq \left. \frac{v_{0}(\partial_{0} P) - v_{0}(\partial_1
  P)}{\len(P)}\right. .
\]
If $P$ contains no terminal-terminal edges (and hence, contains at least one non-terminal), 
  it is a \textbf{free terminal path}.



\paragraph{Lex-Minimization.}
An instance of the  {\sc Lex-Minimization} problem is described by a
partially-labeled graph $(G,v_{0}).$
The objective is to compute a complete voltage assignment
$v : V_{G} \to \rea$ extending $v_{0}$ that lex-minimizes $\grad[v].$
\begin{definition}[Lex-minimizer]
\label{def:lex-min}
Given a partially-labeled graph $(G,v_{0}),$
we define $\lex_G[v_{0}]$ to be a
complete voltage assignment to $V$ that
  extends $v_{0},$ and such that
  for every other complete assignment $v^\prime : V_{G} \to \rea$ that
  extends $v_{0},$ we have
  $\grad_{G}[\lex_G[v_{0}]] \lexless \grad_{G}[v^\prime].$ That is,
  $\lex_G[v_{0}]$ achieves a lexicographically-minimal gradient
  assignment to the edges.
\end{definition}
\noindent We call $\lex_{G}[v_{0}]$ the lex-minimizer for
$(G,v_{0})$. Note that if $T(v_{0}) = V_{G},$ then trivially,
$\lex_G[v_{0}] = v_{0}.$
\vspace{-10pt}

\section{Basic Properties of Lex-Minimizers}
\cite{Lazarus} established that lex-minimizers in unweighted and undirected 
  graphs exist, are unique, and may be computed by an elementary meta-algorithm.
We state and prove these facts for undirected weighted graphs, and
defer the discussion of the directed case to Section~\ref{sec:directed}.
We also state for directed and weighted graphs
characterizations of lex-minimizers that were established by
\cite{Peres}, \cite{Naor} and \cite{Smart} for unweighted graphs.
These results are essential for the analyses of our algorithms.  We
defer most proofs to Appendix \ref{app:lex_basics}.
\begin{definition}
  A \emph{steepest fixable path} in an instance $(G,v_{0})$ is a free
  terminal path $P$  that has the largest gradient $\nabla P(v_{0})$
  amongst such paths. 
\end{definition}
Observe that a steepest fixable path with $\nabla P(v_{0}) \neq 0$ must
  be a simple path.
\begin{definition}
  Given a steepest fixable path $P$ in an instance $(G,v_{0}),$
  we define $\fixpath_{G}[v_{0},P] : V_{G} \to \rea \cup \{*\}$ to be
  the voltage assignment defined as follows
\begin{align*}
\fixpath_{G}[v_{0},P](x) =
\begin{cases}
v_{0}(\partial_{0} P) - \nabla P(v_{0}) \cdot \len_{G}(P[\partial_{0} P : x]) &
x \in \interior(P) \setminus T(v_{0}), \\
v_{0}(x) & \text{otherwise.} 
\end{cases}
\end{align*}
\end{definition}
We say that the vertices $x \in \interior(P)$ are fixed by the
operation $\fixpath[v_0,P].$ If we define
$v_1 = \fixpath_{G}[v_{0},P],$ where $P = (x_{0},\ldots,x_r)$ is the
steepest fixable path in $(G,v_{0}),$ then it is easy to argue that for every $i \in [r],$ we
have $\grad[v_1](x_{i-1},x_{i}) = \nabla P$ (see
Lemma~\ref{lem:gradSteepestPath}). The meta-algorithm \meta,
spelled out as Algorithm~\ref{fig:meta-algorithm}, entails repeatedly fixing steepest
fixable paths.  While it is possible to have multiple steepest fixable
paths, the result of fixing all of them does not depend on the order
in which they are fixed.
%
%
\begin{theorem}
\label{thm:basics:meta}
Given a well-posed instance $(G,v_{0})$, the meta-algorithm \meta, which repeatedly fixes steepest
fixable paths, produces the unique lex-minimizer extending $v_{0}$.
\end{theorem}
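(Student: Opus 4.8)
The plan is to split the statement into two parts: \emph{uniqueness} of the lex-minimizer, which I would prove by a standalone convexity argument, and \emph{existence together with the correctness of} \meta, which I would prove by induction on the number of non-terminals; once uniqueness is in hand it suffices to show that every run of \meta\ halts and outputs \emph{some} lex-minimizer. That \meta\ halts is easy: on a well-posed instance with $k=\sizeof{V_G\setminus T(v_0)}\ge 1$ a steepest fixable path $P$ exists (well-posedness puts every non-terminal in the interior of a terminal path, obtained by walking to a terminal of its component and back, and the supremum of $\abs{\nabla P(v_0)}$ over free terminal paths is attained — by a shortest simple path between a fixed pair of terminals when positive, by any free terminal path when $0$), and fixing $P$ turns at least one non-terminal — an interior vertex of $P$, which exists since $P$ has no terminal--terminal edge — into a terminal while keeping the instance well-posed, so after at most $n$ rounds a complete assignment is reached.

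\textbf{Uniqueness.} Let $v_1,v_2$ both be lex-minimizers, so $\grad_G[v_1]$ and $\grad_G[v_2]$ share a sorted absolute-value vector $s$. If $\grad_G[v_1]=\grad_G[v_2]$ then $v_1-v_2$ is constant on each component and hence $0$ (each component has a terminal), so $v_1=v_2$. Otherwise let $w=\frac{1}{2}(v_1+v_2)$, again an extension of $v_0$. For every edge $e$, $\abs{\grad_G[w](e)}\le\frac{1}{2}(\abs{\grad_G[v_1](e)}+\abs{\grad_G[v_2](e)})$, and the right side is the midpoint of two permutations of $s$, hence majorized by $s$; so $\abs{\grad_G[w]}$ is (weakly) majorized by $s$, which forces $\grad_G[w]\lexless\grad_G[v_1]$, and strictly, since equality of the sorted vectors would require $\abs{\grad_G[v_1]}$ and $\abs{\grad_G[v_2]}$ equal pointwise and $\grad_G[v_1],\grad_G[v_2]$ to agree in sign on every edge — i.e.\ $\grad_G[v_1]=\grad_G[v_2]$. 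This contradicts lex-minimality of $v_1$, so the lex-minimizer, if it exists, is unique.

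\textbf{Existence and correctness by induction on $k$.} For $k=0$ the only extension of $v_0$ is $v_0$, which \meta\ returns, so the claim is trivial. For $k\ge 1$, a run of \meta\ first fixes a steepest fixable path $P$, producing $v_1=\fixpath_G[v_0,P]$, whose gradient equals $\nabla P(v_0)$ (up to orientation) on every edge of $P$ by Lemma~\ref{lem:gradSteepestPath}, and then runs \meta\ on $(G,v_1)$, which has fewer non-terminals; by the inductive hypothesis that second phase returns the unique lex-minimizer $\ell_1$ of $(G,v_1)$ regardless of its later choices. So it remains only to show that $\ell_1$ is a lex-minimizer of $(G,v_0)$; together with uniqueness this also shows the output does not depend on which $P$ was chosen. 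The main tool is the \emph{path inequality}: for a complete $v'$ extending $v_0$ and a terminal path $Q=(y_0,\dots,y_s)$, telescoping gives $\sum_i\grad_G[v'](y_{i-1},y_i)\,\len(y_{i-1},y_i)=v_0(y_0)-v_0(y_s)$ and $\sum_i\len(y_{i-1},y_i)=\len(Q)$, so $\max_i\abs{\grad_G[v'](y_{i-1},y_i)}\ge\abs{\nabla Q(v_0)}$, with equality only if $\grad_G[v'](y_{i-1},y_i)=\nabla Q(v_0)$ for every $i$. Given an arbitrary extension $v'$ of $v_0$, let $\hat v$ agree with $v_1$ on $\interior(P)$ and with $v'$ elsewhere; then $\hat v$ extends $v_1$, so $\grad_G[\ell_1]\lexless\grad_G[\hat v]$ by the inductive optimality of $\ell_1$, and it is enough to prove $\grad_G[\hat v]\lexless\grad_G[v']$. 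These two gradient vectors agree off the set $F$ of edges meeting $\interior(P)$. On an edge of $P$, $\grad_G[\hat v]$ has absolute value exactly $g:=\abs{\nabla P(v_0)}$, and the path inequality forces some $P$-edge of $\grad_G[v']$ to have absolute value $\ge g$. On an edge of $F$ joining $\interior(P)$ to a terminal, running the path inequality and the maximality of $g$ along the two terminal paths that go from an endpoint of $P$ along $P$ to that vertex and then across the edge pins $\abs{\grad_G[\hat v]}$ on it to $\le g$; and a no-shortcut observation (a chord of $P$ shorter than the arc it spans would yield a free terminal path steeper than $P$) does the same for edges joining two vertices of $\interior(P)$.

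\textbf{Main obstacle.} The hard case is the remaining edges of $F$: those from a fixed interior vertex of $P$ to a non-terminal lying \emph{off} $P$, where $\hat v$ and $v'$ genuinely differ and neither gradient is controlled by $g$. There the comparison cannot be won edge by edge; one must charge the (possibly large) value of such an edge in $\grad_G[\hat v]$ against the $\ge g$ value the path inequality forces on some $P$-edge of $\grad_G[v']$ — and, at deeper levels, against the values $v'$ is forced to carry on the longer terminal paths through $P$ — arranging the trade so that $\grad_G[\hat v]\lexless\grad_G[v']$ survives at every position of the sorted vector rather than only at the top. Making this exchange precise, together with the accompanying ``lifting'' of free terminal paths from $(G,v_1)$ back to $(G,v_0)$ that shows the sequence of gradients fixed by \meta\ is non-increasing, is the technical heart; the rest is bookkeeping.
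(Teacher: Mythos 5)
Your uniqueness argument via the midpoint $w=\tfrac12(v_1+v_2)$ is a legitimate self-contained route (the paper instead obtains uniqueness as a byproduct of its main comparison lemma), but the existence-and-correctness half has a genuine gap, and it sits exactly where you flag it. Worse, the intermediate claim you reduce everything to --- $\grad_{G}[\hat v]\lexless\grad_{G}[v']$, where $\hat v$ splices $v_1$ onto $\interior(P)$ and keeps $v'$ elsewhere --- is false in general. Take terminals $t_1,t_2$, non-terminals $a,z$, unit-length edges $(t_1,a),(a,t_2),(a,z),(t_1,z)$, and $v_0(t_1)=1$, $v_0(t_2)=0$. The steepest fixable path is $P=(t_1,a,t_2)$ with $\nabla P=\tfrac12$, so $\hat v(a)=\tfrac12$. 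Choose $v'(a)=0.6$ and $v'(z)=100$. Then $\abs{\grad[\hat v](a,z)}=99.5$, while the largest entry of $\abs{\grad[v']}$ is $99.4$; the sorted vectors are $(99.5,99,0.5,0.5)$ versus $(99.4,99,0.6,0.4)$, so $\grad[\hat v]\nlexless\grad[v']$. The failure is intrinsic to the splicing: on an edge from $\interior(P)$ to an off-path non-terminal, $\hat v$ mixes a value of $v_1$ with a value of $v'$ and can create a gradient larger than anything in $\grad[v']$, and no charge against the $\ge g$ edge that the path inequality forces on $v'$ can repair a deficit at the very top of the sorted vector. So your ``main obstacle'' paragraph is not deferred bookkeeping; it is the theorem, and the route through $\hat v$ cannot deliver it.

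The paper avoids this entirely by never comparing spliced assignments edge by edge. It fixes an arbitrary run of \meta\ with iterates $v_i$ and gradients $\alpha^{\star}_i$, shows (Lemma~\ref{lem:basics:no-steeper-path}) that $\alpha^{\star}_i$ is non-increasing and that every edge first fixed at iteration $i$ or later ends with $\abs{\grad[v_{\mathsf{out}}]}\le\alpha^{\star}_i$, and then (Lemma~\ref{lem:basics:lex-minimal}) takes any competitor $v\neq v_{\mathsf{out}}$, locates the \emph{first} iteration $i$ at which $v$ stops extending $v_i$, and telescopes along the prefix of $P^{\star}_i$ up to the first disagreeing vertex to exhibit an edge of $E_{i-1}\setminus E_i$ on which $\abs{\grad[v]}>\alpha^{\star}_i$. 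Since $v$ and $v_{\mathsf{out}}$ agree on all earlier-fixed edges, this gives $\grad[v]\nlexless\grad[v_{\mathsf{out}}]$ outright, from which lex-minimality, uniqueness, and order-independence all follow at once. If you want to keep your induction, replace the comparison with $\hat v$ by this first-deviation comparison of $v'$ against the final output.
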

\begin{corollary}
  \label{cor:fix-path}
  Given a well-posed instance $(G,v_{0})$ such that
  $T(v_{0}) \neq V_{G},$ let $P$ be a steepest fixable path in
  $(G,v_{0}).$ Then, $(G,\fixpath[v_{0},P])$ is also a well-posed
  instance, and $\lex_{G}[\fixpath[v_{0},P]] = \lex_G[v_{0}].$
\end{corollary}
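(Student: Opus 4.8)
The plan is to induct on the number $k=\abs{V_G\setminus T(v_0)}$ of non-terminals, proving simultaneously that a lex-minimizer exists, that it is unique, and that every run of \meta\ produces it. As a normalization I will first observe that a terminal--terminal edge of $(G,v_0)$ contributes the same frozen entry $\len(x,y)^{-1}\abs{v_0(x)-v_0(y)}$ to $\grad[v']$ for \emph{every} complete extension $v'$, and is never used by \meta; since inserting a common value into two sorted sequences preserves their lexicographic order, deleting all terminal--terminal edges changes neither the output of \meta\ nor which extensions are lex-minimal (each resulting component still contains a terminal), so I may assume $(G,v_0)$ has none. The base case $k=0$ is immediate. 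For $k\ge 1$, a well-posed instance with a non-terminal always contains a free terminal path (walk from a non-terminal to a terminal inside its component, reflecting the walk if necessary), and among free terminal paths $\abs{\nabla P(v_0)}$ attains its supremum (for each ordered terminal pair it suffices to consider a shortest connecting free terminal path, and there are finitely many pairs), so a steepest fixable path $P$ exists; take it simple when $\nabla P(v_0)\ne 0$. Put $g=\abs{\nabla P(v_0)}$ and $v_1=\fixpath_G[v_0,P]$; then $(G,v_1)$ is well-posed with at most $k-1$ non-terminals, and by Lemma~\ref{lem:gradSteepestPath} every edge of $P$ has $\abs{\grad[v_1]}=g$. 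Fix any run of \meta\ beginning by fixing $P$; its remaining steps form a run of \meta\ on $(G,v_1)$, so by the induction hypothesis the run outputs $\ell:=\lex_G[v_1]$, and it remains to prove $\ell=\lex_G[v_0]$ (this also yields order-independence and Corollary~\ref{cor:fix-path}).

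For that I need two ingredients. The first is a rigidity form of the triangle inequality: for a terminal path $Q=(y_0,\dots,y_t)$ and a complete extension $v'$, expanding $v'(y_0)-v'(y_t)=\sum_i\len(y_{i-1},y_i)\grad[v'](y_{i-1},y_i)$ gives $\max_i\abs{\grad[v'](y_{i-1},y_i)}\ge\abs{\nabla Q(v_0)}$, with equality only when $\grad[v']$ is constant equal to $\nabla Q(v_0)$ along $Q$, i.e.\ only when $v'$ restricted to the vertices of $Q$ is the linear interpolation between the endpoint values. The second is that $\ell$ wastes no gradient: $\max_e\abs{\grad[\ell](e)}=g$. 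For ``$\ge$'': no extension of $v_0$ has maximum edge-gradient below $g$ (rigidity applied to $P$), and every edge of $P$ has gradient $g$ in $\ell$. For ``$\le$'': one shows that fixing a steepest path does not raise the steepest-fixable gradient of the residual instance (a monotonicity lemma), whence every edge \meta\ touches on the way to $\ell$ gets gradient at most $g$, and a short pinching estimate using the two free terminal paths through a newly fixed interior vertex and one of its already-fixed neighbors forces the remaining edges to gradient at most $g$ as well (equivalently, one may exhibit a $g$-Lipschitz extension of $v_1$ by a McShane-type formula). Granting these, $\ell=\lex_G[v_0]$ follows: given any complete extension $v'$ of $v_0$, if $v'$ agrees with $v_1$ on $\interior(P)$ then $v'$ extends $v_1$ and $\grad[\ell]\lexless\grad[v']$ because $\ell$ is lex-minimal among extensions of $v_1$; otherwise $v'$ differs from the interpolation at some non-terminal of $P$, so by rigidity $\max_{e\in P}\abs{\grad[v'](e)}>g=\max_e\abs{\grad[\ell](e)}$ and hence $\grad[\ell]\lexstless\grad[v']$. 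Thus $\ell$ is a lex-minimizer; and any lex-minimizer $\ell'$ has the same sorted gradient vector as $\ell$, so $\max_e\abs{\grad[\ell'](e)}=g$, so by rigidity $\ell'$ cannot differ from the interpolation on $\interior(P)$, so $\ell'$ extends $v_1$ and is lex-minimal there, so $\ell'=\lex_G[v_1]=\ell$ by the inductive uniqueness for $(G,v_1)$.

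The main obstacle is the second ingredient — proving that the lex-minimizer attains the smallest possible maximum edge-gradient, namely the steepest-fixable value $g$. This packages together the monotonicity fact that fixing a steepest fixable path cannot increase the steepest-fixable gradient of the remaining instance, and the inf-minimization statement that a $g$-Lipschitz extension of $v_0$ exists at all (a discrete McShane/Whitney-type extension); together with the bookkeeping that separates out the frozen terminal--terminal gradients, this is where the real work lies. Everything else — existence of a steepest fixable path, termination of \meta\ via the strictly decreasing non-terminal count, and the rigidity inequality with its equality case — is routine and can be dispatched quickly.
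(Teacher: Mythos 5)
Your proposal is correct and takes essentially the same route as the paper: your ``rigidity'' inequality is the telescoping argument at the heart of Lemma~\ref{lem:basics:lex-minimal}, and your ``second ingredient'' --- that fixing a steepest path cannot raise the residual steepest gradient and that every newly frozen edge acquires gradient at most $g$, via the path-concatenation (``pinching'') argument --- is exactly Lemma~\ref{lem:basics:no-steeper-path}. Recasting the paper's iteration-by-iteration analysis of \meta\ as an induction on the number of non-terminals is only a reorganization; the corollary then follows, as in the paper, from uniqueness (Theorem~\ref{thm:basics:meta}) plus the observation that a run of \meta\ beginning with $P$ continues as a run on $(G,\fixpath[v_{0},P])$.
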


%




Since a lex-minimal element must be an inf-minimizer, we
also obtain the following corollary, that can also be proved using LP duality.
\begin{lemma}
\label{lem:basics:inf-duality}
  Suppose we have a well-posed instance $(G,v_0).$ Then, there exists
  a complete voltage assignment $v$ extending $v_0$ such that
  $\norm{\grad[v]}_{\infty} \le \alpha$, iff every terminal path $P$ in
  $(G,v_{0})$ satisfies $\nabla P(v_0) \le \alpha.$
\end{lemma}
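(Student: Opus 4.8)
The plan is to handle the two implications separately. The forward direction is an immediate telescoping estimate; the backward direction is the interesting one, and I would prove it by an explicit McShane--Whitney-type construction from shortest-path distances.

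\textbf{Forward direction.} Suppose $v$ is a complete voltage assignment extending $v_0$ with $\norm{\grad[v]}_\infty \le \alpha$, and let $P = (x_0,\ldots,x_k)$ be a terminal path. Since $v(x_{i-1}) - v(x_i) = \grad[v](x_{i-1},x_i)\,\len(x_{i-1},x_i)$, summing over $i \in [k]$ telescopes to $v_0(\partial_0 P) - v_0(\partial_1 P)$, and each summand is at most $\alpha\,\len(x_{i-1},x_i)$ because $\grad[v](x_{i-1},x_i) \le \alpha$ and $\len(x_{i-1},x_i) > 0$. Hence $v_0(\partial_0 P) - v_0(\partial_1 P) \le \alpha\,\len(P)$, i.e.\ $\nabla P(v_0) \le \alpha$.

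\textbf{Backward direction.} Assume every terminal path $P$ in $(G,v_0)$ satisfies $\nabla P(v_0) \le \alpha$. Let $\dist{x}{y}$ denote the shortest-path distance in $G$ under the edge lengths $\len$ (finite between vertices in a common connected component, and realized by a simple — hence terminal, when both endpoints are terminals — path, since all lengths are positive). For each vertex $x$ let $T_x$ be the set of terminals in the connected component of $x$; this set is nonempty because $(G,v_0)$ is well-posed. Define
\[ v(x) \defeq \max_{t \in T_x}\bigl(v_0(t) - \alpha\,\dist{t}{x}\bigr), \]
a maximum over a finite nonempty set, so $v$ is well-defined and finite. I would then verify two things. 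First, $v$ extends $v_0$: for a terminal $s$, taking $t = s$ in the maximum gives $v(s) \ge v_0(s)$, while for any $t \in T_s$ a shortest $t$--$s$ path is a terminal path, so the hypothesis yields $v_0(t) - v_0(s) \le \alpha\,\dist{t}{s}$, hence $v(s) \le v_0(s)$; thus $v(s) = v_0(s)$. Second, $\norm{\grad[v]}_\infty \le \alpha$: for an edge $(x,y) \in E$ (so $T_x = T_y$), the triangle inequality $\dist{t}{y} \le \dist{t}{x} + \len(x,y)$ applied inside the maximum gives $v(y) \ge v(x) - \alpha\,\len(x,y)$, and symmetrically $v(x) \ge v(y) - \alpha\,\len(x,y)$, so $\abs{\grad[v](x,y)} = \len(x,y)^{-1}\abs{v(x) - v(y)} \le \alpha$.

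\textbf{Main obstacle.} The only delicate point is the step showing the constructed $v$ genuinely agrees with $v_0$ on terminals rather than overshooting, and this is exactly where both hypotheses are used: well-posedness ensures each $T_x$ is nonempty so that $v$ is defined, and the terminal-path bound, applied to shortest paths between pairs of terminals, forces $v(s) \le v_0(s)$. Everything else reduces to the triangle inequality for $\dist{\cdot}{\cdot}$. (An alternative route phrases the existence of such a $v$ as feasibility of a linear program and dualizes, with the path condition emerging as the infeasibility certificate; the explicit construction above is more self-contained and additionally exhibits a concrete inf-minimizer when $\alpha$ is taken minimal.)
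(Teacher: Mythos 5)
Your proof is correct, and the forward direction coincides with the paper's (the same telescoping of $v(x_{i-1})-v(x_i)$ along the path). The backward direction, however, takes a genuinely different route. The paper proves it by taking $v \defeq \lex_G[v_0]$ and invoking Lemma~\ref{lem:basics:no-steeper-path}, so its argument is parasitic on the correctness of the meta-algorithm \meta{} and the existence of the lex-minimizer: terminal-terminal edges are handled as trivial terminal paths, and all other edges are bounded by $\alpha_1^\star = \nabla P_1^\star \le \alpha$. You instead exhibit an explicit witness, the McShane--Whitney-type extension $v(x) = \max_{t \in T_x}\bigl(v_0(t) - \alpha\,\dist{t}{x}\bigr)$, which is precisely the function $\vHigh[\alpha]$ the paper itself introduces later (Section~\ref{sec:infAlg}, with a citation to McShane and Whitney) for the linear-time inf-minimization algorithm. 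Your verification is complete and correct: well-posedness makes each $T_x$ nonempty, the terminal-path hypothesis applied to shortest terminal--terminal paths (which are terminal paths by definition, regardless of interior vertices) pins $v$ down to $v_0$ on terminals, and the triangle inequality for $\dist{\cdot}{\cdot}$ gives the gradient bound. What each approach buys: the paper's derivation is shorter given the machinery already built, while yours is self-contained and more elementary --- it needs nothing beyond shortest-path distances, avoids any appearance of circularity with the lex-minimizer results, and, as you note, directly produces a concrete inf-minimizer when $\alpha$ is chosen minimal, which is exactly how \compinfmin{} works.
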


\subsection{Stability}\label{ssec:stability}



The following theorem states that $\lex_G[v_{0}]$ is monotonic with respect to $v_{0}$ and it respects scaling and translation of $v_{0}$. 
\begin{theorem}
\label{thm:monotonicity}
Let $(G,v_{0})$ be a well-posed instance with $T := T(v_{0})$ as the set of terminals.  Then the following statements hold.
\begin{enumerate}[itemsep=1pt]
\item For any $c,d \in \rea$, $v_{1}$ a partial assignment with terminals $T(v_{1})=T$ and $v_{1}(t) = cv_{0}(t) + d$ for all $t \in T$. Then,    
$\lex_G[v_{1}](i)  = c\cdot \lex_G[v_{0}](i) + d$ for all $i \in V_G$.
\item  $v_{1}$ a partial assignment with terminals $T(v_{1}) = T.$  Suppose further that $v_{1} (t) \ge v_{0} (t)$ for all $t \in T.$ Then, 
$\lex_G[v_{1}](i)  \ge \lex_G[v_{0}](i)$  for all $i \in V_G$.
\end{enumerate}
\end{theorem}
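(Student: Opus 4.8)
The first claim (affine invariance) is quick: the plan is to check that $w := c\cdot\lex_G[v_0] + d$ is \emph{a} lex-minimizer for $(G,v_1)$ and invoke uniqueness (Theorem~\ref{thm:basics:meta}). If $c \neq 0$, then $w$ extends $v_1$, and for any complete $v'$ extending $v_1$ the assignment $v'' := (v'-d)/c$ extends $v_0$ and satisfies $\grad[v''] = c^{-1}\grad[v']$ edge-by-edge, since the additive constant cancels in every difference. Multiplying a vector by a nonzero scalar rescales all of its sorted absolute values by the common positive factor $|c|$ and so preserves $\lexless$; hence $\grad[\lex_G[v_0]] \lexless \grad[v'']$ yields $\grad[w] = c\,\grad[\lex_G[v_0]] \lexless c\,\grad[v''] = \grad[v']$, so $w = \lex_G[v_1]$. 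If $c = 0$ then $v_1 \equiv d$ on $T$, the constant assignment $d$ has identically zero gradient (which is $\lexless$ everything), so it is the lex-minimizer, and it equals $c\cdot\lex_G[v_0]+d$.

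For the monotonicity claim I would run a $\max/\min$ exchange argument. Set $a := \lex_G[v_0]$, $b := \lex_G[v_1]$, and define $a^* := \max(a,b)$ and $b^* := \min(a,b)$ pointwise. Since $v_1 \ge v_0$ on $T$, on $T$ we have $a^* = v_1$ and $b^* = v_0$, so $a^*$ extends $v_1$ and $b^*$ extends $v_0$; by lex-optimality of $b$ and of $a$ respectively, $\grad[b] \lexless \grad[a^*]$ and $\grad[a] \lexless \grad[b^*]$. The engine is an edgewise exchange inequality: for every edge $e = (x,y)$, the sorted pair $\big(|\grad[a^*](e)|,|\grad[b^*](e)|\big)$ is $\lexless$ the sorted pair $\big(|\grad[a](e)|,|\grad[b](e)|\big)$, with equality unless $a-b$ strictly changes sign across $e$. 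This is a short case analysis on the signs of $a(x)-b(x)$ and $a(y)-b(y)$: if they are not strictly opposite the two pairs coincide; if they are, both new gradients lie strictly between $\grad[a](e)$ and $\grad[b](e)$, so their absolute values are strictly below $\max(|\grad[a](e)|,|\grad[b](e)|)$.

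Next I would lift this to the full multiset of $2m$ absolute edge-gradients. Writing $\mathcal{M}(f,g)$ for the sorted-descending merge of the absolute edge-gradients of $f$ and of $g$, the edgewise inequality gives $\mathcal{M}(a^*,b^*) \lexless \mathcal{M}(a,b)$ via a short induction that repeatedly peels off the globally largest remaining value (each edge contributes no more copies of that value to the $(a^*,b^*)$ side than to the $(a,b)$ side). The order $\lexless$ on sorted sequences is also monotone and cancellative under merging against a fixed sequence — both follow from comparing, near the largest threshold at which two sequences differ, the number of entries exceeding it. Combining: $\grad[a] \lexless \grad[b^*]$ gives $\mathcal{M}(a,b) \lexless \mathcal{M}(b^*,b)$, so $\mathcal{M}(a^*,b^*) \lexless \mathcal{M}(b^*,b)$, and cancelling the common $\grad[b^*]$ leaves $\grad[a^*] \lexless \grad[b]$. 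With $\grad[b] \lexless \grad[a^*]$ this forces $\grad[a^*]$ and $\grad[b]$ to have identical sorted absolute values, so $a^*$ is also a lex-minimizer for $(G,v_1)$; uniqueness gives $a^* = b$, i.e.\ $\max(a,b) = b$, which is exactly $\lex_G[v_0] \le \lex_G[v_1]$.

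I expect the lifting step to be the main obstacle. The tempting shortcut — that the edgewise inequality implies that for every threshold at most as many $(a^*,b^*)$-gradients exceed it as $(a,b)$-gradients — is false (e.g.\ $(3,3) \lexless (4,0)$ as sorted pairs, yet $(3,3)$ has more entries above $2$), so one must argue at the top of the lexicographic order, which is what the peeling induction together with the monotonicity and cancellation properties of the merge accomplish; routing the final conclusion through uniqueness of the lex-minimizer is cleaner than a direct comparison of $a$ and $b$.
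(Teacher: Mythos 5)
Your proof is correct, and it takes a genuinely different route from the paper's. The paper proves monotonicity through the \meta{} machinery: it notes the claim is immediate when $\lex_G[v_0]$ and $\lex_G[v_1]$ admit a common decomposition into steepest fixable paths, and otherwise interpolates $v_t = v_0 + t(v_1-v_0)$ and asserts (explicitly without proof) that $[0,1]$ splits into finitely many subintervals on each of which a common path decomposition exists, then chains the easy case across the subdivision points. Your argument instead is a lattice exchange: compare $a=\lex_G[v_0]$ and $b=\lex_G[v_1]$ directly via $\max(a,b)$ and $\min(a,b)$, which extend $v_1$ and $v_0$ respectively, and show the exchange can only improve the merged gradient profile. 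The two optimality inequalities then squeeze $\grad[\max(a,b)]$ and $\grad[b]$ to have identical sorted absolute values, and uniqueness (Theorem~\ref{thm:basics:meta}) finishes. Your route buys a self-contained, fully rigorous proof whose only overhead is the combinatorics of $\lexless$ under merging --- and you correctly flag and avoid the false ``count entries above every threshold'' shortcut; the right characterization (counts agree above some threshold $\tau$ and drop strictly at $\tau$) does give monotonicity and cancellation of the merge against a fixed sequence, and taking the maximum such $\tau$ over edges yields the lifted inequality. The edgewise case analysis is also sound: when $a-b$ strictly changes sign across an edge both exchanged gradients lie strictly between $\grad[a](e)$ and $\grad[b](e)$, hence strictly below the larger absolute value. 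The paper's route, by contrast, reuses structure already built for the algorithms but leaves its key interval-subdivision lemma unproved; your exchange argument is the more complete proof as written, and it is the style of comparison-principle argument that transfers to other AMLE settings. Part 1 matches the paper's (omitted) argument in spirit: affine reparametrization of candidates plus uniqueness.
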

As a corollary, the above theorem gives a nice stability property that lex-minimal elements satisfy.
\begin{corollary}
\label{cor:stability}
Given well-posed instances $(G,v_{0})$, $(G,v_{1})$ such that $T := T(v_{0}) = T(v_{1})$, let 
$\epsilon := \max_{t \in T} |v_{0}(t) - v_{1}(t)|$. 
Then 
$ |\lex_G[v_{0}](i) - \lex_G[v_{1}](i)| \leq \epsilon \text{ for all } i \in V_G.$
\end{corollary}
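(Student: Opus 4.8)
The plan is to deduce the corollary directly from Theorem \ref{thm:monotonicity} by a standard sandwiching argument. Set $\epsilon := \max_{t \in T} |v_0(t) - v_1(t)|$, so that for every terminal $t \in T$ we have $v_0(t) - \epsilon \le v_1(t) \le v_0(t) + \epsilon$. First I would introduce the two shifted partial assignments $v_0^{-}$ and $v_0^{+}$ on the same terminal set $T$, defined by $v_0^{-}(t) = v_0(t) - \epsilon$ and $v_0^{+}(t) = v_0(t) + \epsilon$ for all $t \in T$. Since $(G,v_0)$ is well-posed and these assignments have the same terminal set, $(G,v_0^{-})$ and $(G,v_0^{+})$ are also well-posed.

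Next I would apply part (1) of Theorem \ref{thm:monotonicity} with $c = 1$ and $d = \pm\epsilon$: this gives $\lex_G[v_0^{+}](i) = \lex_G[v_0](i) + \epsilon$ and $\lex_G[v_0^{-}](i) = \lex_G[v_0](i) - \epsilon$ for all $i \in V_G$. Then, because $v_0^{-}(t) \le v_1(t) \le v_0^{+}(t)$ for every $t \in T$, part (2) of Theorem \ref{thm:monotonicity} (monotonicity) applied twice yields
\[
\lex_G[v_0^{-}](i) \le \lex_G[v_1](i) \le \lex_G[v_0^{+}](i) \quad \text{for all } i \in V_G.
\]
Combining the two displays gives $\lex_G[v_0](i) - \epsilon \le \lex_G[v_1](i) \le \lex_G[v_0](i) + \epsilon$, i.e. $|\lex_G[v_0](i) - \lex_G[v_1](i)| \le \epsilon$ for all $i \in V_G$, which is exactly the claim.

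There is essentially no hard step here; the corollary is a formal consequence of monotonicity and shift-equivariance of the lex-minimizer, and all the real content lives in Theorem \ref{thm:monotonicity}. The only minor point to be careful about is that all four partial assignments $v_0, v_1, v_0^{-}, v_0^{+}$ share the terminal set $T$, so that the hypotheses of Theorem \ref{thm:monotonicity} apply verbatim; this is immediate from the construction. One could alternatively prove the corollary symmetrically (bounding $\lex_G[v_0]$ in terms of $\lex_G[v_1]$ by the same argument with the roles of $v_0$ and $v_1$ swapped), but the sandwiching argument above already gives the two-sided bound in one shot.
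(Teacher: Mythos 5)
Your proposal is correct and matches the paper's own argument: the paper likewise sandwiches $v_1$ between $v_0-\epsilon$ and $v_0+\epsilon$ and then applies the translation and monotonicity parts of Theorem~\ref{thm:monotonicity}. No issues.
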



\subsection{Alternate Characterizations}\label{ssec:characterizations}
There are at least two other seemingly disparate definitions that are
equivalent to lex-minimal voltages.

\paragraph{$l_p$-norm Minimizers.}
As mentioned in the introduction, for a well-posed instance
$(G,v_{0})$ the lex-minimizer is also the limit of $l_{p}$
minimizers. This follows from existing results about the limit of
$l_{p}$-minimizers~(\cite{Egger1990}) in affine spaces, since
$\{\grad[v]\ |\ v \text{ is complete}, v \text{ extends } v_{0}\}$
forms an affine subspace of $\rea^m.$ Thus, we have the following theorem:
\begin{theorem}[Limit of $l_{p}$-minimizers, follows from~\cite{Egger1990}]
  For any $p \in (1,\infty),$ given a well-posed instance
$(G,v_{0})$ define $v_{p}$ to be the unique complete
  voltage assignment extending $v_{0}$ and minimizing
  $\norm{\grad[v]}_{p},$ \emph{i.e.}
  \[v_p = \argmin_{\substack{v \text{ is complete} \\ v \text{ extends
      } v_{0}}} \norm{\grad[v]}_p.\]
Then, $\lim_{p \to \infty} v_{p} = \lex_G[v_{0}].$
\end{theorem}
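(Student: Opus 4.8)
The plan is to reduce the statement to the cited result of \cite{Egger1990} about convergence of $l_p$-norm minimizers over affine subspaces, plus the characterization of $\lex_G[v_0]$ established earlier. The first observation is that the map $v \mapsto \grad[v]$ is linear in $v$, so the set $\mathcal{A} \defeq \{\grad[v] : v \text{ complete}, v \text{ extends } v_0\}$ is an affine subspace of $\rea^m$: it is the image under a linear map of the affine set of complete assignments extending $v_0$. Since $(G,v_0)$ is well-posed, this set is nonempty, hence $\mathcal{A}$ is a nonempty affine subspace, and $\mathcal{A}$ contains a unique point of minimal $l_p$ norm for each $p \in (1,\infty)$ by strict convexity of $\|\cdot\|_p$; call it $g_p$, and note $g_p = \grad[v_p]$. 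Egger--Milman-type results then give that $\lim_{p\to\infty} g_p$ exists and equals the lexicographically minimal point of the closure of $\mathcal{A}$ (equivalently, the unique point of $\mathcal{A}$ whose sorted absolute-value vector is lexicographically smallest). I would also want to note that the lex-minimal point of $\mathcal{A}$ is \emph{unique as an element of $\rea^m$}, even though (as remarked after the definition of $\lexless$) the relation $\lexless$ is only a total preorder; this uniqueness is exactly the content that Egger's theorem provides for the limit.

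Next I would connect this limiting gradient vector back to a voltage assignment. By Theorem~\ref{thm:basics:meta}, $\lex_G[v_0]$ exists and is the unique complete assignment extending $v_0$ whose gradient is $\lexless$-minimal; so $\grad[\lex_G[v_0]]$ is a lex-minimal element of $\mathcal{A}$, hence (by the uniqueness just noted) equals $\lim_{p\to\infty} g_p = \lim_{p\to\infty}\grad[v_p]$. It remains to lift convergence of gradients to convergence of the voltage vectors themselves: one needs that $\grad[v_p] \to \grad[\lex_G[v_0]]$ implies $v_p \to \lex_G[v_0]$ pointwise on $V_G$. This is where well-posedness is used again: in each connected component there is a terminal $t$ with $v_p(t) = v_0(t) = \lex_G[v_0](t)$ fixed, and for any other vertex $x$ in that component, pick a path $Q$ from $t$ to $x$; then $v_p(x) - v_p(t) = \sum_{\text{edges }(a,b)\in Q} \len(a,b)\,\grad[v_p](a,b)$, which converges to the corresponding sum for $\lex_G[v_0]$. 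So $v_p(x) \to \lex_G[v_0](x)$ for every $x$, which is the claim.

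The main obstacle is making sure the invocation of \cite{Egger1990} is clean. Egger's theorem is usually stated for the problem $\min\{\|x\|_p : x \in b + W\}$ with $W$ a linear subspace, and asserts convergence of the minimizers as $p \to \infty$ to a distinguished "lex-optimal" element. I would need to (i) confirm the hypotheses match (finite dimension, $p$ ranging in $(1,\infty)$, nonempty affine set), and (ii) reconcile Egger's notion of the limiting point with our $\lexless$-minimal element — i.e., check that "the minimizer in the limit has non-increasing-rearrangement that is lexicographically least" is literally the definition of $\lexless$-minimality we use. Once that identification is in hand, everything else is the routine affine-subspace and path-telescoping bookkeeping sketched above, and no part of it should require more than elementary arguments. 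I would present the affine-subspace observation and the path-telescoping lifting in full, and cite Egger for the convergence-of-minimizers core.
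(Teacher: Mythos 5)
Your proposal is correct and follows essentially the same route as the paper, which proves this theorem only by the one-line remark that $\{\grad[v] : v \text{ complete},\ v \text{ extends } v_0\}$ is an affine subspace of $\rea^m$ and then cites Egger's result on limits of $l_p$-minimizers over affine sets. Your additional steps --- identifying the limiting gradient with $\grad[\lex_G[v_0]]$ via the uniqueness in Theorem~\ref{thm:basics:meta}, and lifting convergence of gradients to pointwise convergence of voltages by telescoping along paths from terminals (using well-posedness) --- are exactly the bookkeeping the paper leaves implicit, and they are sound.
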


\paragraph{Max-Min Gradient Averaging.}
Consider a well-posed instance $(G,v_{0}),$ and a complete voltage
assignment $v$ extending $v_{0}.$ If $G$ is such that $\len(e) = 1$
for all $e \in E_{G},$ it is easy to see that
$\lex \defeq \lex_{G}[v_{0}]$ satisfies the following simple condition
for all $x \in V_{G} \setminus T(v_{0}),$
$$\lex(x) = \frac{1}{2} \left(\max_{(x,y) \in E_{G}} \lex(y) +
  \min_{(x,z) \in E_{G}}\lex(z)\right).$$
This condition should be contrasted to the optimality condition for
$l_{2}$-regularization on these instances, which gives for all
non-terminals $x,$ the optimal voltage $v$ satisfies
$v(x) = \frac{1}{\mathsf{deg}(x)}\sum_{y : (x,y) \in E_{G}} v(y).$ 

To prove the above claim, consider locally changing $\lex$ at $x$ and
observe that the gradients of edges not incident at $x$ remain
unchanged, and at least one of edges incident at $x$ will have a
strictly larger gradient, contradicting lex-minimality. For general
graphs, this condition of local optimality can still be characterized
by a simple \emph{max-min gradient averaging property} as described
below.
\begin{definition}[Max-Min Gradient Averaging]
  Given a well-posed instance $(G,v_{0}),$ and a complete voltage
  assignment $v$ extending $v_{0},$ we say that $v$ satisfies the
  \emph{max-min gradient averaging} property (w.r.t. $(G,v_{0})$) if
  for every $x \in V_G \setminus T(v_{0}),$ we have \vspace{-6pt}
  \[
\vspace{-6pt}
  \max_{y:(x,y) \in E_{G}} \grad[v](x,y) = - \min_{y:(x,y) \in E_{G}}
  \grad[v](x,y).
  \]
\end{definition}

As stated in the theorem below, $\lex_{G}[v_{0}]$ is the unique
assignment satisfying max-min gradient averaging
property. \cite{Smart} proved a variant of this statement for weighted graphs. For completeness, we
present a proof in the appendix.
\begin{theorem}
\label{thm:basics:max-min}
Given a well-posed instance $(G,v_{0}),$ $\lex_G[v_{0}]$ satisfies
max-min gradient averaging property. Moreover, it is the unique
complete voltage assignment extending $v_{0}$ that satisfies this
property w.r.t. $(G,v_{0}).$
\end{theorem}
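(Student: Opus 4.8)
The plan is to prove the two assertions separately: first that $\lex_G[v_0]$ satisfies max-min gradient averaging, and then that it is the unique complete voltage assignment extending $v_0$ with this property. For the first part, I would use the meta-algorithm characterization (Theorem~\ref{thm:basics:meta}): $\lex_G[v_0]$ is obtained by repeatedly fixing steepest fixable paths. Fix a non-terminal $x$. After the meta-algorithm terminates, $x$ was fixed at some stage as an interior vertex of some steepest fixable path $P = (\ldots, x_{i-1}, x, x_{i+1}, \ldots)$, and by Lemma~\ref{lem:gradSteepestPath} the edges $(x_{i-1},x)$ and $(x,x_{i+1})$ both receive gradient exactly $\nabla P(v_0)$ in the direction along $P$; hence $\grad[\lex](x_{i-1},x) = -\grad[\lex](x, x_{i+1})$, so the max of $\grad[\lex](x,y)$ over neighbors $y$ is at least $\nabla P$ and the min is at most $-\nabla P$. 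The content is to show there is no neighbor $z$ with $\grad[\lex](x,z) > \nabla P$ (which would break the averaging on the high side) — but such an edge would give a terminal path through $z$, then $x$, then onward along $P$ to the endpoint, whose gradient exceeds that of $P$, contradicting that $P$ was steepest among fixable paths at the time it was fixed (using that paths fixed later have weakly smaller gradient, so $P$'s gradient dominates all edges touched at or after $x$ was fixed). Symmetrically on the low side. This pins down $\max_y \grad[\lex](x,y) = \nabla P = -\min_y \grad[\lex](x,y)$.

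For uniqueness, suppose $v$ is a complete voltage assignment extending $v_0$ that satisfies max-min gradient averaging but $v \neq \lex := \lex_G[v_0]$. I would argue by a maximum-principle / extremal-edge argument. Let $\beta := \max_{(x,y)\in E} \grad[v-\lex](x,y) = \max_{(x,y) \in E} \len(x,y)^{-1}\bigl((v-\lex)(x) - (v-\lex)(y)\bigr)$, and consider the set $S$ of vertices where $v - \lex$ attains its maximum value. Since both $v$ and $\lex$ extend $v_0$, they agree on $T(v_0)$, so if $v \neq \lex$ then $v - \lex$ is nonconstant and the maximizing set $S$ contains a non-terminal $x$ and has a neighbor $z \notin S$ with $(v-\lex)(z) < (v-\lex)(x)$. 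The idea is: at $x$, max-min averaging holds for both $v$ and $\lex$ (for $\lex$ by the first part), and I want to derive that $v$ and $\lex$ must in fact agree at $x$'s neighbors, contradicting $z \notin S$. Concretely, both $v$ and $\lex$ equalize their top and bottom incident gradients at $x$; comparing the two averaging identities at a carefully chosen extremal non-terminal, together with the fact that $(v-\lex)$ is maximized on $S$ (so every edge out of $S$ has $\grad[v-\lex] \le 0$ in the outward direction), forces the outward gradient of $v-\lex$ at the relevant edge to be zero, i.e. $z \in S$ after all — a contradiction unless $v - \lex$ is globally constant, hence zero.

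The main obstacle, and where I would spend the most care, is the uniqueness direction — specifically making the extremal argument airtight. The subtlety is that max-min averaging only controls the \emph{extreme} incident gradients at each non-terminal, not all of them, so one cannot naively run a discrete harmonic-function argument; one has to follow the specific neighbor achieving the max (or min) and chase a path along which $v - \lex$ stays at its maximum, eventually reaching a terminal where $v = \lex$ and deriving a contradiction with strict decrease somewhere along the way. A clean way to set this up is to pick, among all vertices in $S$, one minimizing graph distance to $T(v_0)$, take a neighbor $z$ strictly closer to $T(v_0)$ (so $z \notin S$), and show the max-min conditions at the $S$-side endpoint of that edge are incompatible with $(v-\lex)(z) < (v-\lex)(x)$. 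I would also double-check the easy base case $T(v_0) = V_G$ (where $\lex = v_0$ and the statement is vacuous) and handle graphs that are disconnected componentwise, though well-posedness makes each component reduce to the connected case. I expect routine but slightly fiddly bookkeeping with the sign convention $\grad[v](x,y) = -\grad[v](y,x)$ throughout.
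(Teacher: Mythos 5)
Your proof of the first assertion is valid but takes a different route from the paper's. You derive max--min averaging at a non-terminal $x$ from the meta-algorithm: the two path edges at $x$ receive gradient $\pm\nabla P$ when $x$ is fixed, and every other edge incident to $x$ is ``retired'' at that iteration or later, so by the monotonicity of the $\alpha^{\star}_i$ (Lemma~\ref{lem:basics:no-steeper-path}) its final gradient is at most $\nabla P$ in absolute value. That works. The paper instead gives a purely local variational argument: if the max and min incident gradients at $x$ were unbalanced, shifting $v(x)$ by a small $\eps$ would strictly decrease the larger of the two while leaving all non-incident edges untouched, contradicting lex-minimality. The paper's argument is shorter and does not need the global bookkeeping of which iteration fixed which edge; yours has the advantage of identifying exactly which neighbors realize the max and the min. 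For uniqueness the paper also proceeds very differently from you: given any $v$ satisfying max--min averaging, it chases the maximum-gradient edge of $v$ itself (not of $v-\lex$) in both directions, using the averaging identity to extend until it hits terminals, producing a free terminal path of constant gradient $\alpha$ that Lemma~\ref{lem:basics:inf-duality} certifies as a steepest fixable path; then $v$ extends $\fixpath[v_0,P]$ and Corollary~\ref{cor:fix-path} lets it induct. This re-runs the meta-algorithm inside $v$ and identifies $v$ with $\lex_G[v_0]$ constructively.

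Your uniqueness half, as sketched, has a genuine gap in the closing move. Both concrete ways you propose to finish fail verbatim. First, the set $S$ where $v-\lex$ attains its maximum $\beta>0$ contains \emph{no} terminals, so a chase that stays inside $S$ can never ``eventually reach a terminal where $v=\lex$.'' Second, picking $x\in S$ at minimal graph distance to $T(v_0)$ and looking at a closer neighbor $z\notin S$ gives you nothing, because max--min averaging constrains only the two \emph{extreme} incident gradients at $x$, and $z$ need not realize either extreme for either function. The comparison-principle strategy is salvageable, but the correct mechanism is: for non-terminal $x\in S$ one has $\grad[v](x,y)\ge\grad[\lex](x,y)$ for every neighbor $y$ (with equality iff $y\in S$); taking the max and the min over $y$ and using the averaging identity for both functions forces $\max_y\grad[v](x,y)=\max_y\grad[\lex](x,y)=:\alpha(x)$, and then the neighbor $y^-$ minimizing $\grad[v](x,\cdot)$ must satisfy $\grad[\lex](x,y^-)=-\alpha(x)$ as well, hence $y^-\in S$ and $v(y^-)=v(x)+\alpha(x)\len(x,y^-)$. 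Now take $x^*\in S$ maximizing $v$ over $S$: if $\alpha(x^*)>0$ this yields a vertex of $S$ with strictly larger $v$-value, a contradiction; if $\alpha(x^*)=0$ all incident gradients of both functions vanish, so every neighbor lies in $S$ with the same $v$-value, and the whole connected component is absorbed into $S$ --- contradicting well-posedness, since that component contains a terminal. With that repair your route is correct and genuinely different from the paper's.
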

An advantage of this characterization is that it can be verified
quickly. This is particularly useful for implementations for computing
the lex-minimizer.


\newcommand{\moddijkstra}{{\sc ModDijkstra}}
\newcommand{\steepestpath}{{\sc SteepestPath}}
\newcommand{\compinfmin}{{\sc CompInfMin}}
\newcommand{\complexmin}{{\sc CompLexMin}}
\newcommand{\compfastlexmin}{{\sc CompFastLexMin}}
\newcommand{\compvlow}{{\sc CompVLow}}
\newcommand{\compvhigh}{{\sc CompVHigh}}
\newcommand{\comphighpressgraph}{{\sc CompHighPressGraph}}
\newcommand{\fixabovepress}{{\sc FixAbovePress}}
\newcommand{\fixpathsabovepress}{{\sc FixPathsAbovePress}}
\newcommand{\compapproxlex}{{\sc CompApproxLex}}
\newcommand{\vertexsteepestpath}{{\sc VertexSteepestPath}}
\newcommand{\starsteepestpath}{{\sc StarSteepestPath}}

\newcommand{\parent}{\ensuremath{\mathsf{parent}}}
\newcommand{\LParent}{\ensuremath{\mathsf{LParent}}}
\newcommand{\HParent}{\ensuremath{\mathsf{HParent}}}
\newcommand{\temp}{\ensuremath{\mathsf{temp}}}
\newcommand{\vLow}{\ensuremath{\mathsf{vLow}}}
\newcommand{\vHigh}{\ensuremath{\mathsf{vHigh}}}
\newcommand{\treeRoot}{\ensuremath{\mathsf{root}}}
\newcommand{\pressure}{\ensuremath{\mathsf{pressure}}}
\newcommand{\sfd}{\ensuremath{\mathsf{d}}}

\newlength{\algtopspace}
\setlength{\algtopspace}{-4pt}
\newlength{\algpostcaptionspace}
\setlength{\algpostcaptionspace}{0pt}

\vspace{-7pt}
\section{Algorithms}\label{sec:algs}
We now sketch the ideas behind our algorithms
and give precise statements of our results. A full
description of all the algorithms is included in the appendix.

We define the \textbf{pressure} of a vertex to be the gradient of the steepest
  terminal path through it:
\[
\pressure[v_{0}](x) \defeq \max\{\nabla P(v_{0})\ |\ P \textrm{ is a 
  terminal path in $(G,v_{0})$ and } x \in P\}.
\]
Observe that in a graph with no terminal-terminal edges, a free
terminal path is a steepest fixable path iff its gradient is equal to
the highest pressure amongst all vertices. Moreover, vertices that lie
on steepest fixable paths are exactly the vertices with the highest
pressure. For a given $\alpha > 0,$ in order to identify vertices with
pressure exceeding $\alpha,$ we compute vectors $\vHigh[\alpha](x)$
and $\vLow[\alpha](x)$ defined as follows in terms of $\dis$, the
metric on $V$ induced by $\ell$:
\[
  \vLow[\alpha](x) = \min_{t \in T(v_{0})} \{ v_{0}(t) + \alpha\cdot \dis(x,t)\} \qquad 
  \vHigh[\alpha](x) = \max_{t \in T(v_{0})} \{ v_{0}(t) - \alpha\cdot \dis(t,x)\}.
\]

\subsection{Lex-minimization on Star Graphs}
We first consider the problem of computing the lex-minimizer on a star graph in which
  every vertex but the center is a terminal.
This special case is a subroutine in the general algorithm,
  and also motivates some of our techniques.

  Let $x$ be the center vertex, $T$ be the set of terminals, and all
  edges be of the form $(x,t)$ with $t \in T$.  The initial voltage
  assignment is given by $v : T \to \rea,$ and we abbreviate
  $\dis(x,t)$ by $\sfd(t) = \len(x,t).$ From
  Corollary~\ref{cor:fix-path} we know that we can determine the value
  of the lex minimizer at $x$ by finding a steepest fixable path. By
  definition, we need to find $t_{1},t_{2} \in T$ that maximize the
  gradient of the path from $t_{1}$ to $t_{2},$
  $\nabla(t_{1},t_{2}) \defeq \frac{v(t_{1}) - v(t_2)}{\sfd(t_{2}) +
    \sfd(t_{2})}.$
  As observed above, this is equivalent to finding a terminal with the
  highest pressure.  We now present a simple randomized algorithm for
  this problem that runs in expected linear time.

  Given a terminal $t_{1}$, we can compute its pressure $\alpha$ along
  with the terminal $t_{2}$ such that $|\nabla(t_{1},t_{2})| = \alpha$
  in time $O (\sizeof{T})$ by scanning over the terminals in $T$.
  Consider doing this for a random terminal $t_{1}$.  We will show
  that in linear time one can then find the subset of terminals
  $T' \subset T$ whose pressure is greater than $\alpha$.  Assuming
  this, we complete the analysis of the algorithm.  If
  $T^\prime = \emptyset,$ $t_{1}$ is a vertex with highest
  pressure. Hence the path from $t_{1}$ to $t_{2}$ is a steepest
  fixable path, and we return $(t_{1},t_{2}).$ If
  $T^\prime \neq \emptyset,$ the terminal with the highest pressure
  must be in $T^{\prime},$ and we recurse by picking a new random
  $t_{1} \in T^{\prime}.$ As the size of $T'$ will halve in
  expectation at each iteration, the expected time of the algorithm on
  the star is $O(|T|)$.

  To determine which terminals have pressure exceeding $\alpha$, we
  observe that the condition
  $\exists t_{2}: \alpha < \nabla(t_{1},t_{2}) =
  \frac{v(t_{1})-v(t_{2})}{\sfd(t_1) + \sfd(t_{2})},$
  is equivalent to
  $\exists t_{2}: v(t_{2}) + \alpha \sfd(t_2) < v(t_{1}) - \alpha
  \sfd(t_{1}).$
  This, in turn, is equivalent to
  $\vLow[\alpha](x) < v(t_{1}) - \alpha \sfd(t_{1}).$ We can compute
  $\vLow[\alpha](x)$ in deterministic $O(|T|)$ time. Similarly, we can
  check if $\exists t_{2}: \alpha < \nabla(t_{2},t_{1})$ by checking
  if $\vHigh[\alpha](x) > v_{t_1} + \alpha\sfd(t_{1}).$ Thus, in
  linear time, we can compute the set $T'$ of terminals with pressure
  exceeding $\alpha$.  The above algorithm is described in
  Algorithm~\ref{alg:star-steepest-path}.
\begin{theorem}\label{thm:star}
Given a set of terminals $T,$ initial voltages $v:T \to
\rea,$ and distances $\sfd : T \to \rea_{+},$
\starsteepestpath$(T,v,\sfd)$ returns $(t_1,t_2)$ maximizing
$\frac{v(t_{1}) - v(t_{2})}{\sfd(t_1) + \sfd(t_{2})},$ and runs in
expected time $O(|T|).$
\end{theorem}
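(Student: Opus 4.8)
The plan is to prove correctness and the running-time bound separately, the crux of both being a single closure observation about pressures. For $t,s\in T$ write $\nabla(t,s)=\frac{v(t)-v(s)}{\sfd(t)+\sfd(s)}$; this is antisymmetric, so $\max_{t_1,t_2\in T}\nabla(t_1,t_2)=\max_{t_1,t_2\in T}|\nabla(t_1,t_2)|$. Define the \emph{pressure} $p(t):=\max_{s\in T}|\nabla(t,s)|$ and call any $s$ attaining this maximum a \emph{witness} of $t$; then the quantity to be maximized equals $\mathrm{OPT}:=\max_{t}p(t)$, and a pressure-maximal terminal together with one of its witnesses, ordered so that the gradient is nonnegative, is a valid answer. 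So it suffices to show that \starsteepestpath\ outputs a pressure-maximal terminal and a witness for it.

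The key structural fact is the following closure property: if $p(t)>\alpha$ for some $\alpha\ge 0$ and $s$ is a witness of $t$, then $p(s)\ge|\nabla(s,t)|=|\nabla(t,s)|=p(t)>\alpha$, so $s$ also has pressure exceeding $\alpha$. Hence any set of the form $S=\{t\in T:p(t)>\alpha\}$, as well as $S=T$ itself, is \emph{witness-closed}: every $t\in S$ has a witness in $S$, which forces $\max_{s\in S}|\nabla(t,s)|=p(t)$ for all $t\in S$, and moreover $S$ still contains every pressure-maximal terminal whenever $\alpha<\mathrm{OPT}$. This is exactly what makes it legitimate for \starsteepestpath\ to recurse on the restricted instance $(T',v|_{T'},\sfd|_{T'})$: the pressures it recomputes inside $T'$ agree with the true pressures, and $T'$ still contains the answer. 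I expect this to be the only non-routine point; everything else is bookkeeping.

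\textbf{Correctness.} Argue by induction over the recursive calls, maintaining the invariant that the current terminal set $S$ is witness-closed and contains a pressure-maximal terminal (true initially for $S=T$). In a call on $S$: we sample $t_1$ uniformly from $S$ and, by one $O(|S|)$ scan, compute $\alpha:=\max_{s\in S}|\nabla(t_1,s)|$, which equals $p(t_1)$ by witness-closedness, together with a witness $t_2\in S$. We then form $T':=\{t\in S:p(t)>\alpha\}$; for $t\in S$, witness-closedness gives $\max_{s\in S}|\nabla(t,s)|=p(t)$, and by the equivalences established above, $p(t)>\alpha$ holds iff $\vLow[\alpha](x)<v(t)-\alpha\,\sfd(t)$ or $\vHigh[\alpha](x)>v(t)+\alpha\,\sfd(t)$, where $\vLow[\alpha](x)=\min_{s\in S}(v(s)+\alpha\,\sfd(s))$ and $\vHigh[\alpha](x)=\max_{s\in S}(v(s)-\alpha\,\sfd(s))$; verifying these via cross-multiplication (using $\sfd>0$ and $\alpha\ge 0$) is routine, and computing $\vLow,\vHigh$ and then scanning to build $T'$ costs $O(|S|)$. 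If $T'=\emptyset$, then $p(t)\le\alpha$ for all $t\in S$, so $\alpha=\max_{t\in S}p(t)=\mathrm{OPT}$ by the invariant, and returning $t_1$ with its witness $t_2$, suitably oriented, is correct. If $T'\ne\emptyset$, then $\alpha<\mathrm{OPT}$, so $t_1\notin T'$ and $|T'|\le|S|-1$, while the closure property shows $T'$ is witness-closed and still contains a pressure-maximal terminal; recurse. Since $|S|$ strictly decreases, the recursion terminates.

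\textbf{Running time.} Each call performs $O(|S|)$ work, as just described. Let $W(k)$ bound the expected running time over all instances with $k$ terminals, with $W(0)=0$, and fix a constant $c$ with per-call cost $\le ck$. Partition the $k$ terminals of an instance by their distinct pressure values $q_1>\cdots>q_r$ with multiplicities $m_1,\dots,m_r$ (so $\sum_j m_j=k$). With probability $m_j/k$ the pivot $t_1$ has $p(t_1)=q_j$, in which case $T'$ has exactly $M_{j-1}:=m_1+\cdots+m_{j-1}$ terminals, and by the closure property $T'$ is itself a legitimate instance of that size, so $W(k)\le ck+\tfrac1k\sum_{j=1}^r m_j\,W(M_{j-1})$. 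Since $M_{j-1}\le k-m_j\le k-1$, strong induction with hypothesis $W(l)\le 2cl$ for $l<k$ gives
\[
W(k)\le ck+\frac{2c}{k}\sum_{j=1}^r m_j M_{j-1}
= ck+\frac{c}{k}\Big(k^2-\sum_{j=1}^r m_j^2\Big)
\le ck+c(k-1)=2ck-c\le 2ck,
\]
where we used $\sum_{j=1}^r m_j M_{j-1}=\sum_{1\le l<j\le r}m_l m_j=\tfrac12\big(k^2-\sum_j m_j^2\big)$ and $\sum_j m_j^2\ge\sum_j m_j=k$ (each $m_j\ge 1$). Hence $W(|T|)=O(|T|)$, which completes the proof once the correctness part is in place.
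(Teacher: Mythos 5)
Your proof is correct and takes essentially the same approach as the paper, which only sketches this argument in prose in Section~4.1 (random pivot $t_1$, compute its pressure $\alpha$ and a witness $t_2$, filter to the set $T'$ of terminals with pressure exceeding $\alpha$ via $\vLow$/$\vHigh$, recurse). Your write-up is in fact tighter than the paper's in the two places where it is informal: the witness-closure observation guaranteeing that pressures computed inside $T'$ agree with the global ones and that the optimal pair survives the restriction, and the exact recurrence $W(k)\le ck+\tfrac1k\sum_j m_j W(M_{j-1})$ replacing the paper's ``the size of $T'$ halves in expectation'' shorthand.
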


\subsection{Lex-minimization on General Graphs}

Theorem~\ref{thm:basics:meta}, tells us that \meta\, will compute
lex-minimizers given an algorithm for finding a steepest fixable path
in $(G,v_{0}).$ Recall that finding a steepest fixable path is
equivalent to finding a path with gradient equal to the highest
pressure amongst all vertices. In this section, we show how to do this
in expected time $O (m + n \log n)$.

We describe an algorithm \vertexsteepestpath\, that
  finds a terminal path $P$ through any vertex $x$
  such that
  $\nabla P(v_{0}) = \pressure[v_{0}](x)$ in expected $O(m + n\log n)$
time. 
Using Dijkstra's algorithm,  we compute
   $\dis(x,t)$ for all $t \in T.$ 
If $x \in T(v_{0}),$ then there must be a terminal path $P$
  that starts at $x$ that has
  $\nabla P(v_{0}) = \pressure[v_{0}](x).$ 
To compute such a $P$ we examine 
  all $t \in T(v_{0})$ in $O(|T|)$ time to find the $t$ that maximizes
  $|\nabla(x,t)| = \frac{|v(x)-v(t)|}{\dis(x,t)},$ and then return a shortest
path between $x$ and that $t.$

If $x \notin T(v_{0}),$ then the steepest path through $x$
between terminals $t_{1}$ and $t_{2}$ must consist of shortest paths
  between $x$ and $t_{1}$ and between $x$ and $t_{2}$.
Thus, we can reduce the problem to that of finding the steepest path
  in a star graph where $x$ is the only non-terminal and is connected to 
  each terminal $t$ by an edge of length $\dis(x,t).$ 
By Theorem \ref{thm:star}, we can find this steepest path in  $O(|T|)$ expected time.
The above algorithm is formally described as
  Algorithm~\ref{alg:vertex-steepest-path}.

\begin{theorem}\label{thm:steepest}
  Given a well-posed instance $(G,v_{0}),$ and a vertex $x \in V_{G},$
  \vertexsteepestpath$(G,v_{0},x)$ returns a terminal path $P$
  through $x$ such that $\nabla P(v_{0}) = \pressure[v_{0}](x),$ in
  $O(m+n\log n)$ expected time.
\end{theorem}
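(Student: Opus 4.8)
The plan is to prove correctness by expressing both $\pressure[v_{0}](x)$ and the path the algorithm returns purely in terms of the shortest-path metric $\dis$, and then to read off the running time from Dijkstra's algorithm together with Theorem~\ref{thm:star}. The crucial observation is that $\len(P)$ enters $\nabla P(v_{0})$ only in the denominator, so for any terminal path $P$ through $x$ with endpoints $t_1,t_2$ one may shorten $P$ without decreasing $\abs{\nabla P(v_{0})}$: splitting $P$ at an occurrence of $x$ and replacing the two halves by shortest paths yields a terminal path through $x$ with the same endpoints and length exactly $\dis(x,t_1)+\dis(x,t_2)$. Combined with the fact that $\{\nabla P(v_{0}) : P \text{ a terminal path through } x\}$ is symmetric under negation (reverse $P$, which negates the gradient and fixes the length), this gives
\[
  \pressure[v_{0}](x) \;=\; \max_{t_1,t_2\in T(v_{0})}\; \frac{\abs{v_{0}(t_1)-v_{0}(t_2)}}{\dis(x,t_1)+\dis(x,t_2)},
\]
where, when $x\in T(v_{0})$, we also permit $t_1=x$ with the convention $\dis(x,x)=0$.

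First I would handle the case $x\in T(v_{0})$, where I claim the maximum is attained with one of the two terminals equal to $x$, i.e.\ $\pressure[v_{0}](x)=\max_{t\in T(v_{0})}\abs{v_{0}(x)-v_{0}(t)}/\dis(x,t)$. This is the elementary mediant inequality: if a terminal path $P$ through $x$ has endpoints $t_1,t_2$, then splitting it at $x$ into the two terminal paths $P[t_1:x]$ and $P[x:t_2]$ exhibits $\nabla P(v_{0})$ as the length-weighted average of the gradients of the two halves, hence at most the larger of them, and each of those is in turn at most $\max_t\abs{v_{0}(x)-v_{0}(t)}/\dis(x,t)$ (use $\len(\cdot)\ge\dis(x,\cdot)$). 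Therefore the algorithm is correct here: it scans $T(v_{0})$ to find the maximizing terminal $t$ and returns a shortest $x$--$t$ path, which, having positive edge lengths, is simple and hence a legal terminal path of length $\dis(x,t)$, oriented so that its gradient equals $\pressure[v_{0}](x)$ (this is $\ge 0$ since the gradient value set is symmetric under negation).

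Next I would handle $x\notin T(v_{0})$. Now every terminal path through $x$ has $x$ strictly interior and so splits at $x$, and the displayed formula specializes to $\pressure[v_{0}](x)=\max_{(t_1,t_2)}\frac{v_{0}(t_1)-v_{0}(t_2)}{\dis(x,t_1)+\dis(x,t_2)}$ over ordered pairs of terminals. This is exactly the star-graph problem of Theorem~\ref{thm:star}, with center $x$, a leaf for each $t\in T(v_{0})$ carrying voltage $v_{0}(t)$, and leaf distance $\sfd(t):=\dis(x,t)$; so \starsteepestpath returns a maximizing pair $(t_1,t_2)$ in expected $O(\abs{T(v_{0})})$ time, and concatenating a shortest $t_1$--$x$ path with a shortest $x$--$t_2$ path produces a terminal path $P$ through $x$ with $\len(P)=\dis(x,t_1)+\dis(x,t_2)$, hence $\nabla P(v_{0})=\pressure[v_{0}](x)$ (this $P$ may repeat vertices, but repetitions are allowed by the definition of a path, and both endpoints are terminals, so $P$ is a valid terminal path). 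For the running time: one Dijkstra run from $x$ produces all the $\dis(x,t)$ and shortest-path parent pointers in $O(m+n\log n)$ with Fibonacci heaps; the subsequent linear scan ($x\in T(v_{0})$) or \starsteepestpath call ($x\notin T(v_{0})$) costs expected $O(\abs{T(v_{0})})=O(n)$; and tracing back one or two shortest paths from the parent pointers costs $O(n)$. The total is $O(m+n\log n)$ in expectation.

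The step I expect to need the most care is the reduction formula for $\pressure[v_{0}](x)$: because the paper's notion of a path permits repeated vertices and edges, one must check that splitting an arbitrary terminal path at an arbitrary occurrence of $x$ and replacing the two halves by shortest paths is always legal and never decreases $\abs{\nabla P(v_{0})}$, and that in the case $x\in T(v_{0})$ the mediant/convex-combination identity is exact and the two halves really are terminal paths through $x$. Once that is pinned down, the rest is bookkeeping plus a direct appeal to Theorem~\ref{thm:star}.
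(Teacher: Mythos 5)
Your proof is correct and follows essentially the same route as the paper, which only sketches this argument: compute $\dis(x,\cdot)$ by one Dijkstra run, observe that a steepest terminal path through $x$ may be taken to consist of shortest paths from $x$ to its two endpoint terminals, handle $x\in T(v_{0})$ by a direct scan and $x\notin T(v_{0})$ by the star-graph reduction to Theorem~\ref{thm:star}. The details you supply beyond the paper's sketch (the reversal symmetry giving $\pressure[v_{0}](x)\ge 0$, the mediant inequality showing that for $x\in T(v_{0})$ one endpoint may be taken to be $x$, and the legality of concatenating two shortest paths under the paper's definition of a path) are exactly the right ones and are argued correctly.
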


As in the algorithm for the star graph, we need to
identify the vertices whose pressure exceeds a given $\alpha$.
For a fixed $\alpha,$ we can compute $\vLow[\alpha](x)$ and
$\vHigh[\alpha](x)$ for all $x \in V_{G}$ using a simple modification
of Dijkstra's algorithm in $O(m + n\log n)$ time. We describe the
algorithms \compvhigh, {\compvlow} for these tasks in
Algorithms~\ref{alg:vlow} and~\ref{alg:vhigh}. The following lemma
encapsulates the usefulness of $\vLow$ and $\vHigh.$
\begin{lemma}
\label{lem:vlowvhigh-pressure}
For every $x \in V_{G},$ $\pressure[v_{0}](x) > \alpha$
iff $\vHigh[\alpha](x) >\vLow[\alpha](x).$
\end{lemma}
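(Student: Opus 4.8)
The plan is to unfold the definitions of $\vLow$, $\vHigh$, and $\pressure$ and reduce the claim to a statement about a single path realizing a pressure, the way the star-graph argument does. First I would recall that $\pressure[v_{0}](x) = \max\{\nabla P(v_0) : P \text{ terminal path through } x\}$, and that, since $\nabla P(v_0) = (v_0(\partial_0 P) - v_0(\partial_1 P))/\len(P)$ depends only on the endpoints and the length, for any terminal path through $x$ with endpoints $s,t$ the gradient is maximized (for that pair) by taking $P$ to be the concatenation of a shortest $s$–$x$ path and a shortest $x$–$t$ path, giving length $\dis(s,x) + \dis(x,t)$. Hence
\[
\pressure[v_{0}](x) = \max_{s,t \in T(v_0)} \frac{v_0(s) - v_0(t)}{\dis(s,x) + \dis(x,t)},
\]
where we allow $s = t$ (contributing $0$) so the max is well-defined; well-posedness guarantees $T(v_0)$ meets $x$'s component so the distances are finite. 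This is exactly the star-graph quantity from Theorem~\ref{thm:star} applied to the star centered at $x$ with leaf $t$ at distance $\dis(x,t)$.

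Next I would carry out the same algebraic manipulation used in the star-graph discussion. For a fixed $\alpha > 0$ and fixed $s \in T(v_0)$, there exists $t$ with $\frac{v_0(s)-v_0(t)}{\dis(s,x)+\dis(x,t)} > \alpha$ if and only if there exists $t$ with $v_0(t) + \alpha\,\dis(x,t) < v_0(s) - \alpha\,\dis(s,x)$, i.e.\ iff $\vLow[\alpha](x) < v_0(s) - \alpha\,\dis(s,x)$. Taking the max over $s$ and using that $\vHigh[\alpha](x) = \max_{s}\{v_0(s) - \alpha\,\dis(s,x)\}$ (here I use symmetry of $\len$, hence of $\dis$, so $\dis(s,x)=\dis(x,s)$, matching the definition of $\vHigh$), we get: $\pressure[v_0](x) > \alpha$ iff $\vHigh[\alpha](x) > \vLow[\alpha](x)$. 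I would write the two directions explicitly: if $\pressure[v_0](x) > \alpha$, pick the optimal pair $(s,t)$; it witnesses $\vHigh[\alpha](x) \ge v_0(s) - \alpha\dis(s,x) > v_0(t) + \alpha\dis(x,t) \ge \vLow[\alpha](x)$. Conversely, if $\vHigh[\alpha](x) > \vLow[\alpha](x)$, let $s$ achieve the $\vHigh$ max and $t$ achieve the $\vLow$ min; then $v_0(s) - \alpha\dis(s,x) > v_0(t) + \alpha\dis(x,t)$, which rearranges to $\nabla P(v_0) > \alpha$ for $P$ the concatenated shortest path through $x$ from $s$ to $t$, so $\pressure[v_0](x) > \alpha$.

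The only mild obstacle is bookkeeping around degenerate cases: ensuring the maxima/minima defining $\vLow,\vHigh$ are attained (they are, $T(v_0)$ being finite and nonempty on $x$'s component by well-posedness), and handling the possibility $s=t$ or $\dis(s,x)=0$ (when $x$ itself is a terminal) — in those cases the inequalities still go through verbatim since a zero-length shortest subpath is allowed and $\pressure$ at a terminal $x$ is just $\max_t |v_0(x)-v_0(t)|/\dis(x,t)$, which the same computation captures. I expect no real difficulty; the lemma is essentially the star-graph reduction (already spelled out before Theorem~\ref{thm:star}) combined with the observation that steepest terminal paths through $x$ decompose into shortest paths to the two endpoints.
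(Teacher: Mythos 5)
Your proposal is correct and follows essentially the same route as the paper's proof: both reduce the claim to the identity $\pressure[v_{0}](x) = \max_{s,t \in T(v_{0})} \frac{v_{0}(s)-v_{0}(t)}{\dis(s,x)+\dis(x,t)}$ and then rearrange the inequality $v_{0}(s) - \alpha\,\dis(s,x) > v_{0}(t) + \alpha\,\dis(x,t)$ into the comparison of $\vHigh[\alpha](x)$ with $\vLow[\alpha](x)$. Your write-up is in fact slightly more careful than the paper's, since you justify explicitly (via concatenation of shortest paths through $x$) why the pressure is attained by such a pair $(s,t)$, a step the paper leaves implicit.
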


It immediately follows that the algorithm
\comphighpressgraph$(G,v_{0},\alpha)$ described in
Algorithm~\ref{alg:comp-high-press-graph} computes the vertex induced
subgraph on the vertex set $\{x \in V_{G}| \ \pressure[v_{0}] (x) >
\alpha\}.$

We can combine these algorithms into an algorithm \steepestpath\, that finds the steepest fixable
path in $(G,v_{0})$ in $O(m + n\log n)$ expected time. We may assume that there are
no terminal-terminal edges in $G.$ We sample an edge $(x_{1},x_{2})$
uniformly at random from $E_{G}$, and a terminal $x_{3}$ uniformly at
random from $V_{G}.$ 
For $i=1,2,3,$ we compute the steepest terminal
  path $P_{i}$ containing $x_{i}.$
By Theorem~\ref{thm:steepest}, this
  can be done in $O(m+n\log n)$ expected time. 
Let $\alpha$ be the largest gradient $\max_{i} \nabla P_{i}.$ 
As mentioned above, we
can identify $G^{\prime},$ the induced subgraph on vertices $x$ with
 pressure exceeding $\alpha$, in $O(m + n \log n)$ time. 
If $G^\prime$ is empty, we know that the path $P_{i}$ with largest gradient is
  a steepest fixable path. 
If not, a steepest
  fixable path in $(G,v_{0})$ must be in $G^{\prime},$ and hence we can
  recurse on $G^{\prime}.$ 
Since we picked a uniformly random edge, and
  a uniformly random vertex, the expected size of $G^{\prime}$ is at most half
  that of $G$. Thus, we obtain an expected running time
of $O(m + n \log n).$
This algorithm is described in detail in Algorithm~\ref{alg:steepest-path}.

\begin{theorem}
\label{thm:steepestpathalg}
  Given a well-posed instance $(G,v_{0})$ with
  $E_{G} \cap (T(v_{0}) \times T(v_{0})) = \emptyset$,
  \steepestpath$(G,v_{0})$ returns a steepest fixable path in
  $(G,v_{0}),$ and runs in $O(m+n\log n)$ expected time. 
\end{theorem}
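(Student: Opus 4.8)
The plan is to prove Theorem~\ref{thm:steepestpathalg} by formalizing the recursive sketch given just above the statement. First I would handle the base of the recursion: by Theorem~\ref{thm:steepest}, for each of the three sampled objects $x_1, x_2, x_3$ the call $\vertexsteepestpath(G,v_0,x_i)$ returns a terminal path $P_i$ through $x_i$ with $\nabla P_i(v_0) = \pressure[v_0](x_i)$ in expected $O(m + n\log n)$ time, so computing all three and taking $\alpha = \max_i \nabla P_i(v_0)$ costs expected $O(m + n\log n)$. Since $G$ has no terminal-terminal edges, every steepest fixable path is a free terminal path whose gradient equals $\max_{x \in V_G}\pressure[v_0](x)$, and the vertices lying on steepest fixable paths are exactly those attaining this maximum (the observation in Section~\ref{sec:algs}). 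So I would argue: if $\comphighpressgraph(G,v_0,\alpha)$ (correct by Lemma~\ref{lem:vlowvhigh-pressure}, running in $O(m+n\log n)$) returns the empty graph, then no vertex has pressure $> \alpha$, hence $\alpha = \max_x \pressure[v_0](x)$, and the $P_i$ achieving this gradient is a steepest fixable path — return it. Correctness in the recursive case rests on a small lemma: if some vertex has pressure $>\alpha$, then every steepest fixable path lies entirely within $G' := \comphighpressgraph(G,v_0,\alpha)$, and moreover the steepest fixable paths of $(G,v_0)$ coincide with those of $(G', v_0|_{G'})$ — this needs a short argument that a terminal path realizing a vertex's pressure uses only vertices of at least that pressure (a path through $x$ of gradient $\pressure(x)$ witnesses pressure $\ge \pressure(x) > \alpha$ for every vertex on it), so restricting to $G'$ loses no steepest fixable path and introduces none, and that the terminal labels and lengths inherited by $G'$ are unchanged.

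For the running time I would set up the recurrence by a standard randomized-incremental / quickselect-style argument. Condition on the multiset of pressures of vertices and of edges (an edge's ``pressure'' being, say, $\min$ of its endpoints' pressures, or more carefully the gradient of the steepest terminal path through that edge). Since $x_1x_2$ is a uniformly random edge and $x_3$ a uniformly random vertex, $\alpha \ge \max(\text{pressure of }x_3, \text{gradient through edge }x_1x_2)$; a uniformly random vertex's pressure exceeds the median pressure with probability $\ge 1/2$, so with probability $\ge 1/2$ the set $\{x : \pressure[v_0](x) > \alpha\}$ has size at most $|V_G|/2$, and similarly the random edge bound controls the number of surviving edges. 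Thus $\av[\text{size of } G'] \le \tfrac12|G|$ (measuring size as $m+n$, or handling $m$ and $n$ separately), giving $\av[\text{total work}] \le c(m+n\log n) + \av[T(\text{size of }G')]$, which solves to $O(m + n\log n)$ by the usual geometric-series argument over the expected halving. One must be slightly careful that in the recursion on $G'$ the quantities $m', n'$ can both shrink, and that $n' \log n' \le n \log n$, so the per-level cost is genuinely geometrically decreasing in expectation.

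The step I expect to be the main obstacle is the precise book-keeping for the expected-size halving when we sample both a random edge and a random vertex: one needs to pick the right potential function on $(G,v_0)$ that simultaneously (i) upper-bounds the running time of one recursive call up to constants, namely something like $m + n\log n$, and (ii) is shown to at least halve in expectation in one step. The subtlety is that the edge sample controls $m'$ and the vertex sample controls $n'$, but $G'$ is defined purely by a vertex-pressure threshold, so I need the fact that an edge survives into $G'$ only if both its endpoints have pressure $>\alpha$, and relate ``$\alpha \ge$ gradient of steepest path through a random edge'' to ``few edges have both endpoints of pressure $>\alpha$''. I would phrase this as: let $\beta$ be the median over edges $e$ of (the max gradient of a terminal path through $e$); a random edge gives $\alpha \ge \beta$ with probability $\ge 1/2$, and any edge surviving into $G'$ has gradient-through-it $> \alpha \ge \beta$, of which there are at most $m/2$; combined with the analogous vertex statement via a union-bound-free argument (take the event that the vertex sample beats the vertex-median, which alone already gives $n' \le n/2$ with probability $1/2$, and separately argue $m' \le O(n') $ is false in general so we really do need the edge sample), we get the expected contraction. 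Once that contraction lemma is in hand, the rest is the routine recursion solve and invoking Theorems~\ref{thm:steepest} and Lemma~\ref{lem:vlowvhigh-pressure} as black boxes.
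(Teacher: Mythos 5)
Your proposal is correct and takes essentially the same route as the paper, which itself only gives the recursive sketch preceding the theorem: the base case via Theorem~\ref{thm:steepest} and Lemma~\ref{lem:vlowvhigh-pressure}, the containment of every steepest fixable path in the high-pressure subgraph $G'$, and the expected halving of both $m$ and $n$ coming from the random edge and random vertex samples respectively. One small caution on the detail you flag as the main obstacle: of your two candidate edge potentials, the right one is the minimum of the two endpoints' pressures (not the maximum gradient of a terminal path through the edge), since an edge survives into $G'$ exactly when that minimum exceeds $\alpha$, and $\alpha \ge \max(\pressure[v_0](x_1),\pressure[v_0](x_2))$ dominates it for the sampled edge, whereas a surviving edge can have every terminal path through it of gradient at most $\alpha$.
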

By using \steepestpath\,  in \meta, we get the \complexmin,
  shown in Algorithm~\ref{fig:meta-algorithm}. From Theorem~\ref{thm:basics:meta}
  and Theorem~\ref{thm:steepestpathalg}, we immediately get the following corollary. 
\begin{corollary}
 Given a well-posed instance $(G,v_{0})$ as input, algorithm \complexmin\, 
  computes a lex-minimizing assignment that extends $v_0$ in 
 $O(n (m + n \log n) )$ expected time.
\end{corollary}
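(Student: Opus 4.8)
The plan is to recognize \complexmin\ as the meta-algorithm \meta\ of Theorem~\ref{thm:basics:meta} run with \steepestpath\ as its subroutine for producing a steepest fixable path, and then to (i) check that the hypotheses of Theorem~\ref{thm:steepestpathalg} are maintained from iteration to iteration, (ii) bound the number of iterations by $n$, and (iii) combine with the per-iteration cost $O(m + n\log n)$ via linearity of expectation.

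For correctness, the thing to verify is that every invocation of \steepestpath\ is on a well-posed instance with no terminal--terminal edges, since that is the hypothesis of Theorem~\ref{thm:steepestpathalg}. \complexmin\ enforces the latter by deleting all terminal--terminal edges (in time $O(m)$) before each call. This is harmless: a steepest fixable path is by definition a free terminal path and so never traverses a terminal--terminal edge, whence deleting such edges alters neither the set of free terminal paths nor their gradients, hence neither the steepest fixable paths nor the lex-minimizer of the instance. (That deleting a terminal--terminal edge does not change the lex-minimizer uses the fact that such an edge contributes the very same coordinate to $\grad[v]$ for every complete extension $v$ of $v_{0}$, and that inserting a common coordinate into two vectors preserves their lexicographic order.) Well-posedness is preserved by $\fixpath[v_{0},P]$ by Corollary~\ref{cor:fix-path}, and by the edge-deletion step because both endpoints of any deleted edge are terminals, so every resulting component still contains a terminal. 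With these invariants in place, the run of \complexmin\ on $(G,v_{0})$ performs exactly the sequence of fix-path operations that \meta\ would perform on the instance obtained by stripping the terminal--terminal edges, so Theorem~\ref{thm:basics:meta} guarantees that when \complexmin\ stops---which is precisely when no non-terminals remain---the complete assignment it has built is the unique lex-minimizer extending $v_{0}$.

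For the iteration count, observe that a steepest fixable path $P$ returned by \steepestpath\ is a free terminal path, hence has at least one interior vertex that is a non-terminal, and $\fixpath[v_{0},P]$ converts every interior non-terminal of $P$ into a terminal. So the number of non-terminals strictly decreases in each iteration, bounding the number of iterations by $n$. Each iteration costs $O(m)$ for the edge deletion and the \fixpath\ update, plus $O(m + n\log n)$ expected time for the call to \steepestpath\ by Theorem~\ref{thm:steepestpathalg}; since the number of iterations is a deterministic quantity at most $n$, linearity of expectation yields a total expected running time of $O(n(m + n\log n))$.

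The step I expect to require the most care is the claim that stripping terminal--terminal edges does not change the lex-minimizer (equivalently, that \steepestpath\ may legitimately be run on the stripped graph at every step); once that is settled, the corollary is just the composition of Theorem~\ref{thm:basics:meta} with Theorem~\ref{thm:steepestpathalg} together with the $O(n)$ bound on the number of iterations.
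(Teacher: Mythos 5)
Your proposal is correct and follows exactly the route the paper takes: the paper derives this corollary immediately by combining Theorem~\ref{thm:basics:meta} (the meta-algorithm instantiated with \steepestpath\ produces the lex-minimizer) with Theorem~\ref{thm:steepestpathalg} (each steepest-path search costs $O(m+n\log n)$ expected time), using the $n$-iteration bound from the analysis of \meta. Your extra care about stripping terminal--terminal edges and preserving well-posedness is sound but is already built into the definition of \meta\ and Corollary~\ref{cor:fix-path}, so it does not constitute a different argument.
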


\subsection{Linear-time Algorithm for Inf-minimization}\label{sec:infAlg}
Given the algorithms in the previous section, it is straightforward to
construct an infinity minimizer. Let $\alpha^{\star}$ be the gradient
of the steepest terminal path. From 
Lemma~\ref{lem:basics:inf-duality}, we know that the norm of the inf
minimizer is $\alpha^{\star}$. Considering all trivial terminal paths
(terminal-terminal edges), and using \steepestpath, we can compute
$\alpha^{\star}$ in randomized $O(m + n \log n)$ time. It is well
known (\cite{McShane,Whitney}) that $v_{1} \defeq \vLow[\alpha^{\star}]$
and $v_{2} \defeq \vHigh[\alpha^{\star}]$ are inf-minimizers. It is
also known that $\frac{1}{2} (v_{1} + v_{2})$ is the inf-minimizer
that minimizes the maximum $\ell_\infty$-norm distance to all inf-minimizers. In the case of path graphs, this was observed by \cite{GaffneyOptLip} and independently by \cite{WinogradOptLip}.
For
completeness, the algorithm is presented as
Algorithm~\ref{alg:comp-inf-min}, and we have the following result.
\begin{theorem}
\label{thm:lineartimeinfmin}
Given a well-posed instance $(G,v_{0}),$ \compinfmin$(G,v_{0})$
returns a complete voltage assignment $v$ for $G$ extending $v_{0}$ that
minimizes $\norm{\grad[v]}_{\infty},$ and runs in randomized $O(m + n
\log n)$ time.
\end{theorem}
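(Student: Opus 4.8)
The plan is to split the argument into a correctness claim and a running-time claim, and for correctness to first pin down the optimal value. Write $\alpha^\star$ for the gradient of the steepest terminal path in $(G,v_{0})$. By Lemma~\ref{lem:basics:inf-duality}, a complete $v$ extending $v_{0}$ with $\norm{\grad[v]}_{\infty}\le\alpha$ exists iff every terminal path $P$ satisfies $\nabla P(v_{0})\le\alpha$; since terminal paths come in both orientations, so that $\{\nabla P(v_{0})\}$ is symmetric about $0$ and $\alpha^\star\ge 0$, this says precisely that the optimal value of $\min_{v}\norm{\grad[v]}_{\infty}$ over complete extensions is $\alpha^\star$. I then need that \compinfmin\ actually computes $\alpha^\star$: any terminal path splits at its interior terminals into subpaths that are each either a single terminal--terminal edge or a free terminal path containing no interior terminal, and $\nabla P(v_{0})$ is the $\len$-weighted average of the gradients of these pieces, hence at most their maximum. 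Therefore $\alpha^\star=\max\bigl(\max_{(x,y)\in E\cap(T\times T)}|v_{0}(x)-v_{0}(y)|/\len(x,y),\ \nabla Q(v_{0})\bigr)$ for $Q$ a steepest fixable path; the first term comes from an $O(m)$ scan and $Q$ from one call to \steepestpath\ applied to $G$ with its terminal--terminal edges deleted (which alters neither well-posedness nor any free terminal path), so $\alpha^\star$ is found in $O(m+n\log n)$ expected time by Theorem~\ref{thm:steepestpathalg}. (If $T(v_{0})=V_{G}$ there are no non-terminals, the output is $v_{0}$, and only the edge scan is needed.)

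Next I would verify the two McShane--Whitney extensions. Set $\alpha\defeq\alpha^\star$, $v_{1}\defeq\vLow[\alpha]$, $v_{2}\defeq\vHigh[\alpha]$, where $\vLow[\alpha](x)=\min_{t}\{v_{0}(t)+\alpha\,\dis(x,t)\}$ over terminals $t$ in the component of $x$ (nonempty by well-posedness). Extension: for $x\in T(v_{0})$ and any such $t'$, the shortest $x$--$t'$ path is a terminal path, so its gradient $(v_{0}(x)-v_{0}(t'))/\dis(x,t')$ is at most $\alpha^\star=\alpha$, giving $v_{0}(t')+\alpha\,\dis(x,t')\ge v_{0}(x)$, with equality at $t'=x$; hence $v_{1}(x)=v_{0}(x)$, and the mirror computation gives $v_{2}(x)=v_{0}(x)$. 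Gradient bound: on an edge $(x,y)$, $\dis(x,t)\le\len(x,y)+\dis(y,t)$ yields $v_{1}(x)\le v_{1}(y)+\alpha\,\len(x,y)$ and, symmetrically, the reverse, so $|\grad[v_{1}](x,y)|\le\alpha=\alpha^\star$; the same argument (or applying it to $-v_{0}$) gives $|\grad[v_{2}](x,y)|\le\alpha^\star$. With the first paragraph, $v_{1}$ and $v_{2}$ are both inf-minimizers.

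Finally, the set of complete $v$ extending $v_{0}$ with $\norm{\grad[v]}_{\infty}\le\alpha^\star$ is convex, since the constraints $v|_{T}=v_{0}|_{T}$ and $|\grad[v](x,y)|\le\alpha^\star$ for each edge are affine/linear in $v$; hence $v\defeq\tfrac12(v_{1}+v_{2})$ lies in it, i.e. $v$ extends $v_{0}$ and $\norm{\grad[v]}_{\infty}\le\alpha^\star$, which by optimality of $\alpha^\star$ makes $v$ an inf-minimizer --- exactly what \compinfmin\ returns. For the running time: computing $\alpha^\star$ is $O(m+n\log n)$ expected as above; computing $v_{1}=\vLow[\alpha^\star]$ and $v_{2}=\vHigh[\alpha^\star]$ is $O(m+n\log n)$ each via the modified-Dijkstra routines \compvlow\ and \compvhigh; and averaging is $O(n)$. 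The only randomization lies inside \steepestpath, so the total is $O(m+n\log n)$ in expectation.

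The step I expect to be the main obstacle is the first paragraph: making fully precise that \steepestpath, which is built around \emph{free} terminal paths (steepest fixable paths), together with the terminal--edge scan really recovers $\max_{P}\nabla P(v_{0})$ over \emph{all} terminal paths --- the split-at-interior-terminals averaging argument --- and that deleting terminal--terminal edges to meet the hypothesis of Theorem~\ref{thm:steepestpathalg} neither removes a free terminal path nor destroys well-posedness. Everything after that is the standard Lipschitz-extension bookkeeping plus a one-line convexity observation.
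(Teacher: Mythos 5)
Your proof is correct and follows essentially the same route as the paper, which only sketches this argument: it invokes Lemma~\ref{lem:basics:inf-duality} for the optimal value $\alpha^\star$, cites McShane--Whitney for $\vLow[\alpha^\star]$ and $\vHigh[\alpha^\star]$ being inf-minimizers, and takes their average. You have merely filled in the details the paper leaves to citation --- the decomposition of an arbitrary terminal path at its interior terminals to reduce the computation of $\alpha^\star$ to the edge scan plus \steepestpath, the triangle-inequality verification that $\vLow$ and $\vHigh$ extend $v_0$ with gradient at most $\alpha^\star$, and the convexity step for the average --- all of which are sound.
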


\subsection{Faster Algorithms for Lex-minimization}\label{sec:fastAlg}
The lex-minimizer
has additional structure that allows one to compute it by more efficient
algorithms.  
One observation that leads to a faster implementation is that
  fixing a steepest fixable path does not increase the pressure at vertices,
  provided that one appropriately ignores terminal-terminal edges.
Thus, if $G^{(\alpha)}$ is a subgraph
that we identified with pressure greater than $\alpha,$ we can
iteratively fix all steepest fixable paths $P$ in $G^{(\alpha)}$ with
$\nabla P > \alpha.$ Another simple observation is that if
$G^{(\alpha)}$ is disconnected, we can simply recurse on each of the
connected components. A complete description of an the algorithm
{\compfastlexmin} based on these idea is given in
Algorithm~\ref{alg:comp-fast-lex-min}. The algorithm provably computes
$\lex_{G}(v_{0}),$ and it is possible to implement it so that the
space requirement is only $O(m+n).$ Although, we are unable to prove
theoretical bounds on the running time that are better than
$O(n(m+n\log n))$,
it runs extremely quickly in practice.
We used it to perform the experiments in this paper.
For random regular graphs and Delaunay graphs, with $n = 0.5 \times 10^6$ vertices and around 2 million edges $m \sim 1.5-2 \times 10^6$, it takes a couple of minutes on a 2009 MacBook Pro. Similar times are observed for other model graphs of this size such as random
  regular graphs and real world networks. An implementation of this algorithm may be found at \texttt{https://github.com/danspielman/YINSlex}.




\section{Directed Graphs}
\label{sec:directed}
Our definitions and algorithms, including those for regularization,
extend to directed graphs with only small modifications.  We view
directed edges as diodes and only consider potential differences in
the direction of the edge.  For a complete voltage assignment $v$ on
the vertices of a directed graph $G$, we define the directed gradient
on $(x,y)$ due to $v$ to be
$\grad^{+}_{G}[v](x,y) \defeq \max \left \{
  \frac{v(x)-v(y)}{\len(x,y)}, 0 \right \}. $
Given a partially-labelled directed graph $(G,v_0)$, we say that a a complete
voltage assignment $v$ is a \textbf{lex-minimizer} if it extends
$v_{0}$ and for other complete voltage assignment $v'$ that extends
$v_{0}$ we have $\grad^{+}_{G} [v] \lexless \grad^{+}_{G} [v'].$ We
say that a partially-labelled directed graph $(G,v_0)$ is a \textbf{well-posed} directed instance
if every free vertex appears in a directed path between two terminals.

The main difference between the directed and undirected cases is that
the directed lex-minimizer is not necessarily unique. To maintain
clarity of exposition, we chose to focus on undirected graphs so
far. For directed graphs, we have the following corresponding
structural results.  
\begin{theorem}
\label{thm:basics:directed-existence}
Given a well-posed instance $(G,v_{0})$ on a directed graph $G$, there
exists a lex-minimizer, and the set of all lex-minimizers is a convex
set. Moreover, for every two lex-minimizers $v$ and $v'$, we have
$\grad^{+}_{G} [v]=\grad^{+}_{G} [v']$.
\end{theorem}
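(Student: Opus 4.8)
The plan is to deduce all three claims from two soft structural facts together with some elementary properties of decreasing rearrangements. The structural facts are: for every edge $e=(x,y)$ the map $v \mapsto \grad^{+}_{G}[v](e) = \max\{(v(x)-v(y))/\len(e),0\}$ is convex and piecewise linear, hence continuous, in $v \in \rea^{n}$; and the set of feasible assignments $\{v : v \text{ extends } v_{0}\}$ is an affine subspace of $\rea^{n}$. The rearrangement facts are: the $k$-th largest entry of a vector is a continuous function of it; if $a \le b$ coordinatewise then the decreasing rearrangement of $a$ is coordinatewise at most that of $b$, and in particular $a \lexless b$; and on decreasing rearrangements $\lexless$ is a genuine total order, so $a \lexless b$ and $b \lexless a$ force $a$ and $b$ to have the same rearrangement.

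For existence I would first replace the feasible set by a compact one. Well-posedness guarantees that every connected component of $G$ contains a terminal, so, component by component, I set $m = \min_{t} v_{0}(t)$, $M = \max_{t} v_{0}(t)$, and replace any candidate $v$ by $\bar v$, the coordinatewise median of the constants $m$, $M$ and $v$. A short case analysis, using only that $t \mapsto \max\{t,0\}$ is nondecreasing, shows $\grad^{+}_{G}[\bar v](e) \le \grad^{+}_{G}[v](e)$ for every edge $e$, while $\bar v$ still extends $v_{0}$; hence it suffices to lex-minimize over the compact box $K$ of feasible assignments whose coordinates lie in the relevant $[m,M]$. On $K$ I run the standard iterated-minimum construction: $S_{0}=K$, and given a nonempty compact $S_{k-1}$, let $\mu_{k}$ be the minimum over $v \in S_{k-1}$ of the $k$-th largest entry of $\grad^{+}_{G}[v]$ (attained, by continuity and compactness) and $S_{k}=\{v \in S_{k-1} : \text{$k$-th largest entry of }\grad^{+}_{G}[v]=\mu_{k}\}$, which is again nonempty and compact. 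Any element of $S_{m}$ is a lex-minimizer.

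The crux of the uniqueness and convexity claims is the following averaging lemma: if $a,b \in \rea^{m}$ are nonnegative, $a \neq b$, and $a$ and $b$ have the same decreasing rearrangement, then the decreasing rearrangement of $\tfrac12(a+b)$ is strictly lexicographically smaller. I would prove this by induction on the number of distinct entry values. Let $\lambda$ be the common maximum and let $A,B$ be the sets of coordinates where $a$, respectively $b$, equals $\lambda$; they have the same size $k$. Every coordinate of $\tfrac12(a+b)$ is at most $\lambda$, with equality exactly on $A \cap B$. If $A \neq B$ then $|A \cap B|<k$, so the rearrangement of $\tfrac12(a+b)$ agrees with that of $a$ for $|A \cap B|$ steps (both equal to $\lambda$) and then is strictly below $\lambda$ at position $|A \cap B|+1 \le k$, where the rearrangement of $a$ still equals $\lambda$; this is a strict lexicographic decrease. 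If $A=B$, delete those coordinates: the restrictions of $a$ and $b$ to the complement are nonnegative, unequal, have the same rearrangement and strictly fewer distinct values, so the inductive hypothesis applies, and prepending the $k$ copies of $\lambda$ (which dominate every complement coordinate) preserves the strict inequality.

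Now suppose $v$ and $v'$ are lex-minimizers; set $a = \grad^{+}_{G}[v]$ and $b = \grad^{+}_{G}[v']$. Minimality gives $a \lexless b$ and $b \lexless a$, so $a$ and $b$ have the same decreasing rearrangement. If $a \neq b$, the assignment $w = \tfrac12(v+v')$ extends $v_{0}$ and, by convexity of each $\grad^{+}_{G}[\cdot](e)$, satisfies $\grad^{+}_{G}[w] \le \tfrac12(a+b)$ coordinatewise; combining monotonicity of rearrangements with the averaging lemma gives $\grad^{+}_{G}[w] \lexstless a = \grad^{+}_{G}[v]$, contradicting lex-minimality of $v$. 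Hence $a=b$, which is the gradient-uniqueness claim; write $g^{*}$ for the common value. For convexity, take lex-minimizers $v,v'$ (so $\grad^{+}_{G}[v]=\grad^{+}_{G}[v']=g^{*}$) and $\theta \in [0,1]$, and let $w_{\theta}= \theta v+(1-\theta)v'$, which extends $v_{0}$; convexity gives $\grad^{+}_{G}[w_{\theta}] \le \theta g^{*}+(1-\theta)g^{*}=g^{*}$ coordinatewise, hence $\grad^{+}_{G}[w_{\theta}] \lexless g^{*}$, and since $g^{*}$ is lex-minimal this forces equal rearrangements, so $w_{\theta}$ is a lex-minimizer. The only genuinely delicate step is the averaging lemma; the clamping case analysis, the compactness argument, and the convexity deductions are routine once the rearrangement facts are in place.
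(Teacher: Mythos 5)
Your proof is correct. Note that the paper states Theorem~\ref{thm:basics:directed-existence} without including a proof (none appears in the appendices), so there is no in-paper argument to compare against; the paper's surrounding machinery (Theorem~\ref{thm:basics:directed-meta} and the undirected \meta{} analysis) suggests the authors had a path-fixing argument in mind, whereas yours is a purely variational one. Your route — clamping to a compact box, iterated minimization of the sorted entries for existence, and the strict-averaging lemma for gradient uniqueness and convexity — is self-contained and cleanly exploits the one structural fact that distinguishes the directed gradient, namely that $v \mapsto \grad^{+}_{G}[v](e)$ is convex rather than linear. The key steps all check out: clamping is monotone and nonexpansive so it can only decrease each directed gradient; the $k$-th largest entry is continuous, and coordinatewise domination of nonnegative vectors implies domination of decreasing rearrangements and hence $\lexless$; the averaging lemma's induction on the number of distinct values is sound (the $A\neq B$ case produces the strict drop at position $\abs{A\cap B}+1\le k$, and the $A=B$ case strictly reduces the number of values with a vacuous base case); and the final deductions use only transitivity of the total order on sorted vectors. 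What your approach buys is generality and brevity — it never needs uniqueness of the minimizer (which indeed fails in the directed case) nor any combinatorial analysis of steepest paths; what it gives up is the extra structural information of Theorem~\ref{thm:basics:directed-meta}, which a path-fixing proof would deliver along the way. One cosmetic point: you should state explicitly that you use $\lexstless$ for the strict relation (same rearrangement excluded), since the paper only defines the non-strict $\lexless$.
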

However, note that in the case of directed graphs, the lex-minimizer
need not be unique. We still have a weaker version of
Theorem~\ref{thm:basics:meta} for directed graphs.
\begin{theorem}
\label{thm:basics:directed-meta}
Given a well-posed instance $(G,v_{0})$ on a directed graph $G$, let
$v_{1}$ be the partial voltage assignment extending $v_{0}$ obtained
by repeatedly fixing steepest fixable (directed) paths $P$ with
$\nabla P > 0.$ Then, any lex-minimizer of $(G,v_{0})$ must extend
$v_{1}.$ Moreover, for every edge
$e \in E_{G} \setminus (T(V_{1}) \times T(V_{1})),$ any lex-minimizer
$v$ of $(G,v_{0})$ must satisfy $\grad^{+}[v](e) = 0.$
\end{theorem}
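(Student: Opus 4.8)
The plan is to prove a single-step statement — the directed analogue of Corollary~\ref{cor:fix-path} — and then iterate it, exactly as Theorem~\ref{thm:basics:meta} is obtained in the undirected case. Concretely, I would first show: if $(G,v_0)$ is a well-posed directed instance with $T(v_0)\neq V_G$ and $P$ is a steepest fixable path with $\nabla P(v_0)>0$, then (i) $(G,\fixpath[v_0,P])$ is again a well-posed directed instance, and (ii) every lex-minimizer of $(G,v_0)$ extends $\fixpath[v_0,P]$. Since a lex-minimizer exists (Theorem~\ref{thm:basics:directed-existence}), (ii) says the lex-optimum over the extensions of $v_0$ is attained at an extension of $\fixpath[v_0,P]$; that optimum is therefore also a lex-optimum over the smaller set of extensions of $\fixpath[v_0,P]$, so every lex-minimizer of $(G,v_0)$ is a lex-minimizer of $(G,\fixpath[v_0,P])$. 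Each application of $\fixpath$ promotes all non-terminal interior vertices of $P$ to terminals, so the number of non-terminals strictly decreases; after at most $n$ steps we reach the assignment $v_1$ in which no free terminal path has positive gradient, and by iterating the above, every lex-minimizer of $(G,v_0)$ extends $v_1$ and is a lex-minimizer of $(G,v_1)$.

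For (i), a non-terminal of $\fixpath[v_0,P]$ is a non-terminal $x$ of $v_0$ lying outside $\interior(P)$; by well-posedness of $(G,v_0)$ it lies on a directed terminal path of $(G,v_0)$, whose endpoints remain terminals in $(G,\fixpath[v_0,P])$, so the instance stays well-posed. For (ii), fix a lex-minimizer $v$ and write $\alpha=\nabla P(v_0)$ for $P=(x_0,\dots,x_k)$. Summing over the edges of $P$ gives $\sum_i \len(x_{i-1},x_i)\,\grad^{+}[v](x_{i-1},x_i)\ge \sum_i (v(x_{i-1})-v(x_i)) = v_0(\partial_0 P)-v_0(\partial_1 P)=\alpha\,\len(P)$, so the length-weighted average of the edge gradients along $P$ is at least $\alpha$. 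The main obstacle is ruling out any edge of $P$ carrying gradient strictly larger than $\alpha$: I expect to do this by a local perturbation argument, the same one underlying the undirected Corollary~\ref{cor:fix-path} and Theorem~\ref{thm:basics:max-min} — at an offending vertex (and at least one endpoint of each edge of a free terminal path is a non-terminal, so a perturbable vertex is available) one nudges the value in the direction that strictly decreases the largest incident directed gradient, producing a lexicographically smaller $\grad^{+}$; the only new ingredient relative to the undirected case is bookkeeping the sign cases introduced by the clamp $\grad^{+}[v](x,y)=\max\{(v(x)-v(y))/\len(x,y),0\}$. Once every edge of $P$ carries gradient exactly $\alpha>0$, telescoping along $P$ forces $v(x_i)=v_0(\partial_0 P)-\alpha\,\len(P[\partial_0 P:x_i])=\fixpath[v_0,P](x_i)$ for each interior $x_i$, i.e. $v$ extends $\fixpath[v_0,P]$.

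It remains to prove the ``moreover'' claim. Set $E_1=E_G\setminus(T(v_1)\times T(v_1))$. By Theorem~\ref{thm:basics:directed-existence} all lex-minimizers of $(G,v_1)$ share the same $\grad^{+}$, and since every lex-minimizer of $(G,v_0)$ is one of them, it suffices to exhibit a single lex-minimizer $w$ of $(G,v_1)$ with $\grad^{+}[w](e)=0$ for all $e\in E_1$. Such a $w$ is a complete extension of $v_1$ with $w(x)\le w(y)$ for every directed edge $(x,y)\in E_1$ — a system of difference constraints with the terminal values pinned. I would show it is feasible by the standard cycle/path criterion: infeasibility would produce terminals $s,t$ joined by a directed path using only $E_1$-edges with $v_1(s)>v_1(t)$; such a path has length at least two (a single edge between two terminals is not in $E_1$), and choosing one with the fewest interior terminals forces it to have no interior terminal (otherwise split it at one and keep the half on which $v_1$ strictly decreases, contradicting minimality), so it is a free terminal path of $(G,v_1)$ with positive gradient — contradicting the defining property of $v_1$. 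Finally, for such a $w$ the multiset $\{\grad^{+}[w](e)\}_e$ consists of the fixed nonnegative terminal--terminal gradients together with $|E_1|$ zeros, while for any other complete extension $w'$ of $v_1$ the multiset $\{\grad^{+}[w'](e)\}_e$ has the same terminal--terminal entries and nonnegative $E_1$-entries; since replacing zeros by nonnegative values cannot decrease any order statistic, the decreasing rearrangement of $\grad^{+}[w]$ is pointwise $\le$ that of $\grad^{+}[w']$, and as all entries are nonnegative this gives $\grad^{+}[w]\lexless\grad^{+}[w']$. Hence $w$ is a lex-minimizer of $(G,v_1)$, and every lex-minimizer of $(G,v_0)$ has this same $\grad^{+}$, which vanishes on $E_1$. $\Box$
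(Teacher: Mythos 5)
The paper itself never proves Theorem~\ref{thm:basics:directed-meta} (Section~\ref{sec:directed} states it and the appendix only treats the undirected case), so there is no official proof to compare against; judged on its own, your plan is a sound reconstruction that parallels the paper's undirected development: a directed analogue of Corollary~\ref{cor:fix-path} iterated as in Theorem~\ref{thm:basics:meta}, followed by a separate argument for the residual zero-gradient edges. The well-posedness step, the averaging inequality $\sum_i \len(x_{i-1},x_i)\grad^+[v](x_{i-1},x_i)\ge \alpha\,\len(P)$, the feasibility argument for the monotone extension $w$ (via the minimal-interior-terminal path contradiction), and the final order-statistic comparison showing $w$ is a lex-minimizer of $(G,v_1)$ are all correct; the appeal to Theorem~\ref{thm:basics:directed-existence} to transfer $\grad^+[w]|_{E_1}=0$ to every lex-minimizer of $(G,v_0)$ is legitimate provided that theorem is proved independently.

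The one place where what you wrote does not yet constitute a proof is the exclusion of an edge of $P$ with $\grad^{+}[v]>\alpha$. A single local perturbation cannot do this: nudging an endpoint of the offending edge may transfer the large gradient to an adjacent edge rather than eliminate it. What the perturbation actually yields is the directed balance condition at every free vertex $x$, namely $\max_{y:(x,y)\in E}\grad^{+}[v](x,y)=\max_{y:(y,x)\in E}\grad^{+}[v](y,x)$ (decreasing $v(x)$ lowers or clamps the out-gradients and raises or clamps the in-gradients, and a strict imbalance gives a lexicographic improvement). To finish you must then chain, exactly as in the second half of the proof of Theorem~\ref{thm:basics:max-min}: let $\beta$ be the maximum of $\grad^{+}[v]$ over edges with a free endpoint; if $\beta>\alpha$, the balance condition propagates a gradient-$\beta$ edge forward and backward through free vertices, and since $v$ strictly decreases along the chain it terminates at terminals, producing a free terminal path of gradient $\beta>\alpha$ and contradicting the steepness of $P$. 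Only then do you get $\grad^{+}[v]\le\alpha$ on every edge of $P$, and hence equality and the telescoping identification with $\fixpath[v_0,P]$. You point at the right precedent, but the chaining step is the actual content and should be written out; once it is, the argument is complete.
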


When the value of the lex-minimizer at a vertex is not uniquely
determined, it is constrained to an interval. In our experiments, we
pick the convention that when the voltage at a vertex is constrained
to an interval $({-\infty},a]$ or $[a,\infty)$, we assign $a$ to the
terminal. When it is constrained to a finite interval, we assign a
voltage closest to the median of the original
voltages. 

\section{Experiments on WebSpam}
\label{sec:experiments}
We demonstrate the performance of our lex-minimization algorithms on directed graphs
  by using them to detect spam webpages as in  \cite{Zhou2007}.
We use the dataset \texttt{webspam-uk2006-2.0} described in \cite{Castillo2006}. 
This collection includes 11,402 hosts, out of which 7,473 (65.5~\%) are labeled, either as \texttt{spam} or \texttt{normal}. 
Each host corresponds to the collection of web pages it serves.
Of the hosts, 1924 are labeled spam (25.7~\% of all labels). 
We consider the problem of flagging some hosts as spam, given only a small fraction of the labels for training.
We assign a value of $1$ to the spam hosts, and a value of $0$ to the normal ones.
We then compute a lex minimizer and examine the effect of flagging as spam all hosts with a value greater than some threshold.

Following \cite{Zhou2007}, we create edges between hosts with lengths equal to the reciprocal of 
  the number of links from one to the other. 
We run our experiments only on the largest strongly connected component of the graph, which contains 7945 hosts of which 5552 are labeled.
16~\% of the nodes in this subgraph are labeled \texttt{spam}.
To create training and test data, for a given value $p$, we select a random subset of $p$~\% of the \texttt{spam} labels and a random subset of 
  $p$~\% of the \texttt{normal} labels to use for training. 
The remaining labels are used for testing. We report results for $p = 5$ and $p = 20$.

Again following \cite{Zhou2007}, we plot the precision and recall of different choices of threshold for flagging pages as spam.
\emph{Recall} is the fraction of spam pages our algorithm flags as spam, and \emph{precision}
  is the fraction of pages our algorithm flags as spam that actually are spam.
Amongst the algorithms studied by \cite{Zhou2007}, the top performer was
  their algorithm based on sampling according to a random-walk that
  follows in-links from other hosts.   
We compare their algorithm with the classification we get by 
  directing edges in the opposite directions of links.
This has the effect that a link to a spam host is evidence of spamminess,
  and a link from a normal host is evidence of normality.
  
Results are shown in Figure~\ref{fig:experiment_image}.
  While we are not able to reliably flag all spam hosts, we see that in the range of {10-50~\%} recall,
  we are able to flag spam with precision above 82~\%.
  We see that the performance of directed lex-minimization does not degrade rapidly when from the ``large training set" regime of  $p = 20$, to the ``small training set" regime of $p = 5$.
\begin{figure}[ht]
\vspace{-0in}
    \centering
    \includegraphics[scale=0.4]{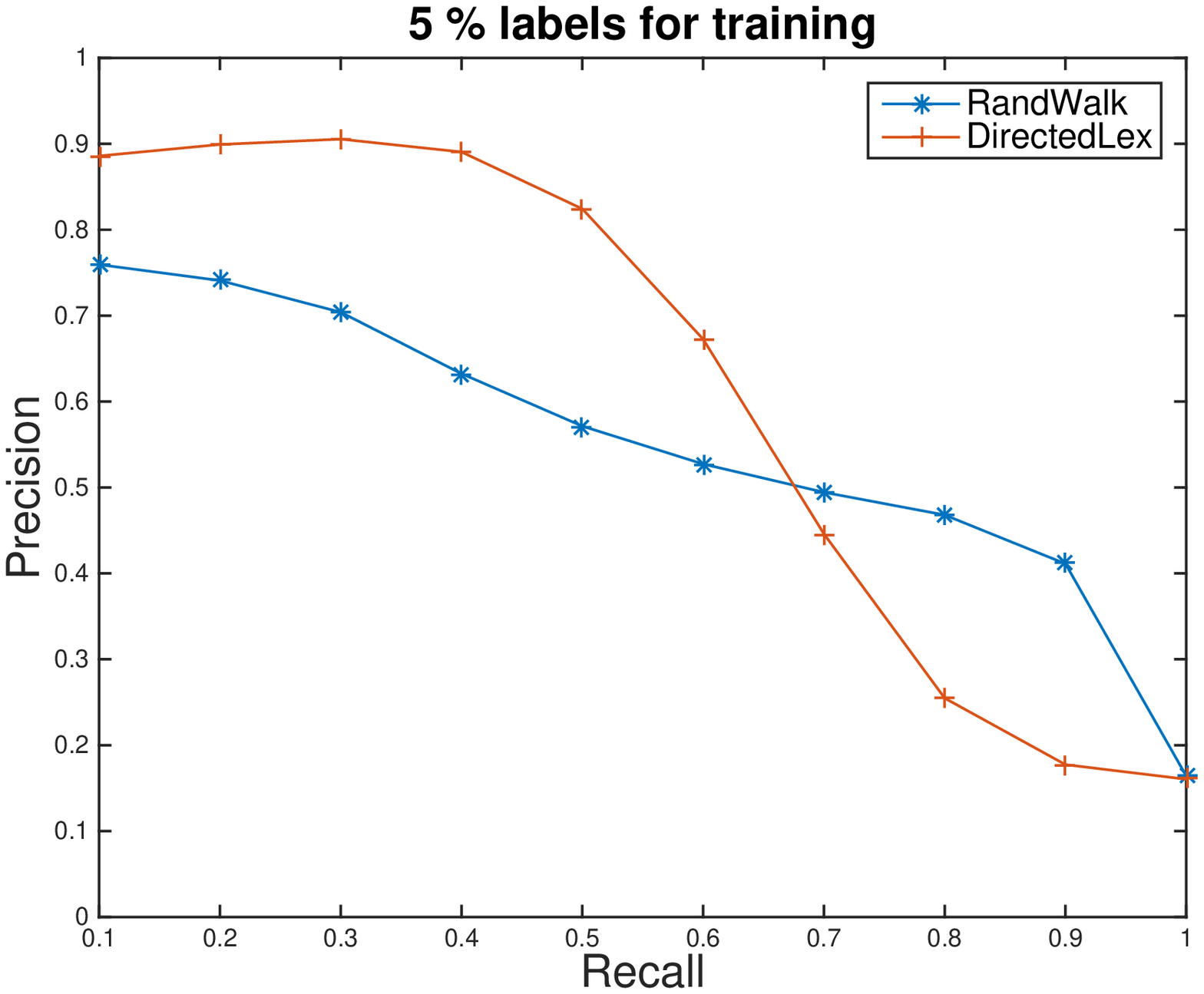}
\includegraphics[scale=0.4]{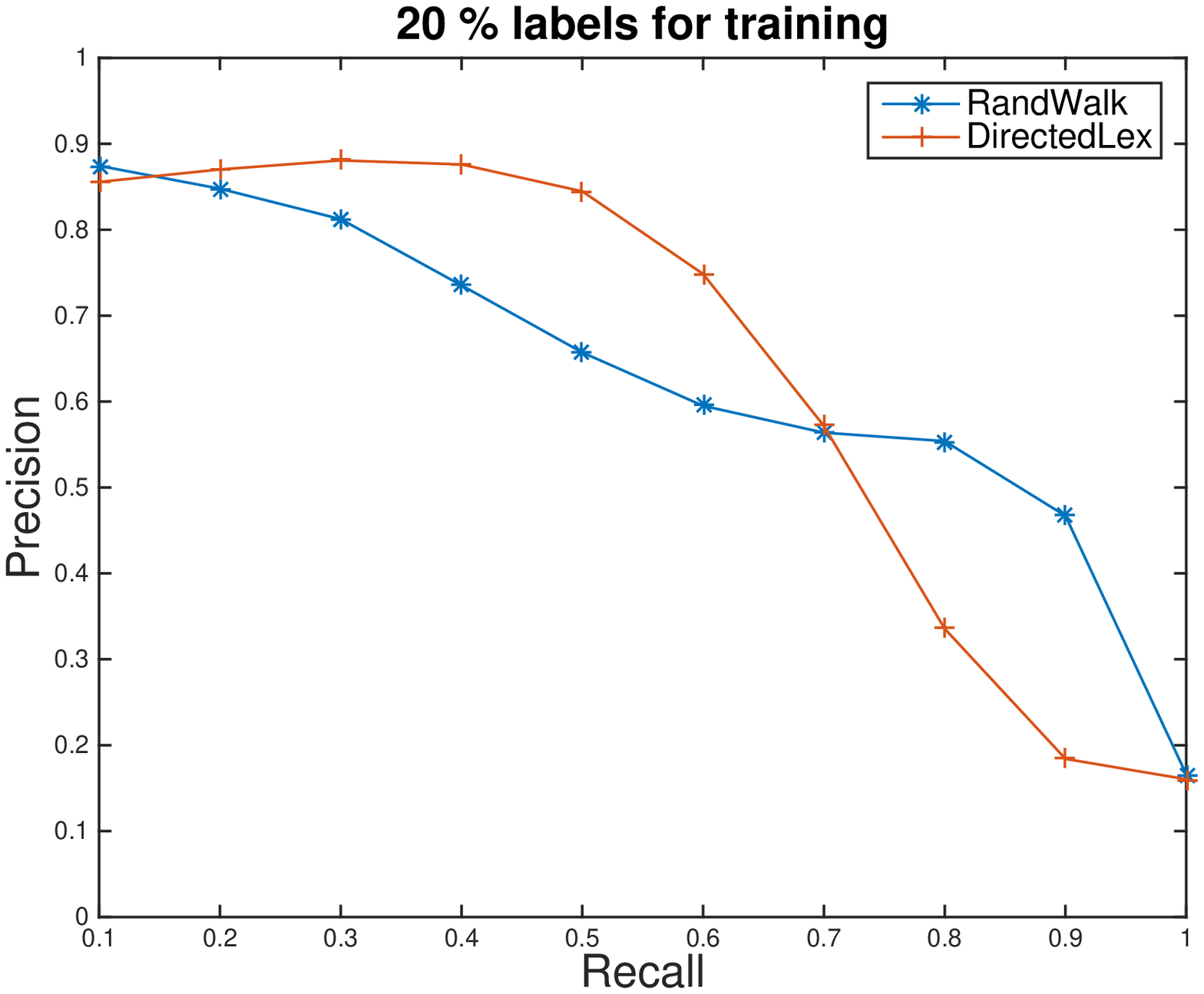}
\vspace{-0in}
\caption{Recall and precision in the web spam classification experiment.
  Each data point shown was computed as an average over 100 runs.
  The largest standard deviation of the mean precision across the plotted recall values was less than 1.3~\%.
  The algorithm of \cite{Zhou2007} appears as \textsc{RandWalk}.
  Our directed lex-minimization algorithm appears as \textsc{DirectedLex}. }
    \label{fig:experiment_image}
\end{figure}

For comparison, in Appendix~\ref{sec:all-directions-comparison}, we show the performance of our algorithm and that of \cite{Zhou2007} both with link directions reversed, as well as the performance of undirected lex-minimization and Laplacian inference, all of which are significantly worse.

\section{$l_{0}$-Regularization of Vertex Values}

We now explain how we can accommodate noise in both the given voltages and in the given lengths of edges.
We can find the minimum number of labels to ignore, or the minimum increase in edges lengths
  needed so that there exists an extension whose gradients have $l_{\infty}$-norm lower than a given target.
After determining which labels to ignore or the needed increment in edge lengths, we recommend
  computing a lex minimizer.

The algorithms we present in this section are essentially the same for directed and undirected graphs.



\subsection{$l_{0}$-Vertex Regularization for Inf-minimization}
\label{sec:l0reg}

The $l_{0}$-regularization of vertex labels can be viewed as a problem of outlier removal:
  the vector we compute is allowed to disagree with $v_{0}$ on up to $k$ terminals.
Given a voltage assignment $v$ and a subset $T \subset V$ of the vertices, by $v(T)$ we mean the vector obtained by restricting $v$ to $T$. We define the \emph{$l_{0}$-Vertex Regularization for $l_{\infty}$} problem to be
\vspace{-6pt}
\begin{equation}\label{eq:outlier-min-fixed-k}
 \min_{v \in \Reals{n}} \,\,
		 \norm{\grad_{G}[v]}_{\infty}
\qquad 
		 \text{subject to } \norm{v (T) -v_{0} (T)}_{0} \leq k,
\end{equation}
where $v (T)$ is the vector of values of $v$ on the terminals $T$.

In Appendix~\ref{sec:l0-proofs}, we describe an approximation algorithm \approxoutlieralg \ that approximately solves program~\eqref{eq:outlier-min-fixed-k}. The precise statement we prove in Appendix~\ref{sec:l0-proofs} is given in the following theorem.
\begin{theorem}[Approximate $l_{0}$-vertex regularization]
\label{thm:l0-approx}
The algorithm \approxoutlieralg \ takes a positive integer $k$ and a partially-labeled graph $(G,v_{0})$, and outputs an assignment $v$ with $\norm{v (T) -v_{0} (T)}_{0} \leq 2 k$, and $ \norm{\grad_{G}[v]}_{\infty} \leq \alpha^*$, where $\alpha^*$ is the optimum value of program~\eqref{eq:outlier-min-fixed-k}. The algorithm runs in time $O(k (m + n \log n))$.
\end{theorem}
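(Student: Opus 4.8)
The plan is to recast $l_0$-vertex regularization as the problem of hitting all ``too-steep'' terminal paths, and to run a greedy vertex-cover-style $2$-approximation whose only primitive is the steepest-terminal-path routine from Section~\ref{sec:infAlg}. Concretely, \approxoutlieralg\ maintains a set $S\subseteq T$ of proclaimed outliers, starting with $S=\emptyset$, and repeats the following $k$ times: using the steepest-fixable-path machinery together with a scan of the terminal--terminal edges (exactly as inside \compinfmin), compute a steepest \emph{terminal} path $P$ in the current instance $(G,v_0|_{T\setminus S})$, with endpoints $s,t$ and gradient $\nabla P$; if $\nabla P=0$ stop, and otherwise set $S\gets S\cup\{s,t\}$. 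It then returns the inf-minimizer $v$ of $(G,v_0|_{T\setminus S})$ produced by \compinfmin, assigning an arbitrary constant on any connected component stripped of all its terminals. Since each of the $k$ rounds adds at most two vertices, $\abs{S}\le 2k$, so $\norm{v(T)-v_0(T)}_0\le 2k$, and the running time is $k$ invocations of the $O(m+n\log n)$-expected-time steepest-path routine plus one invocation of \compinfmin, i.e.\ $O(k(m+n\log n))$ expected by Theorems~\ref{thm:steepestpathalg} and~\ref{thm:lineartimeinfmin}.

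It remains to show $\norm{\grad_G[v]}_\infty\le\alpha^*$. Write $P_i,s_i,t_i,\alpha_i:=\nabla P_i$ for the round-$i$ quantities, $S_i:=\{s_1,t_1,\dots,s_i,t_i\}$, and let $\alpha_{k+1}$ be the gradient of a steepest terminal path of the final instance $(G,v_0|_{T\setminus S_k})$. By Lemma~\ref{lem:basics:inf-duality}, the returned assignment satisfies $\norm{\grad_G[v]}_\infty=\alpha_{k+1}$, so it suffices to prove $\alpha_{k+1}\le\alpha^*$. Two elementary facts drive the argument. First, $\alpha_1\ge\alpha_2\ge\cdots\ge\alpha_{k+1}$: removing terminals can only delete terminal paths (a terminal path needs both endpoints to be terminals, and the $v_0$-values along surviving paths are unchanged), so the maximum gradient is nonincreasing across rounds. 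Second, the pairs $\{s_1,t_1\},\dots,\{s_{k+1},t_{k+1}\}$ are pairwise disjoint subsets of the original $T$: once $s_i,t_i$ are put into $S$ they are no longer terminals, hence cannot reappear as endpoints of any later $P_j$.

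Now suppose for contradiction that $\alpha_{k+1}>\alpha^*$, and let $v^*$ be an optimal solution of~\eqref{eq:outlier-min-fixed-k} with outlier set $S^*:=\{x\in T: v^*(x)\neq v_0(x)\}$, $\abs{S^*}\le k$. Fix $i\in\{1,\dots,k+1\}$. By monotonicity, $\alpha_i\ge\alpha_{k+1}>\alpha^*$. If neither $s_i$ nor $t_i$ lay in $S^*$, then $v^*$ would agree with $v_0$ at both endpoints of $P_i$, and telescoping along $P_i=(x_0=s_i,\dots,x_\ell=t_i)$ gives $\norm{\grad_G[v^*]}_\infty\ge \abs{v_0(s_i)-v_0(t_i)}/\len(P_i)=\alpha_i>\alpha^*$, contradicting optimality of $v^*$ (this is the easy direction of Lemma~\ref{lem:basics:inf-duality}). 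Hence $S^*\cap\{s_i,t_i\}\neq\emptyset$ for every such $i$; combined with the pairwise disjointness of the $k+1$ pairs, this forces $\abs{S^*}\ge k+1$, a contradiction. Therefore $\alpha_{k+1}\le\alpha^*$, which completes the proof.

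I expect the only genuine subtlety to be the bookkeeping compressed into ``compute a steepest terminal path, handling terminal--terminal edges exactly as in \compinfmin'': one must check that a steepest terminal path can always be chosen with terminals only at its two endpoints (split it at any interior terminal; by the mediant inequality one of the two pieces is at least as steep), so that the task really reduces to one steepest-fixable-path computation plus a linear scan of terminal--terminal edges, and one must also dispose, harmlessly, of components that lose all terminals during the $k$ rounds (they contribute gradient $0\le\alpha_{k+1}$). Everything else is the vertex-cover double-counting above, so I would organize the write-up around the two elementary facts (monotonicity and pairwise disjointness) together with the single telescoping inequality.
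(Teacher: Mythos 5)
Your proposal is correct and is essentially the paper's own proof: the same greedy algorithm (repeatedly remove both endpoints of a steepest terminal path, at most $k$ times, then run \compinfmin), justified by the same maximal-matching-based $2$-approximation argument, with the same $O(k(m+n\log n))$ expected-time accounting. The only difference is presentational — you inline the vertex-cover duality as a direct disjoint-pairs counting argument against the optimal outlier set $S^*$, whereas the paper routes the identical reasoning through the $\alpha^*$-pressure terminal graph, Lemma~\ref{lem:vc-l0}, and the generic greedy vertex-cover bound of Theorem~\ref{thm:vc-approx}.
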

In Appendix~\ref{sec:l0-proofs}, we also describe an algorithm \outlieralg \ that exactly solves program~\eqref{eq:outlier-min-fixed-k} in polynomial time, and we prove its correctness.
\begin{theorem}[Exact $l_{0}$-vertex regularization]
\label{thm:l0-exact}
The algorithm \outlieralg \ takes a positive integer $k$ and a partially-labeled graph $(G,v_{0})$ solves program~\eqref{eq:outlier-min-fixed-k} exactly. The algorithm runs in polynomial time.
\end{theorem}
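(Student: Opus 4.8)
The plan is to fix the target $\ell_\infty$-value and reduce the outlier-removal question to a minimum vertex cover problem on a highly structured graph. First I would precompute, by running Dijkstra from each terminal, the shortest-path distances $\dis(s,t)$ for all $s,t \in T$. Fix a candidate threshold $\alpha \ge 0$. I claim there is an assignment $v$ with $\norm{v(T)-v_0(T)}_0 \le k$ and $\norm{\grad_G[v]}_\infty \le \alpha$ if and only if one can choose a set $S \subseteq T$ with $\abs{S} \le k$ so that the residual instance $(G, v_0|_{T\setminus S})$ admits an extension of $\ell_\infty$-gradient at most $\alpha$. By Lemma~\ref{lem:basics:inf-duality} the latter holds iff every terminal path of the residual instance has gradient at most $\alpha$; a terminal path survives the deletion of $S$ exactly when neither of its endpoints lies in $S$, and among all terminal paths between two fixed terminals $s,t$ the steepest has gradient $(v_0(s)-v_0(t))/\dis(s,t)$. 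So the condition is precisely that $S$ ``covers'' every \emph{bad pair}, i.e.\ every pair $\{s,t\}$ with $v_0(s)-v_0(t) > \alpha\cdot\dis(s,t)$. (Connected components of $G$ that lose all of their terminals after deleting $S$ are trivially satisfied by a constant assignment, so they never obstruct feasibility.) Hence, for fixed $\alpha$, feasibility is equivalent to the graph $H_\alpha$ on vertex set $T$ whose directed edges are the ordered bad pairs $s \to t$ having a vertex cover of size at most $k$.

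The crucial observation is that $H_\alpha$ is a \emph{transitively closed directed acyclic graph}. It is acyclic because $s\to t$ forces $v_0(s)-v_0(t) > \alpha\,\dis(s,t) \ge 0$, so $v_0$ strictly decreases along every arc; and it is transitively closed because if $s\to t$ and $t\to u$ then adding the two defining strict inequalities and using the triangle inequality $\dis(s,t)+\dis(t,u)\ge\dis(s,u)$ yields $v_0(s)-v_0(u) > \alpha\,\dis(s,u)$, i.e.\ $s\to u$. Consequently the underlying undirected graph of $H_\alpha$ is a comparability graph, and on comparability (hence perfect) graphs a minimum vertex cover is computable in polynomial time: a maximum independent set of such a graph is a maximum antichain of the associated partial order, which by Dilworth's theorem has the size of a minimum chain partition and is extracted from a single bipartite matching. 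This is exactly the ``special case of minimum vertex cover solvable in polynomial time'' promised in the introduction.

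It remains to optimize over $\alpha$. For the optimal choice of outliers, the optimum value $\alpha^\ast$ of \eqref{eq:outlier-min-fixed-k} is the gradient of some surviving terminal path, hence equals $(v_0(s)-v_0(t))/\dis(s,t)$ for some pair $s,t\in T$, so there are only $O(\abs{T}^2)$ candidate values of $\alpha$ to consider. Moreover the edge set of $H_\alpha$ only shrinks as $\alpha$ grows, so the minimum vertex cover size of $H_\alpha$ is monotone non-increasing in $\alpha$, and $\alpha^\ast$ is the smallest candidate value for which $H_\alpha$ has a vertex cover of size at most $k$. The algorithm \outlieralg\ therefore computes the metric once, sorts the $O(\abs{T}^2)$ candidate thresholds, binary searches among them for this smallest feasible value (one comparability-graph minimum-vertex-cover computation per probe), and outputs the corresponding cover $S$ as the set of removed terminals; as recommended elsewhere, one then computes a lex-minimizer on $(G, v_0|_{T\setminus S})$. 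Correctness is immediate from the equivalence above, and the running time is polynomial since each of the $O(\log\abs{T})$ probes is a single bipartite matching on a graph with $\abs{T}$ vertices. The directed case is handled identically, replacing terminal paths and $\dis$ by their directed analogues.

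The main obstacle is the structural claim that $H_\alpha$ is a transitively closed DAG together with the consequent polynomial solvability of minimum vertex cover on it; the duality-based reduction and the search over thresholds are routine, but some care is needed with the edge cases around well-posedness of the residual instance (handled above by observing terminal-free components are satisfied by constants) and with checking that $\alpha^\ast$ is attained at one of the enumerated values $(v_0(s)-v_0(t))/\dis(s,t)$.
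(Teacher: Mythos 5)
Your proposal is correct and follows essentially the same route as the paper: reduce feasibility at a fixed threshold $\alpha$ to minimum vertex cover on the graph of ``bad'' terminal pairs (the paper's $\alpha$-pressure terminal graph), observe it is a transitively closed DAG so Fulkerson's matching-based algorithm applies, and binary-search over the $O(\abs{T}^2)$ candidate gradients using monotonicity of the edge set in $\alpha$. The only cosmetic differences are that you prove transitivity by summing the two strict inequalities rather than via the mediant inequality, and you explicitly note the well-posedness edge case for components that lose all terminals.
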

We give a proof of Theorem~\ref{thm:l0-exact} in Appendix~\ref{sec:l0-proofs}. To do this, we reduce the program~\eqref{eq:outlier-min-fixed-k} to the problem of minimizing the required $l_0$-budget needed to achieve a fixed gradient $\alpha$ using a binary search over a set of $O(n^2)$ gradients. This latter problem we reduce in polynomial time to Minimum Vertex Cover (VC) on a transitively closed, directed acyclic graph (a TC-DAG). VC on a TC-DAG can be solved exactly in polynomial time by a reduction to the Maximum Bipartite Matching Problem (\cite{Fulkerson56noteon}). The problem was phrased by Fulkerson as one of finding a maximum antichain of a finite poset. Any transitively closed DAG corresponds directly to the comparability graph of a poset. A maximum antichain of a poset is a maximum independent set of a the comparability graph of the poset, and hence its complement is a minimum vertex cover of the comparability graph. We refer to the algorithm developed by Fulkerson as \tcdagvc.
\begin{theorem}
\label{thm:minvc-tcdag}
The algorithm \tcdagvc \ computes a minimum vertex cover for any transitively closed DAG $G$ in polynomial time.
\end{theorem}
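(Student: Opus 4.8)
The plan is to realize \tcdagvc{} as a reduction to maximum bipartite matching, exploiting the correspondence between transitively closed DAGs and strict partial orders. First I would observe that a transitively closed DAG $G=(V,E)$ on $n$ vertices is exactly a strict partial order $\prec$ on $V$: acyclicity yields irreflexivity and asymmetry, while transitive closure yields transitivity. Hence the underlying undirected graph of $G$ is precisely the comparability graph of $\prec$, and a set $S\subseteq V$ is a vertex cover of $G$ if and only if $V\setminus S$ is an antichain. Therefore a minimum vertex cover of $G$ is exactly the complement of a maximum antichain, and it suffices to compute a maximum antichain in polynomial time and output its complement.

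Next I would invoke Dilworth's theorem: the size of a maximum antichain equals the minimum number of chains needed to partition $V$. To make this algorithmic, build the bipartite graph $H$ with parts $V^-=\{v^-:v\in V\}$ and $V^+=\{v^+:v\in V\}$ and an edge $\{u^-,v^+\}$ whenever $(u,v)\in E$. A matching $M$ in $H$ has at most one edge at each $u^-$ and each $v^+$, so the matched pairs decompose $V$ into $n-|M|$ vertex-disjoint directed paths; transitive closure guarantees each such path is a genuine chain of $\prec$. Computing a maximum matching $M^\star$ in $H$ takes polynomial time (e.g.\ by Hopcroft--Karp), and by Dilworth's theorem the minimum chain partition has $n-|M^\star|$ chains, so the maximum antichain has size $n-|M^\star|$.

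To recover an explicit maximum antichain (not just its cardinality), I would apply the constructive form of K\"onig's theorem to $H$: from $M^\star$ one extracts in linear time, via alternating BFS from the unmatched left vertices, a minimum vertex cover $C$ of $H$ with $|C|=|M^\star|$. Set $A=\{x\in V: x^-\notin C \text{ and } x^+\notin C\}$. If $u\prec v$ then $\{u^-,v^+\}\in E(H)$ is covered by $C$, so at least one of $u,v$ lies outside $A$; hence $A$ is an antichain. Since $C\subseteq V^-\cup V^+$ with the two sides disjoint, at most $|C|=|M^\star|$ vertices $x$ satisfy $x^-\in C$ or $x^+\in C$, so $|A|\ge n-|M^\star|$. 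Together with the trivial half of Dilworth's theorem (any antichain meets each chain of a chain-partition at most once, so no antichain exceeds $n-|M^\star|$), this forces $|A|=n-|M^\star|$, i.e.\ $A$ is a maximum antichain. The algorithm \tcdagvc{} then returns $V\setminus A$, which is a minimum vertex cover of $G$; every step above is polynomial.

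The main obstacle is this last constructive step: turning the numerical identity ``maximum antichain $=n-|M^\star|$'' into an explicit optimal antichain, and hence an explicit optimal vertex cover. This is precisely where the transitive-closure hypothesis is indispensable — it is what makes the matched directed paths into chains and the underlying undirected graph into a (perfect) comparability graph — and it is handled by combining the easy ``weak duality'' direction of Dilworth's theorem with the alternating-path extraction of a K\"onig cover from $M^\star$; the remaining bookkeeping (counting the vertices of $V$ touched by $C$, and verifying running times) is routine.
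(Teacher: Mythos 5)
Your proposal is correct and follows exactly the route the paper takes, namely Fulkerson's reduction: identify the transitively closed DAG with the comparability graph of a poset, note that a minimum vertex cover is the complement of a maximum antichain, and extract that antichain via maximum bipartite matching together with a K\"onig cover. The paper itself only cites \cite{Fulkerson56noteon} for this, so your write-up simply supplies the details of the same argument; the details check out, including the use of transitive closure to ensure the matched paths are chains and the chain partition has size $n-|M^\star|$.
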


\subsection{Hardness of $l_{0}$ regularization for $l_{2}$}
\label{sec:l2hardness}

The result that $l_{0}$-regularized inf-minimization can be solved exactly in polynomial time is surprising, especially because the analogous problem for 2-Laplacian minimization turns out to be NP-Hard.

We define the 
    the \emph{$l_{0}$ vertex regularization for $l_{2}$} for a partially-labeled graph $(G, v_{0})$ and an integer $k$ by
\[
\min_{v \in \rea^{n} : \norm{v (T) -v_{0} (T)}_{0} \leq k} v^{T} L v,
\]
where $L$ is the Laplacian of $G$.
\begin{theorem}
\label{thm:l0_l2_nphard}
$l_{0}$ vertex regularization for $l_{2}$ is NP-Hard.
\end{theorem}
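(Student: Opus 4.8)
The plan is to reduce from \textsc{Minimum Bisection}, which is well known to be NP-hard: given a graph $H=(V_H,E_H)$ on an even number $n$ of vertices and an integer $\beta$, decide whether there is $U\subseteq V_H$ with $|U|=n/2$ and at most $\beta$ edges of $H$ crossing $(U,V_H\setminus U)$. From such an instance I would build a partially-labeled graph $(G,v_0)$, a budget $k$, and a threshold $\theta$ so that the optimum of the program in the theorem is at most $\theta$ iff $H$ has a bisection of cost at most $\beta$. The graph $G$ has three kinds of vertices: for each $u\in V_H$ a terminal $t_u$ with $v_0(t_u)=0$; for each edge $e=(u,v)\in E_H$ a \emph{non-terminal} vertex $c_e$ joined to $t_u$ and $t_v$ by unit-weight edges; and for each $u$ a ``benefit gadget'' consisting of terminals $b_u,b_u'$ with $v_0(b_u)=v_0(b_u')=1$, an edge $(t_u,b_u)$ of weight $W$, and an edge $(b_u,b_u')$ of weight $W^2$, where $W$ is a sufficiently large polynomial in $n$ (say $W=n^5$). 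We set $k:=n/2$ and $\theta:=\frac{n}{2}W+\frac{\beta}{2}+\frac14$. Then $G$ has $O(n+|E_H|)$ vertices with polynomially bounded integer weights, so the reduction is polynomial.

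The core computation is the energy of the solution $v$ that frees exactly the terminals $\{t_u:u\in U\}$ for some $U$ with $|U|=k$ and assigns every other vertex its energy-minimizing value. Eliminating the free vertices, a freed $t_u$ sits at $1-O(\deg_H(u)/W)=1-O(n/W)$ (it is pulled toward $1$ with weight $W$ by $b_u$, and only toward $\approx 0$ with total weight $\deg_H(u)$ by the incident $c_e$'s), while each $c_e$ sits at the average of its two neighbours. Hence the benefit gadget of $u$ contributes $W$ when $t_u$ is not freed and $O(n^2/W)$ when it is; and $c_e$ contributes $\approx\frac14+\frac14=\frac12$ when exactly one of $t_u,t_v$ is freed and $\approx 0$ otherwise. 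Summing, the total energy equals $(n-|U|)W+\frac12\,\mathrm{cut}_H(U)+\mathrm{err}$, where $\mathrm{cut}_H(U)$ counts $H$-edges with exactly one endpoint in $U$ and the accumulated error is $O(n^3/W)<\frac18$ for $W=n^5$. In particular, a bisection $U$ of cost at most $\beta$ yields a feasible solution of energy at most $\frac{n}{2}W+\frac{\beta}{2}+\frac18\le\theta$.

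For the converse I would show the optimal deletion set must have this form. Freeing $t_u$ lowers its benefit gadget from $W$ to $O(n^2/W)$ at the cost of one deletion and at most $\frac12\deg_H(u)\le\frac{n}{2}\ll W$ of extra $c_e$-energy, so the net gain of freeing any $t_u$ is strictly positive; freeing $b_u$ or $b_u'$ gains essentially nothing, since the weight-$W^2$ edge keeps $b_u$ pinned near $1$ (and $v(b_u')=v(b_u)$), and the $c_e$ are non-terminals and cannot be deleted. A standard exchange argument then shows an optimal solution spends its entire budget deleting $t_u$'s, i.e.\ it frees $\{t_u:u\in U\}$ for some $U$ with $|U|=k=n/2$, and by the core computation its energy is $\frac{n}{2}W+\frac12\,\mathrm{cut}_H(U)\pm\frac18$. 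Since $\mathrm{cut}_H(U)$ is an integer, this is at most $\theta=\frac{n}{2}W+\frac{\beta}{2}+\frac14$ exactly when $\mathrm{cut}_H(U)\le\beta$, so $U$ is the desired bisection, completing the reduction.

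The step I expect to be the main obstacle is error control. Because a freed terminal cannot be pinned exactly to $1$ with finite edge weights, freeing $t_u$ perturbs $v(t_u)$ and hence the contributions of all $\deg_H(u)$ incident $c_e$-gadgets and of the benefit gadget; one must verify that, accumulated over all $O(n^2)$ gadgets, these perturbations stay below $\frac12$ — the ``quantum'' by which $\frac12\,\mathrm{cut}_H(U)$ changes — which is precisely what forces $W$ to be a large polynomial and what $\theta$ is tuned against. A secondary technical point is making the exchange argument precise: one shows any feasible solution can be transformed, without increasing the energy and without using more deletions, into one deleting only $t_u$-type terminals, and then that using fewer than $n/2$ such deletions is suboptimal because $W$ dominates every penalty $\frac12\deg_H(u)$.
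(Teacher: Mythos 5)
Your reduction is from minimum bisection, exactly as in the paper, and the architecture is the same: a budget of $k=n/2$ deletions, a heavy anchor per original vertex worth $\approx W$ that can only be discharged by spending a deletion on that vertex, and the $H$-edges supplying a lower-order term proportional to the cut, so that the optimum is $\tfrac{n}{2}W+\Theta(\mathrm{cut})\pm o(1)$. The gadgets differ in implementation (the paper anchors each $v\in V$ to a copy $\hat v$ pinned at $0$ by a weight-$B^{3}$ clique on $\Vhat$ and keeps the $H$-edges direct, so a cut edge costs $\approx 1$; you anchor via a pendant pair $b_u,b_u'$ and subdivide each $H$-edge by a non-terminal $c_e$, so a cut edge costs $\approx 1/2$), but this is not a different route. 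Your ``yes'' direction and the error budget with $W=n^{5}$ are fine, and your handling of the other subtle point — that the inner minimization is the harmonic extension, so a freed $t_u$ cannot be parked at $0$ to dodge the cut charge without paying $W$ in its anchor; formally $\min_x\bigl(W(1-x)^2+\tfrac{\mathrm{cut}_u}{2}x^2\bigr)=\tfrac{\mathrm{cut}_u}{2}\cdot\tfrac{W}{W+\mathrm{cut}_u/2}\ge\tfrac{\mathrm{cut}_u}{2}-\tfrac{n^2}{4W}$ — is implicitly correct.

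There is, however, one concrete hole in your exchange argument. You dismiss deletions of $b_u$ or $b_u'$ on the grounds that the weight-$W^{2}$ edge keeps $b_u$ pinned near $1$; that is true only when at most one vertex of the pair is deleted. If a solution deletes \emph{both} $b_u$ and $b_u'$ while keeping $t_u=0$, the whole pendant gadget floats to $0$ and contributes $0$ instead of $\approx W$, i.e.\ it saves the full anchor cost — these deletions emphatically do not ``gain essentially nothing.'' The reduction survives, but only via a per-deletion efficiency count that you do not state: zeroing a gadget this way costs two deletions for a saving of $W$, whereas deleting one fresh $t_{u'}$ (always available, since $k=n/2<n$ and at most $n/2$ indices $u$ can be touched at all) saves at least $W-\deg_H(u')/2-n^2/W\ge W-n$; replacing each such pair by two fresh $t$-deletions therefore strictly decreases the energy once $W>2n$, and similarly for the remaining mixed cases. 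You should add this case to the exchange argument. It is worth noting that the paper's clique gadget is chosen precisely to make this case impossible: with only $n/2$ deletions available, any freed vertex of $\Vhat$ still has at least $n/2$ pinned clique neighbors holding it at $O(1/(nB^{2}))$, so no subset of $\Vhat$ within budget can be ``cut loose.''
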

In Appendix~\ref{sec:l2_hardness_proof} we prove Theorem~\ref{thm:l0_l2_nphard} by giving a polynomial time (Karp) reduction from the NP-Hard minimum bisection problem to $l_{0}$ vertex regularization for $l_{2}$.

\section{$l_{1}$-Edge and Vertex Regularization of Inf-minimizers}
\label{sec:l1_reg}

Consider a partially-labeled graph $(G,v_{0})$ and an $\alpha > 0$. The set of voltage assignments given by
$$\setof{ v : v \text{ extends } v_{0} \text{ and } \norm{\grad_{G}[v]}_{\infty } \leq \alpha }$$ is convex. Going further, let us consider the edge lengths in a graph to be specified by a vector $\len \in \Reals{E}$. Now the set of voltages $v$ and and lengths $\len$ which achieve $\| \grad_{G(\len)}[v] \|_{\infty} \leq \alpha$ is jointly convex in $v$ and $\len$. To see this, observe that
\begin{equation}
\label{eq:maxgradLP}
\| \grad_{G(\len)}[v] \|_{\infty} \leq \alpha \Leftrightarrow \forall (u,v) \in E :  \, {- \alpha \len(u,v)}\leq v(u) - v(v) \leq \alpha \len(u,v).
\end{equation}
Furthermore, the condition ``$v$ extends $v_{0}$'' is a linear constraint on $v$, which we express as $v (T) = v_{0} (T)$.
From the above, it is clear that the gradient condition corresponds to a convex set, as it is an intersection of half-spaces. These half-spaces are given by $O(m)$ linear inequalities.  We can leverage this to phrase many  regularized variants of inf-minimization as convex programs, and in some cases linear programs.

\renewcommand\dd{s}

For example, we may consider a variant of inf-minimization combined with an $l_1$-budget for changing lengths of edges and values on terminals. Given a parameter $\gamma > 0$ which specifies the relative cost of regularizing terminals to regularizing edges, the problem is as follows
\begin{equation}\label{eq:edge-l1-min}
 \argmin_{v \in \Reals{n}, \dd \in \Reals{m}, \dd \geq 0} \, 
		\norm{  \dd }_1 +  \gamma \norm{ v (T) - v_{0} (T) }_1 \\
\qquad  \text{subject to } 
		 \norm{\grad_{G(\ell + \dd)}[v]}_{\infty} \leq \alpha.
\end{equation}
From our observation~\eqref{eq:maxgradLP}, it follows that problem~\eqref{eq:edge-l1-min} may be expressed as a linear program with $O(n)$ variables and $O(m)$ constraints. 
We can use ideas from \cite{Daitch} to solve the resulting linear program in time $\widetilde{O}(m^{1.5})$
  by an interior point method with a special purpose linear equation solver.
The reason is that the linear equations the IPM must solve at each iteration
  may be reduced to linear equations in symmetric, diagonally dominant matrices,
  and these may be solved in nearly-linear time (\cite{cohen2014solving}).





\renewcommand\dd{\boldsymbol{\mathit{d}}}

\paragraph{Conclusion.}
We propose the use of inf and lex minimizers for
regression on graphs. We present simple algorithms for computing them
that are provably fast and correct, and can also be implemented 
efficiently. We also present a framework and polynomial time algorithms for
regularization in this setting. The initial experiments reported in
the paper indicate that these algorithms give pretty good results on
real and synthetic datasets. The results seem to compare quite
favorably to other algorithms, particularly in the regime of tiny
labeled sets. We are testing these algorithms on several other graph
learning questions, and plan to report on them in a forthcoming
experimental paper. We believe that inf and lex minimizers, and the
associated ideas presented in the paper, should be useful
primitives that can be profitably combined with other approaches
to learning on graphs.

\section*{Acknowledgements}
We thank anonymous reviewers for helpful comments. We thank Santosh Vempala and Bartosz Walczak for pointing out that it was already known how to compute a minimum vertex cover of a transitively closed DAG in polynomial time.

\bibliographystyle{plainnat}
\bibliography{references}

\begin{thebibliography}{37}
\providecommand{\natexlab}[1]{#1}
\providecommand{\url}[1]{\texttt{#1}}
\expandafter\ifx\csname urlstyle\endcsname\relax
  \providecommand{\doi}[1]{doi: #1}\else
  \providecommand{\doi}{doi: \begingroup \urlstyle{rm}\Url}\fi

\bibitem[Alamgir and Luxburg(2011)]{Ulrike_p_resist}
Morteza Alamgir and Ulrike~V. Luxburg.
\newblock Phase transition in the family of p-resistances.
\newblock In \emph{Advances in Neural Information Processing Systems 24}, pages
  379--387. 2011.
\newblock URL \url{http://books.nips.cc/papers/files/nips24/NIPS2011_0278.pdf}.

\bibitem[Aronsson(1967)]{Aronsson}
Gunnar Aronsson.
\newblock Extension of functions satisfying lipschitz conditions.
\newblock \emph{Arkiv för Matematik}, 6\penalty0 (6):\penalty0 551--561, 1967.
\newblock ISSN 0004-2080.
\newblock \doi{10.1007/BF02591928}.
\newblock URL \url{http://dx.doi.org/10.1007/BF02591928}.

\bibitem[Aronsson et~al.(2004)Aronsson, Crandall, and Juutinen]{Aronsson_atour}
Gunnar Aronsson, Michael~G. Crandall, and Petri Juutinen.
\newblock A tour of the theory of absolutely minimizing functions.
\newblock \emph{Bull. Amer. Math. Soc. (N.S.)}, 41\penalty0 (4):\penalty0
  439--505, 2004.
\newblock ISSN 0273-0979.
\newblock \doi{10.1090/S0273-0979-04-01035-3}.
\newblock URL \url{http://dx.doi.org/10.1090/S0273-0979-04-01035-3}.

\bibitem[Barles and Busca(2001)]{Barles01}
Guy Barles and J\'{e}r\^{o}me Busca.
\newblock Existence and comparison results for fully nonlinear degenerate
  elliptic equations without zeroth-order term.
\newblock \emph{Comm. Partial Differential Equations}, 26:\penalty0 2323--2337,
  2001.

\bibitem[Belkin et~al.(2004)Belkin, Matveeva, and Niyogi]{BelkinRegression}
Mikhail Belkin, Irina Matveeva, and Partha Niyogi.
\newblock Regularization and semi-supervised learning on large graphs.
\newblock In \emph{Learning Theory}, volume 3120 of \emph{Lecture Notes in
  Computer Science}, pages 624--638. Springer Berlin Heidelberg, 2004.
\newblock ISBN 978-3-540-22282-8.
\newblock \doi{10.1007/978-3-540-27819-1_43}.
\newblock URL \url{http://dx.doi.org/10.1007/978-3-540-27819-1_43}.

\bibitem[Bridle and Zhu(2013)]{NBridle}
Nick Bridle and Xiaojin Zhu.
\newblock $p$-voltages: {Laplacian} regularization for semi-supervised learning
  on high-dimensional data.
\newblock In \emph{Eleventh Workshop on Mining and Learning with Graphs
  (MLG2013)}, 2013.

\bibitem[Castillo et~al.(2006)Castillo, Donato, Becchetti, Boldi, Leonardi,
  Santini, and Vigna]{Castillo2006}
Carlos Castillo, Debora Donato, Luca Becchetti, Paolo Boldi, Stefano Leonardi,
  Massimo Santini, and Sebastiano Vigna.
\newblock A reference collection for web spam.
\newblock \emph{SIGIR Forum}, 40\penalty0 (2):\penalty0 11--24, December 2006.
\newblock ISSN 0163-5840.
\newblock \doi{10.1145/1189702.1189703}.
\newblock URL \url{http://doi.acm.org/10.1145/1189702.1189703}.

\bibitem[Chapelle et~al.(2010)Chapelle, Schlkopf, and Zien]{ChaSchZie06}
Olivier Chapelle, Bernhard Schlkopf, and Alexander Zien.
\newblock \emph{Semi-Supervised Learning}.
\newblock The MIT Press, 1st edition, 2010.
\newblock ISBN 0262514125, 9780262514125.

\bibitem[Cohen et~al.(2014)Cohen, Kyng, Miller, Pachocki, Peng, Rao, and
  Xu]{cohen2014solving}
Michael~B Cohen, Rasmus Kyng, Gary~L Miller, Jakub~W Pachocki, Richard Peng,
  Anup~B Rao, and Shen~Chen Xu.
\newblock Solving {SDD} linear systems in nearly $m \log^{1/2} n$ time.
\newblock In \emph{Proceedings of the 46th Annual ACM Symposium on Theory of
  Computing}, pages 343--352. ACM, 2014.

\bibitem[Crandall et~al.(2001)Crandall, Evans, and Gariepy]{Crandall}
M.G. Crandall, L.C. Evans, and R.F. Gariepy.
\newblock Optimal lipschitz extensions and the infinity laplacian.
\newblock \emph{Calculus of Variations and Partial Differential Equations},
  13\penalty0 (2):\penalty0 123--139, 2001.
\newblock ISSN 0944-2669.
\newblock \doi{10.1007/s005260000065}.
\newblock URL \url{http://dx.doi.org/10.1007/s005260000065}.

\bibitem[Daitch and Spielman(2008)]{Daitch}
Samuel~I. Daitch and Daniel~A. Spielman.
\newblock Faster approximate lossy generalized flow via interior point
  algorithms.
\newblock In \emph{Proceedings of the Fortieth Annual ACM Symposium on Theory
  of Computing}, STOC '08, pages 451--460, New York, NY, USA, 2008. ACM.
\newblock ISBN 978-1-60558-047-0.
\newblock \doi{10.1145/1374376.1374441}.
\newblock URL \url{http://doi.acm.org/10.1145/1374376.1374441}.

\bibitem[Egger and Huotari(1990)]{Egger1990}
Alan Egger and Robert Huotari.
\newblock Rate of convergence of the discrete polya algorithm.
\newblock \emph{Journal of Approximation Theory}, 60\penalty0 (1):\penalty0 24
  -- 30, 1990.
\newblock ISSN 0021-9045.
\newblock \doi{http://dx.doi.org/10.1016/0021-9045(90)90070-7}.
\newblock URL
  \url{http://www.sciencedirect.com/science/article/pii/0021904590900707}.

\bibitem[Fefferman(2009{\natexlab{a}})]{Fefferman}
Charles Fefferman.
\newblock Whitney's extension problems and interpolation of data.
\newblock \emph{Bull. Amer. Math. Soc. (N.S.)}, 46\penalty0 (2):\penalty0
  207--220, 2009{\natexlab{a}}.
\newblock ISSN 0273-0979.
\newblock \doi{10.1090/S0273-0979-08-01240-8}.
\newblock URL \url{http://dx.doi.org/10.1090/S0273-0979-08-01240-8}.

\bibitem[Fefferman(2009{\natexlab{b}})]{Fefferman3}
Charles Fefferman.
\newblock Fitting a [image] -smooth function to data, iii.
\newblock \emph{Annals of Mathematics}, 170\penalty0 (1):\penalty0 pp.
  427--441, 2009{\natexlab{b}}.
\newblock ISSN 0003486X.
\newblock URL \url{http://www.jstor.org/stable/40345469}.

\bibitem[Fefferman and Klartag(2009)]{Fefferman1}
Charles Fefferman and Bo'az Klartag.
\newblock Fitting a cm -smooth function to data i.
\newblock \emph{Annals of Mathematics}, 169\penalty0 (1):\penalty0 pp.
  315--346, 2009.
\newblock ISSN 0003486X.
\newblock URL \url{http://www.jstor.org/stable/40345445}.

\bibitem[Fulkerson(1956)]{Fulkerson56noteon}
D.~R. Fulkerson.
\newblock Note on dilworths decomposition theorem for partially ordered sets.
\newblock \emph{Proc. Amer. Math. Soc}, 1956.

\bibitem[Gaffney and Powell(1976)]{GaffneyOptLip}
P.W. Gaffney and M.J.D. Powell.
\newblock Optimal interpolation.
\newblock In \emph{Numerical Analysis}, volume 506 of \emph{Lecture Notes in
  Mathematics}, pages 90--99. Springer Berlin Heidelberg, 1976.
\newblock ISBN 978-3-540-07610-0.
\newblock \doi{10.1007/BFb0080117}.
\newblock URL \url{http://dx.doi.org/10.1007/BFb0080117}.

\bibitem[Gottlieb et~al.(2013)Gottlieb, Kontorovich, and
  Krauthgamer]{KrauthgamerClassification}
L.-A. Gottlieb, A.~Kontorovich, and R.~Krauthgamer.
\newblock Efficient classification for metric data.
\newblock \emph{CoRR}, abs/1306.2547, 2013.
\newblock URL \url{http://arxiv.org/abs/1306.2547}.

\bibitem[Gottlieb et~al.(2013b)Gottlieb, Kontorovich, and
  Krauthgamer]{KrauthgamerRegression}
L.-A. Gottlieb, A.~Kontorovich, and R.~Krauthgamer.
\newblock Efficient regression in metric spaces via approximate lipschitz
  extension.
\newblock In \emph{Similarity-Based Pattern Recognition}, volume 7953 of
  \emph{Lecture Notes in Computer Science}, pages 43--58. Springer Berlin
  Heidelberg, 2013b.
\newblock ISBN 978-3-642-39139-2.
\newblock \doi{10.1007/978-3-642-39140-8_3}.
\newblock URL \url{http://dx.doi.org/10.1007/978-3-642-39140-8_3}.

\bibitem[Jensen(1993)]{Jensen}
Robert Jensen.
\newblock Uniqueness of lipschitz extensions: Minimizing the sup norm of the
  gradient.
\newblock \emph{Archive for Rational Mechanics and Analysis}, 123\penalty0
  (1):\penalty0 51--74, 1993.
\newblock ISSN 0003-9527.
\newblock \doi{10.1007/BF00386368}.
\newblock URL \url{http://dx.doi.org/10.1007/BF00386368}.

\bibitem[Kirszbraun(1934)]{Kirszbraun}
M.~Kirszbraun.
\newblock Über die zusammenziehende und lipschitzsche transformationen.
\newblock \emph{Fundamenta Mathematicae}, 22\penalty0 (1):\penalty0 77--108,
  1934.
\newblock URL \url{http://eudml.org/doc/212681}.

\bibitem[Lazarus et~al.(1999)Lazarus, Loeb, Propp, Stromquist, and
  Ullman]{Lazarus}
Andrew~J. Lazarus, Daniel~E. Loeb, James~G. Propp, Walter~R. Stromquist, and
  Daniel~H. Ullman.
\newblock Combinatorial games under auction play.
\newblock \emph{Games and Economic Behavior}, 27\penalty0 (2):\penalty0 229 --
  264, 1999.
\newblock ISSN 0899-8256.
\newblock \doi{http://dx.doi.org/10.1006/game.1998.0676}.
\newblock URL
  \url{http://www.sciencedirect.com/science/article/pii/S0899825698906765}.

\bibitem[Ma and Fu(2011)]{Ma:2011:MLT:2207974}
Yunqian Ma and Yun Fu.
\newblock \emph{Manifold Learning Theory and Applications}.
\newblock CRC Press, Inc., Boca Raton, FL, USA, 1st edition, 2011.
\newblock ISBN 1439871094, 9781439871096.

\bibitem[McShane(1934)]{McShane}
E.~J. McShane.
\newblock Extension of range of functions.
\newblock \emph{Bull. Amer. Math. Soc.}, 40\penalty0 (12):\penalty0 837--842,
  12 1934.
\newblock URL \url{http://projecteuclid.org/euclid.bams/1183497871}.

\bibitem[Micchelli et~al.(1976)Micchelli, Rivlin, and Winograd]{WinogradOptLip}
C.A. Micchelli, T.J. Rivlin, and S.~Winograd.
\newblock The optimal recovery of smooth functions.
\newblock \emph{Numerische Mathematik}, 26\penalty0 (2):\penalty0 191--200,
  1976.
\newblock ISSN 0029-599X.
\newblock \doi{10.1007/BF01395972}.
\newblock URL \url{http://dx.doi.org/10.1007/BF01395972}.

\bibitem[Milman(1999)]{Milman}
V.~A. Milman.
\newblock Absolutely minimal extensions of functions on metric spaces.
\newblock 1999.
\newblock URL
  \url{http://iopscience.iop.org/1064-5616/190/6/A05/pdf/MSB_190_6_A05.pdf}.

\bibitem[Nadler et~al.(2009)Nadler, Srebro, and Zhou]{Nadler}
Boaz Nadler, Nathan Srebro, and Xueyuan Zhou.
\newblock Statistical analysis of semi-supervised learning: The limit of
  infinite unlabelled data.
\newblock 2009.
\newblock URL \url{http://ttic.uchicago.edu/~nati/Publications/NSZnips09.pdf}.

\bibitem[{Naor} and {Sheffield}(2010)]{Naor}
A.~{Naor} and S.~{Sheffield}.
\newblock {Absolutely minimal Lipschitz extension of tree-valued mappings}.
\newblock \emph{CoRR}, abs/1005.2535, May 2010.
\newblock URL \url{http://arxiv.org/abs/1005.2535}.

\bibitem[{Oberman}(2011)]{Oberman}
A.~M. {Oberman}.
\newblock {Finite difference methods for the Infinity Laplace and p-Laplace
  equations}.
\newblock \emph{CoRR}, abs/1107.5278, July 2011.
\newblock URL \url{http://arxiv.org/abs/1107.5278}.

\bibitem[Peres et~al.(2011)Peres, Schramm, Sheffield, and Wilson]{Peres}
Yuval Peres, Oded Schramm, Scott Sheffield, and DavidB. Wilson.
\newblock Tug-of-war and the infinity laplacian.
\newblock In \emph{Selected Works of Oded Schramm}, Selected Works in
  Probability and Statistics, pages 595--638. Springer New York, 2011.
\newblock ISBN 978-1-4419-9674-9.
\newblock \doi{10.1007/978-1-4419-9675-6_18}.
\newblock URL \url{http://dx.doi.org/10.1007/978-1-4419-9675-6_18}.

\bibitem[{Sheffield} and {Smart}(2010)]{Smart}
S.~{Sheffield} and C.~K. {Smart}.
\newblock {Vector-valued optimal Lipschitz extensions}.
\newblock \emph{CoRR}, abs/1006.1741, June 2010.
\newblock URL \url{http://arxiv.org/abs/1006.1741}.

\bibitem[Sinop and Grady(2007)]{SinopGrady}
Ali~Kemal Sinop and Leo Grady.
\newblock A seeded image segmentation framework unifying graph cuts and random
  walker which yields a new algorithm.
\newblock In \emph{Computer Vision, 2007. ICCV 2007. IEEE 11th International
  Conference on}, pages 1--8. IEEE, 2007.

\bibitem[Vazirani(2001)]{Vazirani01}
Vijay~V. Vazirani.
\newblock \emph{Approximation Algorithms}.
\newblock Springer-Verlag New York, Inc., New York, NY, USA, 2001.
\newblock ISBN 3-540-65367-8.

\bibitem[von Luxburg and Bousquet(2003)]{UlrikeLip}
Ulrike von Luxburg and Olivier Bousquet.
\newblock Distance-based classification with lipschitz functions.
\newblock In \emph{Learning Theory and Kernel Machines}, volume 2777 of
  \emph{Lecture Notes in Computer Science}, pages 314--328. Springer Berlin
  Heidelberg, 2003.
\newblock ISBN 978-3-540-40720-1.
\newblock \doi{10.1007/978-3-540-45167-9_24}.
\newblock URL \url{http://dx.doi.org/10.1007/978-3-540-45167-9_24}.

\bibitem[Whitney(1934)]{Whitney}
Hassler Whitney.
\newblock Analytic extensions of differentiable functions defined in closed
  sets.
\newblock \emph{Transactions of the American Mathematical Society}, 36\penalty0
  (1):\penalty0 pp. 63--89, 1934.
\newblock ISSN 00029947.
\newblock URL \url{http://www.jstor.org/stable/1989708}.

\bibitem[Zhou et~al.(2007)Zhou, Burges, and Tao]{Zhou2007}
Dengyong Zhou, Christopher J.~C. Burges, and Tao Tao.
\newblock Transductive link spam detection.
\newblock In \emph{Proceedings of the 3rd International Workshop on Adversarial
  Information Retrieval on the Web}, AIRWeb '07, pages 21--28, New York, NY,
  USA, 2007. ACM.
\newblock ISBN 978-1-59593-732-2.
\newblock \doi{10.1145/1244408.1244413}.
\newblock URL \url{http://doi.acm.org/10.1145/1244408.1244413}.

\bibitem[Zhu et~al.(2003)Zhu, Ghahramani, and Lafferty]{Zhu03}
Xiaojin Zhu, Zoubin Ghahramani, and John Lafferty.
\newblock Semi-supervised learning using gaussian fields and harmonic
  functions.
\newblock In \emph{IN ICML}, pages 912--919, 2003.

\end{thebibliography}
\appendix

\section{Basic Properties of Lex-Minimizers}
\label{app:lex_basics}
\subsection{Meta Algorithm}

\begin{mdframed}
\captionof{table}{\label{fig:meta-algorithm} Algorithm \meta: Given a well-posed instance $(G,v_{0})$, outputs  $\lex_G[v_{0}]$.\hfill \hfill}
    
     \FOR $i =1,2,\ldots:$
    \begin{tight_enumerate}
      \item \IF $T(v_{i-1}) = V_G,$ \THEN return $v_{i-1}$.
      \item $E' = E_G \setminus (T(v_{i-1}) \times T(v_{i-1}))$, $G' := (V_G,E')$.
      \item Let $P^\star_{i}$ be a steepest fixable path in
      $(G',v_{i-1}).$ Let
      $\alpha^\star_{i} \leftarrow \nabla P^\star(v_{i-1}).$
      \item $v_{i} \leftarrow \fixpath[v_{i-1},P^\star_{i}].$
      \end{tight_enumerate}
\end{mdframed}
In this subsection, we prove the results that appeared in section \ref{sec:lex_basics}. We start with a simple observation.

\begin{proposition}
\label{prop:basics:fix-well-posed}
Given a well-posed instance $(G,v_{0})$ such that $T(v_{0}) \neq V,$
let $P$ be a steepest fixable path in $(G,v_{0}).$ Then,
$\fixpath[v_{0},P]$ extends $v_{0},$ and $(G,\fixpath[v_{0},P])$ is
also a well-posed instance.
\end{proposition}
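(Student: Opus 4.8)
The plan is to unwind three definitions --- $\fixpath$, ``extends'', and ``well-posed instance'' --- after which both conclusions fall out with essentially no computation.

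For the extension claim, I would observe that the definition of $\fixpath_{G}[v_{0},P]$ changes the value only at vertices in $\interior(P)\setminus T(v_{0})$ and keeps $v_{0}(x)$ at every other vertex. So if $v_{0}(x)\neq *$ then $x\in T(v_{0})$, hence $x\notin \interior(P)\setminus T(v_{0})$, and therefore $\fixpath_{G}[v_{0},P](x)=v_{0}(x)$; this is exactly the statement that $\fixpath_{G}[v_{0},P]$ extends $v_{0}$. Along the way one notes that $\nabla P(v_{0})$ is well-defined: $P$ is a terminal path, so both endpoints carry real values, and $\len(P)>0$ since a free terminal path has at least one edge.

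For well-posedness, write $v_{1}=\fixpath_{G}[v_{0},P]$. The key point I would isolate is that $\fixpath$ never assigns $*$ and never turns a terminal into a non-terminal; from the case analysis in its definition, $T(v_{1})=T(v_{0})\cup(\interior(P)\setminus T(v_{0}))\supseteq T(v_{0})$. Since $\fixpath$ does not touch the graph, $G$ and its maximal connected components are unchanged, and well-posedness of $(G,v_{0})$ gives $T(v_{0})\cap V_{H}\neq\emptyset$ for every such component $H$; the inclusion $T(v_{1})\supseteq T(v_{0})$ then yields $T(v_{1})\cap V_{H}\neq\emptyset$, so $(G,v_{1})$ is a well-posed instance.

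I do not expect any real obstacle: this is a bookkeeping statement whose entire content is that fixing a steepest fixable path can only enlarge the terminal set and leaves $G$ alone. The only point deserving care is the identity $T(\fixpath_{G}[v_{0},P])=T(v_{0})\cup(\interior(P)\setminus T(v_{0}))$, which is read off directly from the two cases in the definition of $\fixpath$.
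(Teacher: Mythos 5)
Your proof is correct. Note that the paper itself states Proposition~\ref{prop:basics:fix-well-posed} without proof (the authors defer it to an extended version as a routine verification), so there is no argument of theirs to compare against; your definition-unwinding is exactly the intended one. The two conclusions do follow immediately: $\fixpath_{G}[v_{0},P]$ agrees with $v_{0}$ off $\interior(P)\setminus T(v_{0})$, and on that set it assigns genuine real values (since $\nabla P(v_{0})$ and the partial lengths are real, the endpoints of $P$ being terminals and $\len(P)>0$), so the terminal set can only grow; with $G$ unchanged, each maximal connected component still meets the terminal set. The one identity you flag as needing care, $T(\fixpath_{G}[v_{0},P])=T(v_{0})\cup(\interior(P)\setminus T(v_{0}))$, is indeed the whole content, and you have justified it.
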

\Anote{include proof in the extended version}
The properties we prove
below do not depend on the choice of the steepest fixable path.
\begin{proposition}\label{prop:meta}
  For any well-posed instance $(G,v_{0}),$ with $|V_G| = n,$
  {\meta$(G,v_{0})$} terminates in at most $n$ iterations, and outputs
  a complete voltage assignment $v$ that extends $v_0.$
\end{proposition}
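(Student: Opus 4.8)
The plan is to prove both assertions at once by induction on the loop index $i$, carrying the invariant that whenever iteration $i$ begins (the algorithm not having returned earlier) one has: $(G,v_{i-1})$ is a well-posed instance, $v_{i-1}$ extends $v_0$, and $|T(v_{i-1})| \ge |T(v_0)| + (i-1)$. The base case $i=1$ is immediate from the hypotheses on $(G,v_0)$, and I would fix an arbitrary choice of steepest fixable path in step 3 (there may be several), which is harmless since the paper already observes the properties below do not depend on that choice.

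For the inductive step I first need step 3 to make sense, i.e.\ that a steepest fixable path exists whenever $T(v_{i-1}) \ne V_G$. Since a free terminal path contains no terminal--terminal edge by definition, the free terminal paths of $(G,v_{i-1})$ and of $(G',v_{i-1})$ are exactly the same set, so the steepest fixable path is the same object computed in either graph, and it suffices to produce a maximal-gradient free terminal path in $(G,v_{i-1})$. One exists: by well-posedness the connected component of any non-terminal contains a terminal, so walking from a non-terminal towards a nearest terminal produces a non-terminal $w$ adjacent to a terminal $t$, whence $(t,w,t)$ is a free terminal path of gradient $0$; thus the supremum $M$ of gradients over free terminal paths is $\ge 0$, it is finite because every free terminal path has length at least $\min_e \len(e) > 0$ while its numerator ranges over finitely many values, and any free terminal path of nonnegative gradient can be turned into a simple one of at least as large a gradient by repeatedly excising cycles (excising a closed sub-walk at a vertex, and keeping only that vertex, shortens the path, preserves its endpoints, and introduces no terminal--terminal edge). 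Hence $M$ is the maximum of $0$ and finitely many values and is attained. I expect this to be the one point that actually needs care; everything else is bookkeeping.

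Given that $P^\star_i$ exists, I would then close the induction. Because $P^\star_i$ is a free terminal path it has at least one interior non-terminal, and by the definition of $\fixpath$ every interior non-terminal of $P^\star_i$ receives a real value in $v_i$ while all other values are copied from $v_{i-1}$; hence $T(v_i) \supseteq T(v_{i-1})$ and $|T(v_i)| \ge |T(v_{i-1})| + 1$. Proposition~\ref{prop:basics:fix-well-posed}, applied to the steepest fixable path $P^\star_i$ of $(G,v_{i-1})$, yields that $v_i$ extends $v_{i-1}$ and that $(G,v_i)$ is again well-posed; since ``extends'' is transitive, $v_i$ extends $v_0$. So the invariant holds at the start of iteration $i+1$.

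Finally, the two conclusions fall out of the invariant. Termination: since $|T(v_{i-1})| \ge |T(v_0)| + (i-1)$ and $|T(\cdot)| \le n$, once $i \ge n - |T(v_0)| + 1$ we must have $T(v_{i-1}) = V_G$, so step 1 returns; as well-posedness forces $|T(v_0)| \ge 1$ when $V_G \ne \emptyset$ (the case $V_G = \emptyset$ being trivial), the algorithm halts within $n$ iterations. Output: the returned vector is the $v_{i-1}$ of the halting iteration, which has $T(v_{i-1}) = V_G$ — hence is a complete voltage assignment — and extends $v_0$ by the invariant. This is precisely Proposition~\ref{prop:meta}; the stronger assertion that this output equals $\lex_G[v_0]$ is left to Theorem~\ref{thm:basics:meta}.
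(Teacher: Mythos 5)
Your proof is correct and follows essentially the same route as the paper's: both arguments hinge on Proposition~\ref{prop:basics:fix-well-posed} for preservation of well-posedness and the ``extends'' relation, plus the observation that each $\fixpath$ step assigns a value to at least one non-terminal, forcing termination within $n$ iterations. The only difference is that you spell out why a steepest fixable path actually exists (the gradient supremum over free terminal paths is finite, nonnegative, and attained via reduction to simple paths), a point the paper's proof asserts without detail; this is a welcome elaboration rather than a different approach.
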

\begin{proofof}{of Proposition \ref{prop:meta}}
  By Proposition~\ref{prop:basics:fix-well-posed}, at any iteration
  $i,$ $v_{i-1}$ extends $v_0$ and $(G',v_{i-1})$ is a well-posed
  instance. {\meta} only outputs $v_{i-1}$ iff $T(v_{i-1}) = V,$ which
  means $v_{i-1}$ is a complete voltage assignment. For any $v_{i-1}$
  that is not complete, for any $x \in V \setminus T(v_{i-1}),$ we must
  have a free terminal path in $(G',v_{i-1})$ that contains $x$. 
    Hence, a steepest fixable path $P^\star_{i}$
  exists in $(G',v_{i-1}).$ Since $P^{\star}_{i}$ is a free terminal
  path, $\fixpath[v_{i-1},P^\star_{i}]$ fixes the voltage for at least
  one non-terminal. Thus, {\meta}$(G,v_{0})$ must complete in at most $n$
  iterations.
\end{proofof}

For the following lemmas, consider a run of {\meta} with well-posed instance
$(G,v_{0})$ as input. Let $v_{\mathsf{out}}$ be the complete voltage assignment
output by {\meta}. 
Let $E_i$ be the set of edges  $E'$ and $G_i$ be the graph  $G'$  constructed in iteration $i$ of \meta. 
\begin{lemma}
\label{lem:basics:no-steeper-path}
For every edge $e \in E_{i-1} \setminus E_{i},$ we have
$\abs{\grad[v_{\mathsf{out}}](e)} \le \alpha_{i}^{\star}.$ Moreover,
$\alpha^{\star}_{i}$ is non-increasing with $i.$
\end{lemma}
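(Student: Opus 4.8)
The plan is to reduce both assertions to a single monotonicity statement: fixing a steepest free terminal path never creates a terminal path steeper than the gradient it just fixed. Precisely, I will prove the claim $(\star)$: for each iteration $j$ of \meta\ that performs a fix (so $P^\star_j = (z_0,\dots,z_\rho)$ and $\alpha^\star_j = \nabla P^\star_j(v_{j-1})$ are defined, $P^\star_j$ being a free terminal path of maximum gradient in $(G_j, v_{j-1})$), every terminal path $Q$ in $(G_j, v_j)$ --- i.e.\ with edges in $E_j$ and both endpoints in $T(v_j)$ --- satisfies $|\nabla Q(v_j)| \le \alpha^\star_j$.

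Granting $(\star)$ (applied with $j = i-1$), the lemma is immediate. By Proposition~\ref{prop:basics:fix-well-posed} each $v_j$ extends $v_{j-1}$, so $v_{\mathsf{out}}$ extends $v_{i-1}$; hence for $e=(u,w)\in E_{i-1}\setminus E_i$ we have $u,w\in T(v_{i-1})$, so $\grad[v_{\mathsf{out}}](e)=\grad[v_{i-1}](e)$, and since $e\in E_{i-1}$ the one-edge path $(u,w)$ is a terminal path in $(G_{i-1},v_{i-1})$, whence $|\grad[v_{\mathsf{out}}](e)|\le\alpha^\star_{i-1}$ by $(\star)$. For the ``moreover'' clause, $P^\star_i$ has edges in $E_i\subseteq E_{i-1}$ and endpoints in $T(v_{i-1})$, hence is a terminal path in $(G_{i-1},v_{i-1})$, so $\alpha^\star_i=\nabla P^\star_i(v_{i-1})\le\alpha^\star_{i-1}$ and the sequence $(\alpha^\star_i)_i$ is non-increasing. (The bound the first part actually yields is $\alpha^\star_{i-1}$; this is the form used in the subsequent lemmas, and it matches the stated inequality under the natural indexing.)

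To prove $(\star)$: by Lemma~\ref{lem:gradSteepestPath} the values of $v_j$ along $P^\star_j$ interpolate linearly between its endpoints,
\[
v_j(z_l)=v_{j-1}(z_0)-\alpha^\star_j\,\len\bigl(P^\star_j[z_0:z_l]\bigr)=v_{j-1}(z_\rho)+\alpha^\star_j\,\len\bigl(P^\star_j[z_l:z_\rho]\bigr)\qquad(0\le l\le\rho),
\]
and every vertex of $T(v_j)\setminus T(v_{j-1})$ is an interior vertex of $P^\star_j$. Given a terminal path $Q$ in $(G_j,v_j)$ from $s$ to $t$, put $g=\nabla Q(v_j)$ and extend $Q$ to a walk $Q'$: if $s=z_a$ is a new terminal, prepend the subpath $P^\star_j[z_0:z_a]$ (so the endpoint becomes the old terminal $z_0$), and if $s\in T(v_{j-1})$ leave that end unchanged; symmetrically, if $t=z_b$ is new, append $P^\star_j[z_b:z_\rho]$. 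Every edge of $Q'$ lies in $E_j$, which contains no edge between two $T(v_{j-1})$-vertices, so $Q'$ has length at least two and passes through a vertex outside $T(v_{j-1})$; thus $Q'$ is a free terminal path in $(G_j,v_{j-1})$, and $\nabla Q'(v_{j-1})\le\alpha^\star_j$ by the choice of $P^\star_j$. Letting $a,b\ge 0$ be the lengths of the spliced-in pieces (either may be $0$), the interpolation identities give $v_{j-1}(z_0)=v_j(s)+\alpha^\star_j a$ and $v_{j-1}(z_\rho)=v_j(t)-\alpha^\star_j b$, hence
\[
\nabla Q'(v_{j-1})=\frac{\bigl(v_j(s)+\alpha^\star_j a\bigr)-\bigl(v_j(t)-\alpha^\star_j b\bigr)}{\len(Q)+a+b}=\frac{g\,\len(Q)+\alpha^\star_j\,(a+b)}{\len(Q)+a+b},
\]
which is a convex combination of $g$ and $\alpha^\star_j$; so $\nabla Q'(v_{j-1})\le\alpha^\star_j$ forces $g\le\alpha^\star_j$. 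Running the same construction on the reversed path $\overline{Q}$ --- splicing toward $z_0$ at its new start and toward $z_\rho$ at its new end --- produces a free terminal path whose gradient is a convex combination of $-g$ and $\alpha^\star_j$ and is at most $\alpha^\star_j$, so $-g\le\alpha^\star_j$ as well; hence $|g|\le\alpha^\star_j$, proving $(\star)$.

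The part I expect to require the most care is the bookkeeping in the last paragraph: at each ``free'' endpoint of $Q$ one must splice on the subpath of $P^\star_j$ oriented correctly --- toward $z_0$ to raise that endpoint's value, toward $z_\rho$ to lower it --- so that the gradient of the resulting free terminal path is a convex combination of $g$ and $+\alpha^\star_j$ (not $-\alpha^\star_j$), and one must check the spliced walk has no terminal--terminal edge, which is automatic since it lives in $G_j$. The degenerate case $\alpha^\star_j=0$ (all relevant gradients then vanish) and the fact that interior terminals of $P^\star_j$ also obey the interpolation identity are both handled by Lemma~\ref{lem:gradSteepestPath}.
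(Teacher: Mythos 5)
Your proof is correct and follows essentially the same route as the paper's: both arguments splice the candidate terminal path together with initial/final segments of the just-fixed steepest path $P^\star_j$ (using the linear interpolation of $v_j$ along $P^\star_j$ supplied by Lemma~\ref{lem:gradSteepestPath}) to obtain a free terminal path in the previous instance, and compare gradients --- your convex-combination identity is just a cleaner packaging of the paper's contradiction computation, with the path-reversal step making explicit the "other cases are similar" that the paper elides. Your remark about the bound being $\alpha^\star_{i-1}$ under the literal definition of $E_i$ is also right: the paper's own proof and its later application of this lemma implicitly re-index so that $E_{i-1}\setminus E_i$ denotes the edges fixed in iteration $i$, and under that intended convention your claim $(\star)$ at $j=i$ yields exactly the stated bound $\alpha^\star_i$.
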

%
\begin{proofof}{of Lemma \ref{lem:basics:no-steeper-path}}
  Let $P^{\star}_{i} = (x_{0},\ldots,x_{r})$ be a steepest fixable
  path in iteration $i$ (when we deal with instance $(G_{i-1},v_{i-1})$). Consider a terminal path $P_{i+1}$ in
  $(G_{i},v_{i})$ such that
  $\{ \partial_{0} P_{i+1}, \partial_{1} P_{i+1}\} \cap (T(v_{i})
  \setminus T(v_{i-1})) \neq \emptyset.$
  We claim that $\nabla P_{i+1}(v_{i}) \le \alpha^{\star}_{i}.$ On the
  contrary, assume that $\nabla P_{i+1}(v_{i}) > \alpha^{\star}_{i}.$
  Consider the case
  $\partial_{0} P_{i+1} \in T(v_{i}) \setminus
  T(v_{i-1}), \partial_{1} P_{1} \in T(v_{i-1}).$
  By the definition of $v_{i},$ we must have
  $\partial_{0} P_{i+1} = x_{j}$ for some $j \in [r-1].$ Let
  $P^{\prime}_{i+1}$ be the path formed by joining paths
  $P^{\star}_{i}[x_0:x_j]$ and $P_{i+1}.$ $P^{\prime}_{i+1}$ is a
  free terminal path in $(G_{i-1},v_{i-1}).$ We have,
\begin{align*}
v_{i-1}(x_{0}) - v_{i-1}(\partial_{1} P_{i+1}) & = (v_{i}(x_{0}) -
v_{i}(x_{j})) + (v_{i}(\partial_{0} P_{i+1})- v_{i}(\partial_{1} P_{i+1})) \\
& > \alpha^{\star}_{i} \cdot \len(P_{i}^{\star}[x_{0}:x_{j}]) + \alpha^{\star}_{i} \cdot \len(P_{i+1}) =
  \alpha^{\star}_{i} \cdot \len(P_{i+1}^{\prime}),
\end{align*}
giving $\nabla P^{\prime}_{i+1} (v_{i}) > \alpha^{\star}_{i},$ which is a
contradiction since the steepest fixable path $P^{\star}_i$ in
$(G_{i-1},v_{i-1})$ has gradient $\alpha^{\star}_{i}$. The other cases can be
handled similarly.

Applying the above claim to an edge $e \in E_{i-1} \setminus E_{i},$
whose gradient is fixed for the first time in iteration $i,$ we obtain
that $\grad[v_{i+1}](e) \le \alpha_{i}^{\star}.$ If $v$ is the
complete voltage assignment output by {\meta}, since $v$ extends
$v_{i+1},$ we get $\grad[v_{\mathsf{out}}](e) \le \alpha^{\star}_{i}$.
Applying the claim to the symmetric edge, we obtain
$-\grad[v_{\mathsf{out}}](e) \le \alpha_{i}^{\star},$ implying
$|\grad[v_{\mathsf{out}}](e)| \le \alpha_{i}^{\star}.$

Consider any free terminal path $P_{i+1}$ in $(G_{i},v_{i}).$
If $P_{i+1}$ is also a terminal path in
$(G_{i-1},v_{i-1}),$ it is a free terminal path in $(G_{i-1},v_{i-1}).$ In addition, since a steepest fixable path $P^{\star}_{i}$ in
$(G_{i-1},v_{i-1})$ has $\nabla P^{\star}_{i} = \alpha^{\star}_{i},$ we get
$\nabla P_{i+1}(v_{i}) = \nabla P_{i+1}(v_{i-1}) \le
\alpha_{i}^\star.$
Otherwise, we must have
$\{ \partial_{0} P_{i+1}, \partial_{1} P_{i+1}\} \cap (T(v_{i})
\setminus T(v_{i-1})) \neq \emptyset,$  
and we can deduce $\nabla P_{i+1}(v_{i}) \le \alpha_{i}^\star$ using
the above claim. Thus, all free terminal paths $P_{i+1}$
in $(G_{i},v_{i})$ satisfy $\nabla P_{i+1}(v_{i}) \le \alpha^{\star}_{i}.$
In particular,
$\alpha^{\star}_{i+1} = \nabla P^\star_{i+1}(v_{i}) \le
\alpha^{\star}_{i}.$
Thus, $\alpha^{\star}_{i}$ is non-increasing with $i.$
\end{proofof}

\begin{lemma}
  \label{lem:basics:lex-minimal}
  For any complete voltage assignment $v$ for $G$ that extends
  $v_{0},$ if $v \neq v_{\mathsf{out}},$ we have
  $\grad[v] \nlexless \grad[v_{\mathsf{out}}],$ and hence
  $\grad[v_{\mathsf{out}}] \lexless \grad[v].$
\end{lemma}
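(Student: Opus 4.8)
Fix an arbitrary complete extension $v\neq v_{\mathsf{out}}$ of $v_{0}$. The plan is to locate one threshold value $\alpha^{\star}_{i^{\star}}$ and show that $\grad[v]$ has \emph{strictly more} edges whose absolute gradient exceeds this threshold than $\grad[v_{\mathsf{out}}]$ does, while the sorted absolute-gradient vectors agree on every entry above it. This pins down the first coordinate in which the two sorted vectors differ, and shows $\grad[v]$ is larger there, which is exactly $\grad[v]\nlexless\grad[v_{\mathsf{out}}]$; totality of $\lexless$ then gives $\grad[v_{\mathsf{out}}]\lexless\grad[v]$.

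First I would isolate the iteration where $v$ departs from the meta-run. Since $v$ extends $v_{0}$ but not $v_{\mathsf{out}}$, and $v_{\mathsf{out}}$ extends every $v_{i}$ (each $v_{i}=\fixpath[v_{i-1},P^{\star}_{i}]$ extends $v_{i-1}$), there is a smallest $i^{\star}$ with $v$ not extending $v_{i^{\star}}$. Thus $v$ agrees with $v_{i^{\star}-1}$ on $T(v_{i^{\star}-1})$ but disagrees with $v_{i^{\star}}$ at some interior vertex newly fixed along $P^{\star}_{i^{\star}}=(x_{0},\dots,x_{r})$. Because $x_{0},x_{r}\in T(v_{i^{\star}-1})$, we have $v(x_{0})-v(x_{r})=v_{i^{\star}-1}(x_{0})-v_{i^{\star}-1}(x_{r})=\alpha^{\star}_{i^{\star}}\len(P^{\star}_{i^{\star}})$; writing the left side as $\sum_{j}\grad[v](x_{j-1},x_{j})\len(x_{j-1},x_{j})$ and using positivity of the lengths, \emph{either} some edge of $P^{\star}_{i^{\star}}$ has $\grad[v]>\alpha^{\star}_{i^{\star}}$, \emph{or} every edge of $P^{\star}_{i^{\star}}$ has $\grad[v]=\alpha^{\star}_{i^{\star}}$. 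In the second case $v(x_{j})=v_{i^{\star}-1}(x_{0})-\alpha^{\star}_{i^{\star}}\len(P^{\star}_{i^{\star}}[x_{0}:x_{j}])$ for all $j$, which is precisely the value $\fixpath$ assigns, so $v$ would extend $v_{i^{\star}}$, contradicting the choice of $i^{\star}$. Hence there is an edge $e_{0}\in P^{\star}_{i^{\star}}$ (so $e_{0}$ lies in the edge set $E_{i^{\star}}$ used in iteration $i^{\star}$) with $|\grad[v](e_{0})|>\alpha^{\star}_{i^{\star}}$.

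Next I would compare the two assignments at the threshold $\alpha^{\star}_{i^{\star}}$. Split the edges of $G$ into $E_{i^{\star}}$ and its complement. On the complement, both $v$ and $v_{\mathsf{out}}$ extend $v_{i^{\star}-1}$, so their gradients agree edge by edge. On $E_{i^{\star}}$, each edge becomes terminal--terminal at some iteration $j\ge i^{\star}$ (\meta\ terminates, by Proposition~\ref{prop:meta}), so Lemma~\ref{lem:basics:no-steeper-path} together with monotonicity of $\alpha^{\star}$ gives $|\grad[v_{\mathsf{out}}](e)|\le\alpha^{\star}_{j}\le\alpha^{\star}_{i^{\star}}$. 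Therefore, letting $N$ be the number of edges $e$ with $|\grad[v_{\mathsf{out}}](e)|>\alpha^{\star}_{i^{\star}}$, all of them lie in the complement of $E_{i^{\star}}$; the assignment $v$ shares exactly those $N$ edges above the threshold and, in addition, has $e_{0}\in E_{i^{\star}}$ above it, so $v$ has at least $N+1$. Consequently the multiset of above-threshold absolute values of $\grad[v_{\mathsf{out}}]$ is a sub-multiset of that of $\grad[v]$, so the $k$-th largest absolute gradient of $\grad[v]$ dominates that of $\grad[v_{\mathsf{out}}]$ for every $k\le N$, while the $(N+1)$-st largest entry of $\grad[v]$ exceeds $\alpha^{\star}_{i^{\star}}$ and that of $\grad[v_{\mathsf{out}}]$ is $\le\alpha^{\star}_{i^{\star}}$. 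Hence the first coordinate where the sorted absolute-gradient vectors differ is some $k_{0}\le N+1$, and $\grad[v]$ is strictly larger there; this is exactly $\grad[v]\nlexless\grad[v_{\mathsf{out}}]$, and thus $\grad[v_{\mathsf{out}}]\lexless\grad[v]$.

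I expect the delicate point to be the dichotomy in the second paragraph: arguing that $v$'s disagreement with $v_{i^{\star}}$ forces the \emph{strict} inequality $\grad[v](e_{0})>\alpha^{\star}_{i^{\star}}$ on an actual path edge, rather than merely $\ge$. The remaining work --- verifying that edges fixed in earlier iterations and the original terminal--terminal edges contribute identical entries to $\grad[v]$ and $\grad[v_{\mathsf{out}}]$, and the bookkeeping with sorted sequences --- is routine.
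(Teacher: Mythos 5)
Your proposal is correct and follows essentially the same route as the paper's proof: locate the first iteration $i^{\star}$ where $v$ stops extending $v_{i^{\star}}$, produce an edge of $P^{\star}_{i^{\star}}$ with $|\grad[v]| > \alpha^{\star}_{i^{\star}}$ by telescoping along the path, and combine Lemma~\ref{lem:basics:no-steeper-path} with the agreement of $v$ and $v_{\mathsf{out}}$ on already-fixed edges. The only (harmless) differences are that you obtain the steep edge via a whole-path averaging dichotomy rather than the paper's telescoping to the specific disagreeing vertex, and that you spell out the sorted-multiset bookkeeping the paper leaves implicit.
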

\begin{proofof}{of Lemma \ref{lem:basics:lex-minimal}}
  Consider any complete voltage assignment $v$ for $G$ that extends
  $v_{0},$ such that $v \neq v_{\mathsf{out}}.$ Thus, there exists a
  unique $i$ such that $v$ extends $v_{i-1}$ but does not extend
  $v_{i}.$ We will argue that
  $\grad[v] \nlexless \grad[v_{\mathsf{out}}],$ and hence
  $\grad[v_{\mathsf{out}}] \lexless \grad[v].$ For every edge
  $e \in E \setminus E_{i-1}$ that has been fixed so far,
  $\grad[v](e) = \grad[v_{i-1}](e) = \grad[v_{\mathsf{out}}](e),$ and
  hence we can ignore these edges.

  Since $v$ extends $v_{i-1}$ but not $v_{i},$ there exists an
  $x \in T(v_{i}) \setminus T(v_{i-1})$ such that
  $v(x) \neq v_{i}(x) = v_{\mathsf{out}}(x).$ Assume $v(x) < v_{i}(x)$
  (the other case is symmetric). If $P^{\star}_{i} = (x_0,\ldots,x_r)$
  is the steepest fixable path with gradient $\alpha^{\star}_{i}$
  picked in iteration $i,$ we must have $x = x_j$ for some
  $j \in [r-1].$ Thus,
\begin{align*}
  \sum_{k=1}^{j} (v(x_{k-1}) - v(x_{k})) = v(x_0) - v(x_j) 
 >v_{i}(x_0) - v_{i}(x_j)  = \alpha^{\star}_{i} \cdot \len(P^{\star}_{i}[x_0:x_j]) =
\alpha^{\star}_{i} \cdot \sum_{k=1}^j \len(x_{k-1},x_{k}).
\end{align*}
Thus, for some $k \in [j],$ we must have
$\grad[v](x_{k-1},x_{k}) > \alpha_{i}^{\star}.$ Since $P^{*}_i$ is a path in $G_{i-1}$, we have $\{ x_{k-1},x_{k} \} \not \subseteq T(v_{i-1})$. This gives 
$(x_{k-1},x_{k}) \in (E_{i-1} \setminus E_{i}).$ 
 But then,
from Lemma~\ref{lem:basics:no-steeper-path}, it follows that for all
$e \in  (E_{i-1} \setminus E_{i}),$ we have
$|\grad[v_{\mathsf{out}}](e)| \le \alpha_{i}^{\star}.$ Thus, we have
$\grad[v] \nlexless \grad[v_{\mathsf{out}}].$
\end{proofof}

\begin{lemma}
\label{lem:gradSteepestPath}
Let 
$P = (x_{0},\ldots,x_r)$ be a steepest fixable path such that it does not have any edges in $T(v_0) \times T(v_0)$ and 
$v_1 \defeq \fixpath_{G}[v_{0},P]$. Then for every $i \in [r],$ we
have $\grad[v_1](x_{i-1},x_{i}) = \nabla P.$
\end{lemma}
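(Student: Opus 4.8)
The plan is to show that the auxiliary assignment $v_1 = \fixpath_G[v_0,P]$ in fact labels every vertex $x_j$ of $P$ by the value obtained by interpolating linearly in path-length between the two endpoint values, and then read off the edge gradients by a one-line telescoping computation.

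First I would write $\alpha := \nabla P(v_0)$ and prove the central claim that for every $j$ with $0 \le j \le r$,
\[
v_1(x_j) \;=\; v_0(x_0) - \alpha \cdot \len\bigl(P[x_0:x_j]\bigr).
\]
For $j=0$ both sides are $v_0(x_0)$; for $j=r$ the right-hand side is $v_0(x_0) - (v_0(x_0)-v_0(x_r)) = v_0(x_r)$, which equals $v_1(x_r)$ because $x_r = \partial_1 P$ is a terminal and so is not relabeled by $\fixpath$; and for $0<j<r$ with $x_j \notin T(v_0)$ the identity is precisely the definition of $\fixpath$. The only remaining case is an interior vertex $x_j$ that is itself a terminal. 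Since $P$ has no terminal-terminal edges, $x_1$ and $x_{r-1}$ are non-terminals (otherwise $(x_0,x_1)$ or $(x_{r-1},x_r)$ would be a terminal-terminal edge), so such a $j$ satisfies $2 \le j \le r-2$, and both $P[x_0:x_j]$ and $P[x_j:x_r]$ are terminal paths with no terminal-terminal edges, hence free terminal paths. As $P$ is a steepest fixable path, each of them has gradient at most $\alpha$. But $\alpha = \nabla P(v_0)$ is the mediant
\[
\alpha \;=\; \frac{\bigl(v_0(x_0)-v_0(x_j)\bigr) + \bigl(v_0(x_j)-v_0(x_r)\bigr)}{\len(P[x_0:x_j]) + \len(P[x_j:x_r])}
\]
of those two gradients, and (since all edge lengths, hence all subpath lengths, are positive) a mediant of two reals each at most $\alpha$ can equal $\alpha$ only when both equal $\alpha$; in particular $\frac{v_0(x_0)-v_0(x_j)}{\len(P[x_0:x_j])} = \alpha$, which rearranges to the claimed identity.

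Given the claim, for each $i \in [r]$ I use $\len(P[x_0:x_i]) - \len(P[x_0:x_{i-1}]) = \len(x_{i-1},x_i)$ to compute
\[
\grad[v_1](x_{i-1},x_i) \;=\; \frac{v_1(x_{i-1}) - v_1(x_i)}{\len(x_{i-1},x_i)} \;=\; \frac{\alpha\bigl(\len(P[x_0:x_i]) - \len(P[x_0:x_{i-1}])\bigr)}{\len(x_{i-1},x_i)} \;=\; \alpha \;=\; \nabla P,
\]
which is the assertion. The routine parts are the definition-chasing for $\fixpath$ and the telescoping of path lengths; the one place where something must actually be proved — and the main obstacle — is the interior-terminal case, i.e.\ ruling out that a terminal lying on $P$ carries a value inconsistent with the linear interpolation. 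The mediant identity together with the maximality of $\nabla P$ over free terminal paths is exactly what forces this consistency, and the hypothesis that $P$ has no terminal-terminal edges is what guarantees the two subpaths $P[x_0:x_j]$, $P[x_j:x_r]$ are themselves free terminal paths, so that the maximality applies to them.
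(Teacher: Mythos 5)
Your proof is correct and follows essentially the same route as the paper's: both reduce everything to the case of an interior terminal $x_j$ and invoke the maximality of $\nabla P$ over free terminal paths applied to the subpaths $P[x_0:x_j]$ and $P[x_j:x_r]$ to force $v_0(x_j)$ to agree with the linear interpolation. The only difference is cosmetic --- you argue directly via the mediant inequality, while the paper argues by contradiction on the minimal index where the gradient deviates --- and your write-up is, if anything, the more complete of the two.
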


\begin{proofof}{ of Lemma \ref{lem:gradSteepestPath}}
Suppose this is not true and let $j \in [r]$ be the minimum number such that $\grad[v_1](x_{j-1},x_{j}) \neq \nabla P.$ By definition of $v_1$ we would necessarily have $j<r$ and $v_{j} \in  T(v_0).$ Suppose  $\grad[v_1](x_{j-1},x_{j}) < \nabla P.$ We would then have $v_1(x_0) - v_1(x_j) < \nabla P * \len (P[x_0:x_j]).$  Since $P$ does not have any edges in $T(v_0) \times T(v_0)$,  $P_1 := (x_j,...,x_r)$ would be a free terminal path with $\nabla P_1 > \nabla P.$ This is a contradiction. Other cases can be ruled out similarly. 

\end{proofof}

\begin{proofof}{ of Theorem \ref{thm:basics:meta}}
  Consider an arbitrary run of {\meta} on $(G,v_0).$ Let
  $v_{\mathsf{out}}$ be the complete voltage assignment output by
  {\meta}. Proposition~\ref{prop:basics:fix-well-posed} implies that
  $v_{\mathsf{out}}$ extends $v_{0}.$
  Lemma~\ref{lem:basics:lex-minimal} implies that for any complete
  voltage assignment $v \neq v_{\mathsf{out}}$ that extends $v_0,$ we
  have   $\grad[v_{\mathsf{out}}] \lexless \grad[v].$ Thus,
  $v_{\mathsf{out}}$ is a lex-minimizer. Moreover, the lemma also
  gives that for any such $v,$ 
$\grad[v] \nlexless \grad[v_{\mathsf{out}}].$ 
 and hence $v_{\mathsf{out}}$ is a unique lex-minimizer.
  Thus,
  $v_{\mathsf{out}}$ is the unique voltage assignment satisfying
  Def.~\ref{def:lex-min}, and we denote it as $\lex_G[v_{0}].$ Since
  we started with an arbitrary run of {\meta}, uniqueness implies that
  every run of {\meta} on $(G,v_{0})$ must output $\lex_G[v_{0}].$
\end{proofof}

\begin{proofof}{of Lemma \ref{lem:basics:inf-duality}}
  Suppose we have a complete voltage assignment $v$ extending $v_{0},$
  such that $\norm{\grad[v]}_{\infty} \le \alpha.$ For any terminal
  path $P = (x_0,\ldots,x_r),$ we get,
  \[\nabla P(v_{0}) = v_{0}(\partial_{0} P) - v_{0}(\partial_{1} P) =
  v(\partial_{0} P) - v(\partial_{1} P) = \sum_{i=1}^{r}
  \grad[v](x_{i-1},x_{i}) \le \alpha \cdot \sum_{i=1}^{r}
  \len(x_{i-1},x_{i}) = \alpha \cdot \ell(P),\]
giving $\nabla P(v_{0}) \le \alpha.$ 

On the other hand, suppose every terminal path $P$ in $(G,v_{0})$
satisfies $\nabla P(v_{0}) \le \alpha.$ Consider
$v \defeq \lex_G[v_{0}].$ We know that $v$ extends $v_{0}.$ For
every edge $e \in E_{G} \cap T(v_{0}) \times T(v_{0}),$ $e$ is a (trivial)
terminal path in $(G,v_{0}),$ and hence has satisfies
$\grad[v](e) = \grad[v_{0}](e) = \nabla {e}(v_{0}) \le \alpha.$
Considering the reverse edge, we also obtain
$-\grad[v](e) \le \alpha.$ Thus, $|\grad[v](e)| \le \alpha.$ Moreover,
using Lemma~\ref{lem:basics:no-steeper-path}, we know that for edge
$e \in E_G \setminus T(v_{0}) \times T(v_{0}),$
$|\grad[v](e)| \le \alpha_{1}^{\star} = \nabla P_{1}^{\star} \le
\alpha$
since $P_{1}$ is a terminal path in $(G,v_{0}).$ Thus, for every
$e \in E_G,$ $|\grad[v](e)| \le \alpha,$ and hence
$\norm{\grad[v]}_{\infty} \le \alpha.$
\end{proofof}

\subsection{Stability}
In this subsection, we sketch a proof of the monotonicity of lex-minimizers and show how it implies the stability property claimed earlier.

For any well-posed $(G,v_{0}),$ there could be several possible
executions of {\meta}, each characterized by the sequence of paths
$P^{\star}_i.$ We can apply Theorem~\ref{thm:basics:meta} to deduce
the following structural result about the lex-minimizer.
\begin{corollary}
  For any well-posed instance $(G,v_{0}),$ consider a sequence of paths
  $(P_1,\ldots,P_r)$ and voltage assignments $(v_1, \ldots, v_r)$ for
  some positive integer $r$ such that:
\begin{enumerate}
\item $P^{\star}_i$ is a steepest fixable path in $(G_{i-1},v_{i-1})$ for
  $i=1,\ldots,r.$
\item $v_i = \fixpath[v_{i-1},P^{\star}_i]$ for $i=1,\ldots,r.$
\item $T(v_r) = V_{G}.$
\end{enumerate}
Then, we have $v_r = \lex_G[v_{0}].$
\end{corollary}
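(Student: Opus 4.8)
The plan is to derive this corollary directly from Corollary~\ref{cor:fix-path}, together with the trivial fact (noted right after Definition~\ref{def:lex-min}) that $\lex_G[v] = v$ whenever $T(v) = V_G$. The point is that conditions (1)--(3) simply describe a valid sequence of steepest-fixable-path fixes, and Corollary~\ref{cor:fix-path} guarantees that each such fix leaves the lex-minimizer unchanged.

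First I would prove by induction on $i = 0,1,\ldots,r$ the two claims: (a) $(G,v_i)$ is a well-posed instance, and (b) $\lex_G[v_i] = \lex_G[v_0]$. The base case $i=0$ is immediate. For the inductive step, assume (a) and (b) for $i-1$. Condition (1) says $P^\star_i$ is a steepest fixable path in $(G_{i-1},v_{i-1})$; a steepest fixable path is in particular a free terminal path, so it uses no terminal--terminal edge and is therefore equally a steepest fixable path of $(G,v_{i-1})$, and its mere existence forces $T(v_{i-1}) \neq V_G$. Applying Corollary~\ref{cor:fix-path} to the well-posed instance $(G,v_{i-1})$ with the steepest fixable path $P^\star_i$ shows that $(G,\fixpath[v_{i-1},P^\star_i])$ is well-posed and $\lex_G[\fixpath[v_{i-1},P^\star_i]] = \lex_G[v_{i-1}]$. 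Since $v_i = \fixpath[v_{i-1},P^\star_i]$ by condition (2), and $\lex_G[v_{i-1}] = \lex_G[v_0]$ by the inductive hypothesis, this establishes (a) and (b) for $i$.

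Finally, condition (3) gives $T(v_r) = V_G$, so $v_r$ is complete and $\lex_G[v_r] = v_r$. Combined with (b) at $i=r$ this yields $v_r = \lex_G[v_r] = \lex_G[v_0]$, which is the claim.

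I do not expect a real obstacle here; the only points needing a little care are (i) maintaining well-posedness of the intermediate instances $(G,v_i)$ so that Corollary~\ref{cor:fix-path} may be reapplied at each step, which is precisely the well-posedness half of that corollary propagated along the induction, and (ii) the remark that deleting terminal--terminal edges does not change which paths count as steepest fixable paths, since those are free terminal paths by definition (so "steepest fixable in $(G_{i-1},v_{i-1})$" and "steepest fixable in $(G,v_{i-1})$" coincide). An equivalent route would be to observe that the hypothesized sequence is literally an execution of \meta\ on $(G,v_0)$ that terminates at iteration $r+1$ with output $v_r$, and then invoke the uniqueness assertion in Theorem~\ref{thm:basics:meta}; this reaches the same conclusion with essentially the same bookkeeping.
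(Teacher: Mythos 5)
Your argument is correct and coincides with the paper's: the paper derives this corollary simply by observing that conditions (1)--(3) describe a run of \meta\ and invoking the uniqueness statement of Theorem~\ref{thm:basics:meta}, which is exactly the alternative route you note at the end. Your primary route---induction on $i$ using Corollary~\ref{cor:fix-path} to propagate well-posedness and the invariance of $\lex_G[\cdot]$, together with the observation that steepest fixable paths are free terminal paths and hence unaffected by deleting terminal--terminal edges---is just a spelled-out version of the same argument, and the details you supply are sound.
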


We call such a sequence of paths and voltages to be a decomposition of
$\lex_G[v_{0}].$ Again, note that $\lex_G[v_{0}]$ can possibly
have multiple decompositions. However, any two such decompositions are
\emph{consistent} in the sense that they produce the same voltage
assignment.

\newcommand\vecone{\boldsymbol{\mathit{1}}}

\begin{proofof}{of Corollary~\ref{cor:stability}}
 We first define some operations on partial assignments which simplifies the notation. Let   $v_{0},v_{1}$ be any two partial assignments with the same set of terminals $T := T(v_{0}) = T(v_{1})$ and $c,d \in \rea$.  By $cv_{0} +d$ we mean a partial assignment $v$ with $T(v) = T$ satisfying  $v(t) = cv_{0}(t) +d$ for all $t \in T$. Also, by $v_{0} +v_{1}$ we mean a partial assignment $v$ with $T(v) = T$ satisfying $v(t) = v_{0}(t) +v_{1}(t)$ for all $t \in T.$ Also, we say $v_{1} \ge v_{0}$ if $v_{1}(t) \ge v_{0}(t)$ for all $t \in T$.
  
%

Now we can show how Corollary \ref{cor:stability} follows from Theorem \ref{thm:monotonicity}. Let $v := v_{1} - v_{0}$, and $\nbr{v}_{\infty} = \epsilon$, for some $\epsilon >0.$ 
Therefore, 
$v_{0} + \epsilon  \ge v_{1} \ge v_{0} - \epsilon.$  
 Theorem \ref{thm:monotonicity} then implies that 
 $\lex_G[v_{0}] + \epsilon \ge \lex[v_{1}] \ge \lex[v_{0}] - \epsilon,$   
hence proving the corollary.
\end{proofof}

\begin{proofof}{sketch of Theorem \ref{thm:monotonicity}}
 It is easy to see that the first statement holds. For the second statement, we first observe that if there is a sequence of paths $P_1,...,P_r$ that is simultaneously a  decomposition of both $\lex[v_{0}]$ and $\lex[v_{1}]$, then this is easy to see. If such a path sequence doesn't exist, then we look at $v_t := v_{0} + t(v_{1} - v_{0})$.  We state here without a proof (though the proof is elementary) that we can then split the interval $[0,1]$ into finitely many subintervals $[a_0,a_1],[a_1,a_2],..,[a_{k-1},a_k]$, with  $a_0 = 0, a_k = 1$,  such that for any $i$, there is a path sequence $P_1,...,P_r$  which is a decomposition of $\lex[v_t]$ for all $t \in [a_i,a_{i+1}]$. We then observe that 
 $v_{0} = v_{a_0} \le v_{a_1} \le ... v_{a_k} = v_{1}.$ 
 Since for every $a_i, a_{i+1}$, there is a path sequence which is simultaneously a decomposition of both  $\lex[v_{a_i}]$ and $\lex[v_{a_{i+1}}]$, we immediately get 
 $$\lex[v_{0}] = \lex[v_{a_0}] \le \lex[v_{a_1}] \le ... \le \lex[v_{a_k}] = \lex[v_{1}].$$ 
 \end{proofof}
 \Anote{full proof in the extended version}

\subsection{Alternate Characterizations}
\begin{proofof}{of Theorem~\ref{thm:basics:max-min}}
  We know that $\lex_G[v_{0}]$ extends $v_{0}.$ We first prove that
  $v \defeq \lex_G[v_{0}]$ satisfies the max-min gradient averaging
  property. Assume to the contrary. Thus, there exists
  $x \in V_G \setminus T(v_{0})$ such that
 \[\max_{y:(x,y) \in E_{G}} \grad[v](x,y) \neq - \min_{y:(x,y) \in E_{G}}
 \grad[v](x,y).\]
 Assume that
 $\max_{(x,y) \in E_{G}} \grad[v](x,y) \ge - \min_{(x,y) \in E_{G}}
 \grad[v](x,y).$
 Then, consider $v^\prime$ extending $v_{0}$ that is identical to $v$
 except for $v^{\prime}(x) = v(x) - \eps$ for $\eps > 0.$ For $\eps$
 small enough, we get that 
 \[\max_{y:(x,y) \in E_{G}} \grad[v^\prime](x,y) < \max_{y:(x,y)
  \in E_{G}} \grad[v](x,y)\]
  and
 \[- \min_{y:(x,y) \in E_{G}} \grad[v^\prime](x,y) < \max_{y:(x,y)
  \in E_{G}} \grad[v](x,y).\] 
  
The gradient of edges not incident on the vertex $x$ is left unchanged. This implies that $\grad[v] \nlexless \grad[v^{\prime}],$ contradicting
the assumption that $v$ is the lex-minimizer. (The other case is
similar).

For the other direction. Consider a complete voltage assignment $v$
extending $v_{0}$ that satisfies the max-min gradient averaging
property w.r.t. $(G,v_{0}).$ Let
$$\alpha \defeq \max_{\substack{(x,y) \in E_G \\ x \in V \setminus T(v_{0})}} \grad[v](x,y) \ge 0 $$ 
be the maximum edge gradient, and consider any edge
$(x_{0},x_{1}) \in E_{G}$ such that
$\grad[v](x_{1},x_{0}) = \alpha,$ with
$x_{1} \in V \setminus T(v_{0}).$ If $\alpha = 0,$ $\grad[v]$ is
identically zero, and is trivially the lex-minimal gradient
assignment. Thus, both $v$ and $\lex_G[v_{0}]$ are constant on each
connected component. Since $(G,v_{0})$ is well-posed, there is at
least one terminal in each component, and hence $v$ and
$\lex_G[v_{0}]$ must be identical.

Now assume $\alpha > 0.$ By the max-min gradient averaging property,
$\exists x_{2} \in V_G$ such that $(x_{1},x_{2}) \in E_G$ and
\begin{align*}
  \grad[v](x_{1},x_{2}) 
  = \min_{y:(x_{1},y) \in E_G} \grad[v](x_{1},y) 
  & =  -\max_{y:(x_{1},y) \in E_G} \grad[v](x_{1},y) \\ 
& \le -  \grad[v](x_{1},x_{0}) 
  = -\alpha.
\end{align*}
Thus, $\grad[v](x_{2},x_{1}) \ge \alpha.$ Since $\alpha$ is the
maximum edge gradient, we must have
$\grad[v](x_{2},x_{1}) = \alpha.$ Moreover,
$v(x_{2}) > v(x_{1}) > v(x_{0}),$ thus $x_{2} \neq x_{0}.$ We can
inductively apply this argument at $x_{2}$ until we hit a
terminal. Similarly, if $x_{0} \notin T(v_{0})$ we can extend the path
in the other direction. Consequently, we obtain a path
$P = (x_{j},\ldots,x_{2},x_{1},x_{0},x_{-1},\ldots,x_{k})$ with all
vertices as distinct, such that $x_{j},x_{k} \in T(v_{0}),$ and
$x_{i} \in V \setminus T(v_{0})$ for all $i \in [j+1,k-1].$ Moreover,
$\grad[v](x_{i},x_{i-1}) = \alpha$ for all $j < i \le k.$ Thus, $P$ is
a free terminal path with $\nabla P[v_0] = \alpha.$

Moreover, since $v$ is a voltage assignment extending $v_{0}$ with
$\norm{\grad[v]}_{\infty} = \alpha,$ using
Lemma~\ref{lem:basics:inf-duality}, we know that every terminal path
$P^{\prime}$ in $(G,v_{0})$ must satisfy
$\nabla P^{\prime}(v_{0}) \le \alpha.$ Thus, $P$ is a steepest fixable
path in $(G,v_{0}).$ Thus, letting $v_1 \defeq \fixpath[v_{0},P],$
using Corollary~\ref{cor:fix-path}, we obtain that
$\lex_{G}[v_{1}] = \lex_{G}[v_{0}].$ Moreover, since
$\alpha = \nabla P[v_{0}] = \grad[v](x_{i},x_{i-1})$ for all
$i \in (j,k],$ we get $v_1(x_i) = v(x_i)$ for all $i \in (j,k).$ Thus,
$v$ extends $v_1.$

We can iterate this argument for $r$ iterations until
$T(v_{r}) = V_G,$ giving $v = v_{r}$ and
$v_{r} = \lex_{G}[v_{r}] = \lex_G[v_{0}].$ (Since we are fixing at
least one terminal at each iteration, this procedure
terminates). Thus, we get $v = \lex_G[v_{0}].$
\end{proofof}


\section{Description of the Algorithms}


\begin{mdframed}
\vspace{\algtopspace}
\captionof{table}{ \label{alg:dijkstra}\moddijkstra$(G,v_{0},\alpha)$:
  Given a well-posed instance $(G,v_{0}),$ a gradient value
  $\alpha \ge 0,$ outputs a complete voltage assignment $v$ for $G,$
  and an array $\parent : V \to V \cup \{\mathsf{null}\}.$ }
    \vspace{\algpostcaptionspace}
    \begin{tight_enumerate}
      \item \FOR $x \in V_G,$ 
      \item \hspace{\pgmtab} Add $x$ to a fibonacci heap, with ${\mathsf{key}}(x) = +\infty.$ 
      \item \hspace{\pgmtab} $\mathsf{finished}(x) \leftarrow \FALSE$
      \item \FOR $x \in T(v_{0})$
      \item \hspace{\pgmtab} Decrease ${\mathsf {key}}(x)$ to $v_{0}(x).$ 
      \item \hspace{\pgmtab} $\parent(x) \leftarrow \mathsf{null}.$
      \item \WHILE heap is not empty
      \item \hspace{\pgmtab} $x \leftarrow$ pop element with minimum $\mathsf{key}$ from heap
      \item \hspace{\pgmtab} $v(x) \leftarrow {\mathsf{key}}(x).$ ${\mathsf{finished}}(x) \leftarrow \TRUE.$
      \item \hspace{\pgmtab} \FOR $y  : (x,y) \in E_{G}$
      \item \hspace{\pgmtab}\hspace{\pgmtab} \IF ${\mathsf{finished}}(y) = \FALSE$
      \item \hspace{\pgmtab} \hspace{\pgmtab}\hspace{\pgmtab} \IF ${\mathsf{key}}(y) > v(x) + \alpha\cdot\len(x,y)$
      \item \hspace{\pgmtab} \hspace{\pgmtab}\hspace{\pgmtab}\hspace{\pgmtab} 
      Decrease ${\mathsf {key}}(y)$ to $v(x) + \alpha\cdot\len(x,y).$ 
      \item \hspace{\pgmtab} \hspace{\pgmtab}\hspace{\pgmtab}\hspace{\pgmtab} 
      $\parent(y) \leftarrow x.$ 
      \item \RETURN $(v,\parent)$
      \end{tight_enumerate}
\end{mdframed}
\begin{theorem}
  For a well-posed instance $(G,V_{0})$ and a gradient value
  $\alpha \ge 0,$ let
  $(v,\parent) \leftarrow$ \moddijkstra$(G,v_{0},\alpha).$ Then, $v$
  is a complete voltage assignment such that, 
  $\forall x \in V_{G},$
  $v(x) = \min_{t \in T(v_{0})} \{ v_{0}(t) + \alpha\dis(x,t)\}.$
Moreover, the pointer array $\parent$ satisfies $\forall x \notin T(v_{0}),$
$\parent(x) \neq {\mathsf{null}}$ and $v(x) = v(\parent(x)) +
\alpha\cdot \len(x,\parent(x)).$ 
\end{theorem}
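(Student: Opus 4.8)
The plan is to prove correctness of \moddijkstra\ by recognizing it as Dijkstra's shortest-path algorithm run on an auxiliary source. First I would introduce a virtual super-source $s$ connected to each terminal $t \in T(v_0)$ by an edge of length $v_0(t)/\alpha$ (handling the degenerate case $\alpha = 0$ separately, where every vertex reachable from a terminal inherits that terminal's value and the claim reduces to constancy on components). Then running standard Dijkstra from $s$ with edge weights $\alpha \cdot \len$ on the original edges assigns to each $x$ the quantity $\min_{t \in T(v_0)}\{v_0(t) + \alpha \dis(x,t)\}$, since a shortest path from $s$ to $x$ must first hop to some terminal $t$ (cost $v_0(t)$) and then take a shortest path in $G$ (cost $\alpha \dis(x,t)$). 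The initialization in lines 1--6 exactly encodes ``$s$ has been popped and has relaxed its incident edges,'' so the \WHILE\ loop is Dijkstra proper.

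The key steps, in order: (1) State the super-source reformulation and check that shortest-path distances from $s$ equal the claimed formula for $v(x)$; in particular $v(x)$ is finite for every $x$ because $(G,v_0)$ is well-posed, so every vertex lies in a component containing a terminal. (2) Invoke correctness of Dijkstra's algorithm to conclude that when $x$ is popped, ${\mathsf{key}}(x)$ equals this shortest-path distance, hence $v(x)$ is set correctly; the only nonstandard feature is the multiple initial keys, which is the standard multi-source Dijkstra and is equivalent to the single super-source picture. (3) For the \parent\ claim: whenever line 12's test succeeds and we decrease ${\mathsf{key}}(y)$ we simultaneously set $\parent(y) \leftarrow x$ in line 14, and $x$ at that moment satisfies $v(x) = {\mathsf{key}}(x)$ (it was just popped and finalized in line 9). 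Since every non-terminal $y$ has $v(y) < +\infty$, its key was decreased at least once, so $\parent(y) \ne {\mathsf{null}}$ when the algorithm terminates; and the last time it was decreased it was set to the popped vertex $x^\ast$ achieving ${\mathsf{key}}(y) = v(x^\ast) + \alpha\len(x^\ast,y)$, with no later relaxation lowering it further, so $v(y) = v(\parent(y)) + \alpha \len(y,\parent(y))$.

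The only subtlety I would be careful about — the main obstacle, such as it is — is the bookkeeping around the \parent\ pointer: I must argue that the \emph{final} value of $\parent(y)$ (not some stale intermediate one) is the vertex that witnesses the equation. This follows because $\parent(y)$ is updated in lockstep with strict decreases of ${\mathsf{key}}(y)$, and at the moment $y$ is finally popped its key equals $v(y)$; the last update came from some finished vertex $x^\ast$ with ${\mathsf{key}}(y)$ set to $v(x^\ast) + \alpha\len(x^\ast,y)$, and since Dijkstra never revisits finished vertices and ${\mathsf{key}}(y)$ never increases, that equality still holds at termination. One should also note $\parent(x) = {\mathsf{null}}$ is correct precisely for $x \in T(v_0)$ whose key is never decreased below $v_0(x)$ (a terminal could in principle have its key decreased further, but then it would have a non-null parent and the claim only asserts the property for non-terminals, so this causes no problem). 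I would then remark that this theorem is exactly what justifies the \compvlow\ routine, and that \compvhigh\ is symmetric.
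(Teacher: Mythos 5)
Your proposal is correct. The paper states this theorem without proof, treating it as a standard property of multi-source Dijkstra, and your super-source reformulation is exactly the standard way to make that precise; your handling of the $\parent$ pointer (updated in lockstep with strict key decreases, witness frozen at the final decrease) and of terminals whose keys get decreased below $v_{0}(t)$ is also right. The one point worth making explicit: when some $v_{0}(t)<0$, the super-source edges (equivalently, the initial keys) are negative, which would violate the usual nonnegativity hypothesis of Dijkstra; but the standard exchange argument only needs the edges relaxed \emph{inside} the main loop to be nonnegative, and those all have weight $\alpha\cdot\len(x,y)\ge 0$, so your ``$s$ has already been popped'' framing does cover this --- just say so rather than leaving it implicit.
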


\begin{mdframed}
  \vspace{\algtopspace}
  \captionof{table}{\label{alg:vlow} Algorithm
    \compvlow$(G,v_{0},\alpha)$: Given a well-posed instance
    $(G,v_{0}),$ a gradient value $\alpha \ge 0,$ outputs
    $\vLow,$ a complete voltage assignment for $G,$ and an
    array $\LParent$ $: V \to V \cup \{\mathsf{null}\}$.}
  \vspace{\algpostcaptionspace}
    \begin{tight_enumerate}
      \item $(\vLow,\LParent) \leftarrow$ \moddijkstra$(G, v_0,\alpha)$
      \item \RETURN  $(\vLow,\LParent)$
      \end{tight_enumerate}
   \end{mdframed}

\begin{mdframed}
  \vspace{\algtopspace}
  \captionof{table}{\label{alg:vhigh} Algorithm
    \compvhigh$(G,v_{0},\alpha)$: Given a well-posed instance
    $(G,v_{0}),$ a gradient value $\alpha \ge 0$, outputs
    $\vHigh,$ a complete voltage assignment for $G$, and
    an array $\HParent : V \to V \cup \{\mathsf{null}\}$.}
    \vspace{\algpostcaptionspace}
    \begin{tight_enumerate}
      \item \FOR $x \in  V_{G}$
      \item \hspace{\pgmtab} \IF $x \in T(v_{0})$ \THEN $v_{1}(x) \leftarrow -v_{0}(x)$ \ELSE $v_{1}(x) \leftarrow v_{1}(x).$
      \item $(\temp,\HParent) \leftarrow$ \moddijkstra$(G, v_1,\alpha)$
      \item \FOR $x \in V_{G} : \vHigh (x) \leftarrow - \temp(x)$
      \item \RETURN $(\vHigh,\HParent)$
      \end{tight_enumerate}
  \end{mdframed}
  
\begin{corollary}
  For a well-posed instance $(G,V_{0})$ and a gradient value
  $\alpha \ge 0,$ let $(\vLow[\alpha],\LParent) \leftarrow$
  \compvlow$(G,v_{0},\alpha)$ and
  $(\vHigh[\alpha],\HParent) \leftarrow$ \compvhigh$(G,v_{0},\alpha).$
  Then, $\vLow[\alpha], \vHigh[\alpha]$ are complete voltage
  assignments for $G$ such that, $\forall x \in V_{G},$
\begin{align*}
  \vLow[\alpha](x) = \min_{t \in T(v_{0})} \{ v_{0}(t) + \alpha\cdot \dis(x,t)\} &&
  \vHigh[\alpha](x) = \max_{t \in T(v_{0})} \{ v_{0}(t) - \alpha\cdot \dis(t,x)\}.
\end{align*}
 Moreover, the pointer arrays $\LParent, \HParent$ satisfy $\forall x \notin T(v_{0}),$
$\LParent(x),\HParent(x) \neq {\mathsf{null}}$ and 
\begin{align*}
\vLow[\alpha](x) & = \vLow[\alpha](\LParent(x)) +
\alpha\cdot \len(x,\LParent(x)), \\
\vHigh[\alpha](x) & = \vHigh[\alpha](\HParent(x)) -
\alpha\cdot \len(x,\HParent(x)).
\end{align*}
\end{corollary}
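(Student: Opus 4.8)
The plan is to derive both statements as near-immediate consequences of the guarantee already established for \moddijkstra. That theorem says \moddijkstra$(G,v_{0},\alpha)$ returns a pair $(v,\parent)$ in which $v$ is the complete assignment $v(x)=\min_{t\in T(v_{0})}\{v_{0}(t)+\alpha\cdot\dis(x,t)\}$, and in which $\parent(x)\neq\mathsf{null}$ with $v(x)=v(\parent(x))+\alpha\cdot\len(x,\parent(x))$ for every non-terminal $x$. Since \compvlow$(G,v_{0},\alpha)$ simply returns the output of \moddijkstra$(G,v_{0},\alpha)$ verbatim (with $\LParent=\parent$), the two claimed statements about $\vLow[\alpha]$ and $\LParent$ are exactly that theorem specialized to $(G,v_{0},\alpha)$, and there is nothing more to do in that case.

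For $\vHigh[\alpha]$ I would track the two sign flips that \compvhigh\ performs around its call to \moddijkstra. Let $v_{1}$ be the partial assignment with $T(v_{1})=T(v_{0})$ and $v_{1}(t)=-v_{0}(t)$ for $t\in T(v_{0})$ --- the assignment built in the first loop of \compvhigh\ (reading the \emph{else} branch as leaving $v_{1}(x)=v_{0}(x)=\ast$, i.e.\ keeping non-terminals non-terminal). Negating finite values creates and destroys no terminals, so $(G,v_{1})$ is again well-posed, and the \moddijkstra\ guarantee applied to $(G,v_{1},\alpha)$ gives $\temp(x)=\min_{t\in T(v_{0})}\{-v_{0}(t)+\alpha\cdot\dis(x,t)\}$. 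Hence
\[
\vHigh[\alpha](x)=-\temp(x)=-\min_{t\in T(v_{0})}\{-v_{0}(t)+\alpha\cdot\dis(x,t)\}=\max_{t\in T(v_{0})}\{v_{0}(t)-\alpha\cdot\dis(x,t)\},
\]
and to match the statement verbatim I would invoke symmetry of $\len$ --- so that $\dis$ is a genuine metric with $\dis(x,t)=\dis(t,x)$ --- to rewrite the last expression as $\max_{t}\{v_{0}(t)-\alpha\cdot\dis(t,x)\}$.

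For the pointer array $\HParent$ I would again apply the second half of the \moddijkstra\ guarantee to $(G,v_{1},\alpha)$: for every $x\notin T(v_{1})=T(v_{0})$ it gives $\HParent(x)\neq\mathsf{null}$ and $\temp(x)=\temp(\HParent(x))+\alpha\cdot\len(x,\HParent(x))$; substituting $\temp=-\vHigh[\alpha]$ and multiplying through by $-1$ yields the claimed identity $\vHigh[\alpha](x)=\vHigh[\alpha](\HParent(x))-\alpha\cdot\len(x,\HParent(x))$. There is no genuine obstacle in this proof --- all the real work lives in the \moddijkstra\ theorem --- and the only points to be careful about are that the negation step in \compvhigh\ preserves the terminal set (so the ``$x\notin T(v_{0})$'' clauses transfer cleanly) and the harmless appeal to symmetry of $\len$ when presenting the $\vHigh$ formula.
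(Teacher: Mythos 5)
Your proposal is correct and matches the paper's (implicit) treatment: the paper states this corollary without proof as an immediate consequence of the \moddijkstra\ guarantee, and your argument—specializing that guarantee directly for \compvlow, and tracking the two negations plus the symmetry of $\len$ for \compvhigh—is exactly the routine verification being elided. The one detail worth noting is your reading of the \emph{else} branch in \compvhigh\ as leaving non-terminals unlabeled (the paper's line ``$v_{1}(x) \leftarrow v_{1}(x)$'' is evidently a typo for this), which is the right interpretation and is needed for $T(v_{1})=T(v_{0})$.
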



\begin{mdframed}
\vspace{\algtopspace}
\captionof{table}{ \label{alg:comp-inf-min} Algorithm \compinfmin $(G,v_{0})$: Given a well-posed instance $(G,v_{0})$, outputs a complete voltage assignment $\mathsf{{v}}$ for $G,$ extending $v_{0}$ that minimizes $\norm{\grad[\mathsf{{v}}]}_{\infty}$.}
\vspace{\algpostcaptionspace}
    \begin{tight_enumerate}
      \item $\alpha \leftarrow \max\{|\grad[v_{0}](e)| \ |\  e \in E_{G}
        \cap (T(v_{0}) \times T(v_{0})) \}.$
      \item $E_{G} \leftarrow E_{G} \setminus (T(v_{0}) \times T(v_{0}))$
      \item $P \leftarrow $\steepestpath$(G,v_{0}).$
      \item $\alpha \leftarrow \max\{\alpha,\nabla P(v_{0})\}$
      \item  $(\vLow,\LParent) \leftarrow$ \compvlow$(G,v_{0},\alpha)$
      \item $(\vHigh,\HParent) \leftarrow$ \compvhigh$(G,v_{0},\alpha)$
      \item \FOR $x \in V_{G}$ 
      \item \hspace{\pgmtab} \IF $x \in T(v_{0})$ 
      \item \hspace{\pgmtab} \hspace{\pgmtab} \THEN $\mathsf{v}(x)
        \leftarrow v_{0}(x)$ 
        \item \hspace{\pgmtab} \hspace{\pgmtab} \ELSE $\mathsf{{v}}(x) \leftarrow \frac{1}{2} \cdot (\vLow(x) + \vHigh(x)).$
      \item \RETURN $\mathsf{{v}}$
      \end{tight_enumerate}
  \end{mdframed}

  
  \begin{mdframed}
\vspace{\algtopspace}
\captionof{table}{\label{alg:comp-high-press-graph} Algorithm
  \comphighpressgraph $(G,v_{0},\alpha)$:
  Given a well-posed instance $(G,v_{0}),$
  a gradient value $\alpha \ge 0$, outputs a minimal induced subgraph $G^{\prime}$ of
  $G$ where every vertex has $\pressure[v_{0}](\cdot) > \alpha.$ }
\vspace{\algpostcaptionspace}
    \begin{tight_enumerate}
      \item  $(\vLow,\LParent) \leftarrow$ \compvlow$(G,v_{0},\alpha)$
      \item  $(\vHigh,\HParent) \leftarrow$ \compvhigh$(G,v_{0},\alpha)$
      \item $V_{G^{\prime}} \leftarrow \{ x \in V_G\ |\ \vHigh(x) > \vLow(x)\ \}$
      \item $E_{G^{\prime}} \leftarrow \{ (x,y) \in E_G\ |\ x,y \in
        V_{G^{\prime}} \}.$
      \item $G^{\prime} \leftarrow (V^{\prime}, E^{\prime}, \len)$
      \item \RETURN $G^{\prime}$
      \end{tight_enumerate}
  \end{mdframed}
  
  \begin{proofof}{of Lemma \ref{lem:vlowvhigh-pressure}}
\[
\vHigh[\alpha](x) >\vLow[\alpha](x)
\]
is equivalent to
 \[
 \max_{t \in T(v_{0})} \{ v_{0}(t) - \alpha\cdot \dis(t,x)\} > \min_{t \in T(v_{0})} \{ v_{0}(t) + \alpha\cdot \dis(x,t)\},
\]
which implies that there exists terminals $s,t \in T(v_0)$ such that
\[
 v_{0}(t) - \alpha\cdot \dis(t,x) > v_{0}(s) + \alpha\cdot \dis(x,s)
\]
thus, 
\[
\pressure[v_{0}](x) \geq \frac{ v_{0}(t) - v_{0}(s) }{\dis(t,x)+ \dis(x,s)} > \alpha.
\]
So the inequality on \vHigh\, and \vLow\, implies that pressure is strictly greater than $\alpha$.
On the other hand, if $\pressure[v_{0}](x) > \alpha$, there exists terminals $s,t \in T(v_0)$ such that
\[
\frac{ v_{0}(t) - v_{0}(s) }{\dis(t,x)+ \dis(x,s)} = \pressure[v_{0}](x) > \alpha.
\]
Hence, 
\[
 v_{0}(t) - \alpha\cdot \dis(t,x) > v_{0}(s) + \alpha\cdot \dis(x,s)
\]
which implies $\vHigh[\alpha](x) >\vLow[\alpha](x)$.
\end{proofof}
  
  \begin{mdframed}
\vspace{\algtopspace}
\captionof{table}{  \label{alg:steepest-path} Algorithm \steepestpath$(G,v_{0})$: Given  a well-posed instance $(G,v_{0}),$ with 
$T(v_{0}) \neq V_{G},$ 
outputs a steepest free terminal path $P$ in $(G,v_{0}).$}
    \vspace{\algpostcaptionspace}
    \begin{tight_enumerate}
      \item Sample uniformly random $e \in E_{G}.$ Let $e = (x_{1},x_{2}).$
      \item Sample uniformly random $x_{3} \in V_{G}.$
      \item \FOR $i=1$ \TO $3$
      \item \hspace{\pgmtab} $P
        \leftarrow $ \vertexsteepestpath$(G,v_{0},x_{i})$
        \item Let $j \in \argmax_{j \in \{1,2,3\}} \nabla P_{j}(v_{0})$
      \item $G^{\prime} \leftarrow $ \comphighpressgraph$(G,v_{0},\nabla P_{j}(v_{0}))$
      \item \IF $E_{G^{\prime}} = \emptyset,$ 
      \item \hspace{\pgmtab} \THEN \RETURN $P_{j}$ 
      \item \hspace{\pgmtab} \ELSE \RETURN
        \steepestpath$(G^{\prime},v_{0}|_{V_{G^{\prime}}})$
      \end{tight_enumerate}
  \end{mdframed}

 \begin{mdframed}
\vspace{\algtopspace}
\captionof{table}{ \label{alg:comp-lex-min} Algorithm \complexmin$(G,v_{0})$: Given a well-posed instance $(G,v_{0}),$ with $T(v_{0}) \neq V_{G}$, outputs  $\lex_G[v_{0}]$.}
    \vspace{\algpostcaptionspace}
    \begin{tight_enumerate}
      \item \WHILE $T(v_{0}) \neq V_{G}$
      \item \hspace{\pgmtab} $E_{G} \leftarrow E_{G} \setminus
        (T(v_{0}) \times T(v_{0}))$
      \item \hspace{\pgmtab} $P \leftarrow $ \steepestpath$(G,v_{0})$
      \item \hspace{\pgmtab} $v_{0} \leftarrow $ \fixpath$[v_{0},P]$
      \item \RETURN $v_{0}$
      \end{tight_enumerate}
 \end{mdframed}

  \begin{mdframed}
\vspace{\algtopspace}
\captionof{table}{\label{alg:vertex-steepest-path} Algorithm
  \vertexsteepestpath$(G,v_{0},x)$: Given a well-posed instance
  $(G,v_{0}),$ and a vertex $x \in V_{G}$, outputs a steepest terminal
  path in $(G,v_{0})$ through $x$.}
    \vspace{\algpostcaptionspace}
    \begin{tight_enumerate}
      \item Using Dijkstra's algorithm, compute $\dis(x,t)$ for all
        $t \in T(v_{0})$
        \item \IF $x \in T(v_{0})$
        \item \hspace{\pgmtab} $y \leftarrow \argmax_{y \in T(v_{0})}
          \frac{|v_{0}(x)-v_{0}(y)|}{\dis(x,y)}$
        \item \hspace{\pgmtab} \IF $v_0(x) \ge v_{0}(y)$
        \item \hspace{\pgmtab} \hspace{\pgmtab} \THEN \RETURN a
          shortest path from $x$ to $y$ 
        \item \hspace{\pgmtab} \hspace{\pgmtab} \ELSE \RETURN a
          shortest path from $y$ to $x$ 
        \item \ELSE
          \item \hspace{\pgmtab} \FOR $t \notin T(v_{0}),$
          $\mathsf{d}(t) \leftarrow \dis(x,t)$
      \item \hspace{\pgmtab} $(t_{1},t_{2}) \leftarrow$
        \starsteepestpath$(T(v_{0}),v_{0}|_{T(v_{0})},\mathsf{d})$
      \item \hspace{\pgmtab} Let $P_{1}$ be a shortest path from $t_{1}$ to $x.$ Let
        $P_{2}$ be a shortest path from $x$ to $t_{2}.$
      \item \hspace{\pgmtab} $P \leftarrow (P_{1},P_{2}).$ \RETURN $P.$
      \end{tight_enumerate}
 \end{mdframed}

\begin{mdframed}
\vspace{\algtopspace}
 \captionof{table}{\label{alg:star-steepest-path}
   \starsteepestpath$(T,v,\mathsf{d})$: Returns the steepest path in a
   star graph, with a single non-terminal connected to terminals in
   $T,$ with lengths given by $\mathsf{d}$, and voltages given by $v.$}
    \vspace{\algpostcaptionspace}
    \begin{tight_enumerate}
      \item Sample $t_1$ uniformly and randomly from $T$
      \item Compute 
$t_{2} \in \argmax_{t \in T} \frac{|v(t_1) - v(t)|}{\mathsf{d}(t_{1}) +
  \mathsf{d}(t)}$
\item $\alpha \leftarrow \frac{|v(t_2) - v(t_{1})|}{\mathsf{d}(t_{1}) +
  \mathsf{d}(t_{2})}$
\item Compute $v_{\mathsf{low}} \leftarrow \min_{t \in T} (v(t) + \alpha\cdot {\mathsf
   {d}}(t))$
\item $T_{\mathsf{low}} \leftarrow \{ t \in T\ |\ v(t) > v_{\mathsf{low}} +
  \alpha\cdot \mathsf{{d}}(t)\}$
\item Compute $v_{\mathsf{high}} \leftarrow \max_{t \in T} (v(t) - \alpha\cdot {\mathsf
   {d}}(t))$
\item $T_{\mathsf{high}} \leftarrow \{ t \in T\ |\ v(t) < v_{\mathsf{high}} -
  \alpha\cdot \mathsf{{d}}(t)\}$
\item $T^\prime \leftarrow T_{\mathsf{low}} \cup T_{\mathsf{high}}.$
\item \IF $T^\prime = \emptyset$ 
\item \hspace{\pgmtab} \THEN \IF $v(t_{1}) \ge v(t_{2})$ \THEN \RETURN
  $(t_{1},t_{2})$ \ELSE \RETURN $(t_{2},t_{1})$
\item \hspace{\pgmtab} \ELSE \RETURN \starsteepestpath$(T^{\prime}, v|_{T^{\prime}}, \mathsf{{d}}_{T^{\prime}})$
    \end{tight_enumerate}
\end{mdframed}

\subsection{Faster Lex-minimization}

 \begin{mdframed}
\vspace{\algtopspace}
\captionof{table}{ \label{alg:comp-fast-lex-min} Algorithm \compfastlexmin$(G,v_{0})$: Given a well-posed instance $(G,v_{0}),$ with $T(v_{0}) \neq V_{G}$, outputs  $\lex_G[v_{0}]$.}
    \vspace{\algpostcaptionspace}
    \begin{tight_enumerate}
      \item \WHILE $T(v_{0}) \neq V_{G}$
      \item \hspace{\pgmtab} $v_{0} \leftarrow$ \fixpathsabovepress$(G,v_{0},0)$
      \item \RETURN $v_{0}$
      \end{tight_enumerate}
 \end{mdframed}

 \begin{mdframed}
\vspace{\algtopspace}
\captionof{table}{ \label{alg:fix-paths-above-press} Algorithm
  \fixpathsabovepress$(G,v_{0},\alpha)$: Given a well-posed instance
  $(G,v_{0}),$ with $T(v_{0}) \neq V_{G}$, and a gradient value
  $\alpha,$ iteratively fixes all paths with gradient $> \alpha$.}
\vspace{\algpostcaptionspace}
    \begin{tight_enumerate}
      \item \WHILE $T(v_{0}) \neq V_{G}$
      \item \hspace{\pgmtab} $E_{G} \leftarrow E_{G} \setminus
        (T(v_{0}) \times T(v_{0}))$
      \item \hspace{\pgmtab} Sample uniformly random $e \in E_{G}.$ Let $e = (x_{1},x_{2}).$
      \item \hspace{\pgmtab} Sample uniformly random $x_{3} \in V_{G}.$
      \item \hspace{\pgmtab}  \FOR $i=1$ \TO $3$
      \item \hspace{\pgmtab} \hspace{\pgmtab} $P_i \leftarrow $
        \vertexsteepestpath$(G,v_{0},x_{i})$
        \item  \hspace{\pgmtab}  Let $j \in \argmax_{j \in \{1,2,3\}}
          \nabla P_{j}(v_{0})$
      \item \hspace{\pgmtab}  $G^{\prime} \leftarrow $ \comphighpressgraph$(G,v_{0},\nabla P_{j}(v_{0}))$
      \item \hspace{\pgmtab} \IF $E_{G^{\prime}} = \emptyset,$ 
      \item \hspace{\pgmtab}  \hspace{\pgmtab}  \THEN $v_{0} \leftarrow $ \fixpath$[v_{0},P]$
      \item \hspace{\pgmtab} \hspace{\pgmtab} \ELSE Let
        $G^{\prime}_{i}, i=1,\ldots,r$ be the connected components of $G^{\prime}.$
      \item \hspace{\pgmtab} \hspace{\pgmtab} \hspace{\pgmtab} \FOR
        $i=1,\ldots,r$ 
      \item \hspace{\pgmtab} \hspace{\pgmtab} \hspace{\pgmtab}
        \hspace{\pgmtab} $v_{i} \leftarrow $
        \fixpathsabovepress$(G^{\prime}_{i},v_{0}|_{V_{G^{\prime}_{i}}},\nabla
        P_{j}(v_{0}))$
        \item \hspace{\pgmtab} \hspace{\pgmtab} \hspace{\pgmtab}  \hspace{\pgmtab} \FOR $x
          \in V_{G^{\prime}_{i}},$ set $v_{0}(x) \leftarrow v_{i}(x)$
        \item \hspace{\pgmtab} \IF $\alpha > 0$ \THEN
          $G \leftarrow $\comphighpressgraph$(G,v_{0},\alpha)$
      \item \RETURN $v_{0}$
      \end{tight_enumerate}
 \end{mdframed}


\section{Experiments on WebSpam: Testing More Algorithms}
\label{sec:all-directions-comparison}

For completeness, in this appendix we show how a number of algorithms perform on the web spam experiment of Section~\ref{sec:experiments}. We consider the following algorithms:
\begin{itemize}
\item
\textsc{RandWalk} along in-links. For a detailed description see  \cite{Zhou2007}. This algorithm essentially performs a Personalized PageRank random walk from each vertex $x$ and computes a spam-value for the vertex $x$ by taking a weighted average of the labels of the vertices where the random walk from $x$ terminates.
\emph{Also shown in Section~\ref{sec:experiments}.}
\item
\textsc{DirectedLex}, with edges in the opposite directions of links.
This has the effect that a link to a spam host is evidence of spam,
  and a link from a normal host is evidence of normality.  \emph{Also shown in Section~\ref{sec:experiments}.}
\item
\textsc{RandWalk} along out-links.
\item
\textsc{DirectedLex}, with edges in the directions of links.
This has the effect that a link from to a spam host is evidence of spam,
  and a link to a normal host is evidence of normality.
\item
\textsc{UndirectedLex}: Lex-minimization with links treated as undirected edges.
\item
\textsc{Laplacian}: $l_2$-regression with links treated as undirected edges. 
\item
\textsc{Directed 1-Nearest Neighbor}: Uses shortest distance along paths following out-links. \emph{Spam-ratio} is defined distance from normal hosts, divided by distance to spam hosts. Sites are flagged as spam when spam-ratio exceeds some threshold. We also tried following paths along in-links instead, but that gave much worse results. 
\end{itemize}
We use the experimental setup described in Section~\ref{sec:experiments}. Results are shown in Figure~\ref{fig:alldirs-experiment-image}.
The alternative convention for \textsc{DirectedLex} orients edges in the directions of links. This takes a link from a spam host to be evidence of spam, and a link to a normal host to be evidence of normality. This approach performs significantly worse than our preferred convention, as one would intuitively expect. \textsc{UndirectedLex} and \textsc{Laplacian} approaches also perform significantly worse. \textsc{Directed 1-Nearest Neighbor} performs poorly, demonstrating that \textsc{DirectedLex} is very different from that approach.
As observed by \cite{Zhou2007}, sampling based on a random walk following out-links performs worse than following in-links. Up to 60 \% recall, \textsc{DirectedLex} performs best, both in the regime of 5 \% labels for training and in the regime of 20 \% labels for training.

\begin{figure}[H]
\vspace{-0in}
    \centering
    \includegraphics[scale=0.45]{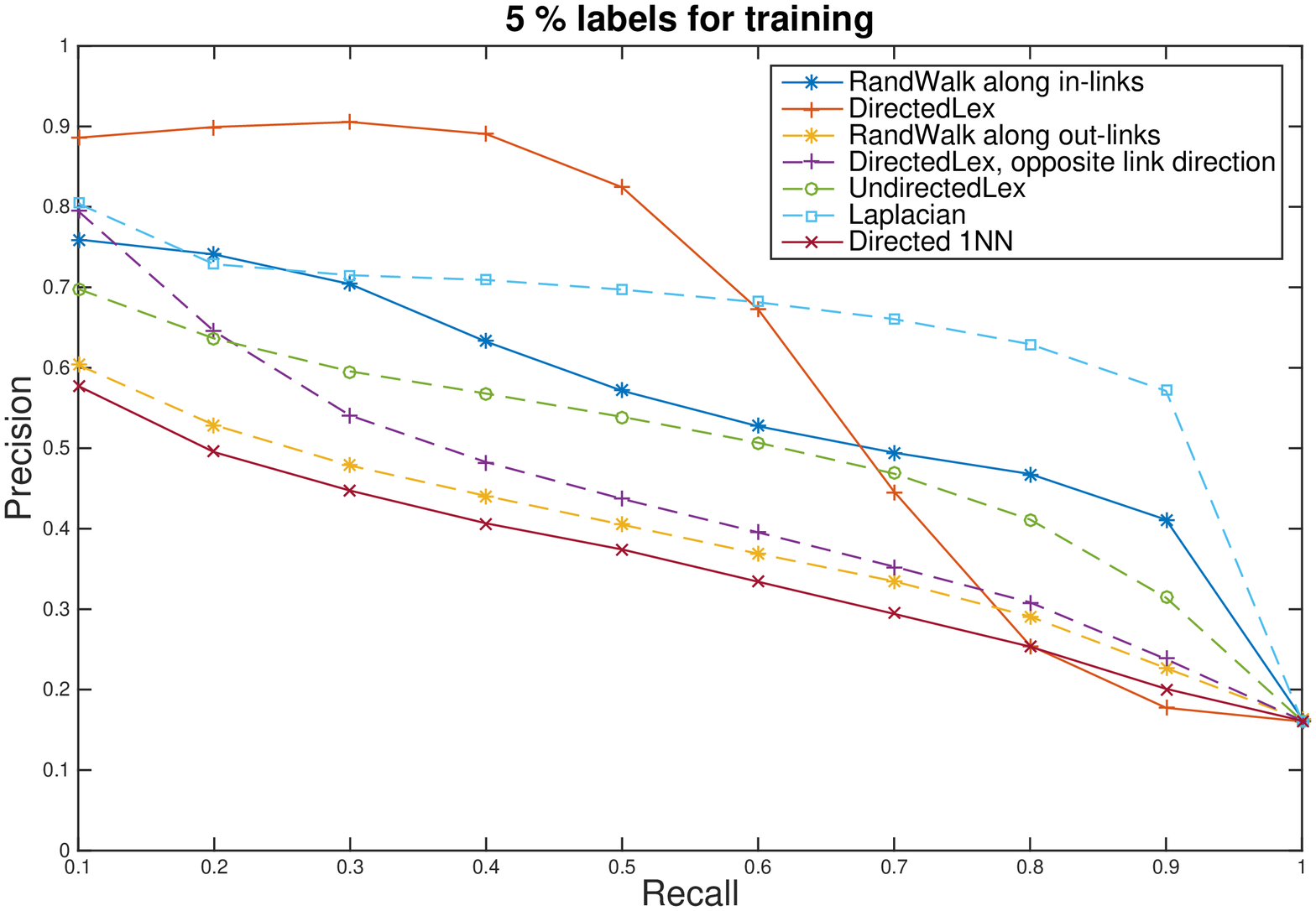}
    \includegraphics[scale=0.45]{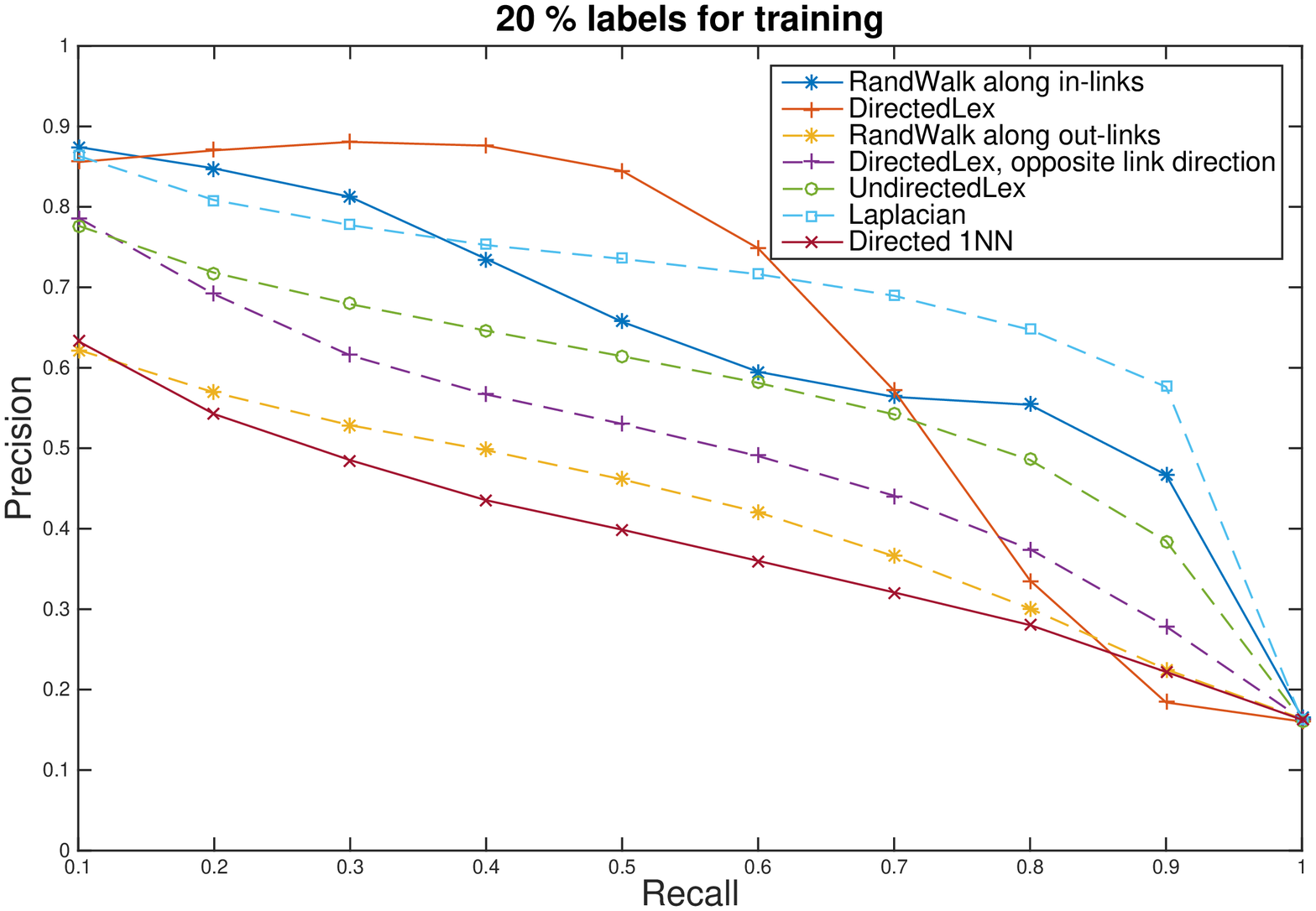}
\vspace{-0in}
\caption{Recall and precision in the WebSpam classification experiment. Each data point shown was computed as an average over 100 runs. The largest standard deviation of the mean precision across the plotted recall values was less than 1.5~\%. The algorithm of \cite{Zhou2007} appears as \textsc{RandWalk} (along in-links). We also show \textsc{RandWalk} along out-links. Our directed lex-minimization algorithm appears as \textsc{DirectedLex}. We also show \textsc{DirectedLex} with link directions reversed,  along with \textsc{UndirectedLex} and \textsc{Laplacian}. }
    \label{fig:alldirs-experiment-image}
\end{figure}
\section{$l_0$-Vertex Regularization Proofs}
\label{sec:l0-proofs}

In this appendix, we prove Theorem~\ref{thm:l0-approx} and Theorem~\ref{thm:l0-exact}. For the purposes of proving the second theorem, we introduce an alternative version of problem~\eqref{eq:outlier-min-fixed-k}. The optimization problem here requires us to minimize $l_0$-regularization budget required to obtain an inf-minimizer with gradient below a given threshold:
\begin{equation}
\begin{aligned}
	 	& \min_{v \in \Reals{n}} \,\,
		 \norm{v(T)-v_{0}(T)}_{0} \\
		& \text{subject to }
		\norm{\grad_{G}[v]}_{\infty}
		\leq \alpha.
\end{aligned}
\label{eq:outlier-min}
\end{equation}
We will also need the following graph construction.
\begin{definition}
The $\alpha$-pressure terminal graph of a partially-labeled graph $(G,v_{0})$ is a directed unweighted graph $G_{\alpha} = (T(v_{0}),\Ehat)$ such that 
$(s,t) \in \Ehat$
if and only if there is a terminal path $P$ from $s$ to $t$ in $G$ with
\[
\nabla P (v_{0}) > \alpha.
\]
\end{definition}
Note that the $\alpha$-pressure terminal graph has $O(n)$ vertices but may be dense, even when $G$ is not.

\begin{mdframed}
\captionof{table}{\label{fig:term-pressure} Algorithm \terminalpressure: Given a well-posed instance $(G,v_{0})$ and $\alpha \geq 0$, outputs $\alpha$ pressure terminal graph $G_{\alpha}$. }

		

          
          Initialize $G_{\alpha}$ with vertex set $V_{\alpha} = T(v_{0})$ and edge set $\Ehat = \emptyset$.
          
          \FOR each terminal $ s \in T(v_{0})$
          \begin{tight_enumerate}
          \item Compute the distances to every other terminal $t$ by running Dijktra's algorithm, allowing shortest paths that run through other terminals.
          \item Use the resulting distances to check for every other terminal $t$ if there is a terminal path $P$ from $s$ to $t$ with
          $\nabla P (v_{0}) > \alpha$. If there is, add edge $(s,t)$ to $\Ehat$.
     
          \end{tight_enumerate}
\end{mdframed}

\begin{lemma}
\label{lem:termpressurealg}
The $\alpha$-pressure terminal graph of a voltage problem $(G,v_{0})$ can be computed in $O( (m+n \log n ) n )$ time using algorithm \terminalpressure \ (Algorithm~\ref{fig:term-pressure}).
\end{lemma}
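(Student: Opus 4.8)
The plan is to establish correctness and then the time bound; correctness is the only part with any real content, and it rests on a single observation. Fix terminals $s,t \in T(v_0)$. For any terminal path $P$ from $s$ to $t$ we have $\nabla P(v_0) = (v_0(s) - v_0(t))/\len(P)$, a ratio whose numerator does not depend on $P$. Since the definition of a terminal path places no constraint on its interior vertices (they may be terminals or non-terminals), the set of lengths $\len(P)$ realized by such paths is exactly the set of $s$–$t$ walk lengths in $G$, whose infimum — attained by a shortest simple path, as all edge lengths are positive — is $\dis(s,t)$. Hence if $v_0(s) > v_0(t)$ then the maximum of $\nabla P(v_0)$ over terminal paths from $s$ to $t$ equals $(v_0(s) - v_0(t))/\dis(s,t)$, and otherwise every terminal path from $s$ to $t$ has $\nabla P(v_0) \le 0$. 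Because $\alpha \ge 0$, it follows that a terminal path $P$ from $s$ to $t$ with $\nabla P(v_0) > \alpha$ exists if and only if $v_0(s) - v_0(t) > \alpha\cdot\dis(s,t)$, with the convention $\dis(s,t) = \infty$ (no such path) when $s$ and $t$ lie in different connected components. This is exactly the $O(1)$ test performed in step~2 of \terminalpressure\ once $\dis(s,t)$ is known, so the algorithm inserts $(s,t)$ into $\Ehat$ precisely when it belongs there; this proves $G_\alpha$ is computed correctly.

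For the running time, the outer loop executes once per terminal, i.e.\ at most $n$ times. Each iteration runs Dijkstra's algorithm from $s$ to obtain $\dis(s,t)$ for all $t$, which costs $O(m + n\log n)$ with a Fibonacci heap, followed by $O(n)$ constant-time comparisons to decide edge insertions. Summing over the $\le n$ terminals gives $O(n(m + n\log n))$, and this dominates the $O(n^2)$ time needed merely to record the (possibly dense) edge set $\Ehat$, since $n^2 = O(n\cdot n\log n)$. The total is therefore $O((m + n\log n)n)$, as claimed.

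The main thing to be careful about — the closest thing to an obstacle in an otherwise routine argument — is the reduction of ``maximize $\nabla P(v_0)$ over terminal paths $P$ from $s$ to $t$'' to an ordinary shortest-path computation: one must check both that terminal paths impose no interior restriction (so no constrained shortest-path notion is needed, matching the phrase ``allowing shortest paths that run through other terminals'') and that no terminal path shorter than $\dis(s,t)$ exists. Once this is in place, the rest is bookkeeping with Dijkstra's running time.
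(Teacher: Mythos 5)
Your proof is correct and follows the same approach as the paper's: run Dijkstra from each terminal and test each pair via $v_0(s)-v_0(t) > \alpha\cdot\dis(s,t)$, for a total of $O(n(m+n\log n))$. The only difference is that you spell out why the maximum of $\nabla P(v_0)$ over terminal paths from $s$ to $t$ is attained on a shortest path (the numerator is path-independent, terminal paths impose no interior constraint, and positive edge lengths make the shortest simple path realize the infimum), a justification the paper's two-sentence proof leaves implicit.
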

\begin{proof}
The correctness of the algorithm follows from the fact that Dijkstra's algorithm will identify all shortest distances between the terminals, and the pressure check will ensure that terminal pairs $(s,t)$ are added to $\Ehat$ if and only if they are the endpoints of a terminal path $P$ with $\nabla P (v_{0}) > \alpha$. The running time is dominated by performing Dijkstra's algorithm once for each terminal. A single run of Dijkstra's algorithm takes $O( m+n \log n  )$ time, and this is performed at most $n$ times, for a total running time of $O( (m+n \log n ) n )$.
\end{proof}
We make three observations that will turn out to be crucial for proving Theorems~\ref{thm:l0-approx} and~\ref{thm:l0-exact}.
\begin{observation}
\label{obs:terminalsubmono}
$G_{\alpha}$ is a subgraph of $G_{\beta}$ for $\alpha \geq \beta$. 
\end{observation}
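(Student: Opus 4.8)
The plan is to simply unwind the definition of the $\alpha$-pressure terminal graph and observe that the edge-inclusion condition is monotone in the threshold parameter. First I would note that $G_{\alpha}$ and $G_{\beta}$ have exactly the same vertex set, namely $T(v_{0})$, so to show that $G_{\alpha}$ is a subgraph of $G_{\beta}$ it suffices to compare their (directed) edge sets.

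Next, I would take an arbitrary directed edge $(s,t)$ of $G_{\alpha}$. By the definition of the $\alpha$-pressure terminal graph, this means there is a terminal path $P$ from $s$ to $t$ in $G$ with $\nabla P(v_{0}) > \alpha$. Since $\alpha \geq \beta$, the very same path $P$ satisfies $\nabla P(v_{0}) > \alpha \geq \beta$, and hence $(s,t)$ is also an edge of $G_{\beta}$. Therefore the edge set $\Ehat_{\alpha}$ of $G_{\alpha}$ is contained in the edge set $\Ehat_{\beta}$ of $G_{\beta}$, and together with the equality of vertex sets this yields that $G_{\alpha}$ is a subgraph of $G_{\beta}$.

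There is essentially no obstacle here: the statement is an immediate consequence of the fact that a strict inequality against the larger constant $\alpha$ implies the strict inequality against the smaller constant $\beta$, with the \emph{same} witnessing terminal path serving in both graphs. The only point to keep in mind is that these graphs are directed and unweighted, so ``subgraph'' should be read as containment of both the vertex set and the directed edge set --- which is precisely what the argument above establishes.
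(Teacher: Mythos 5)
Your proof is correct and is essentially identical to the paper's: both take an arbitrary edge $(s,t)$ of $G_{\alpha}$, invoke its witnessing terminal path $P$ with $\nabla P(v_{0}) > \alpha \geq \beta$, and conclude that $(s,t)$ lies in $G_{\beta}$. The extra remarks about the shared vertex set and the directed reading of ``subgraph'' are fine but not needed beyond what the paper already states.
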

\begin{proof}
Suppose edge  $(s , t)$ appears in $G_{\alpha}$, then for some path $P$
\[
\nabla P (v_{0}) > \alpha \geq \beta,
\]
so the edge also appears in $G_{\beta}$.
\end{proof}

\begin{observation}
\label{obs:TC}
$G_{\alpha}$ is transitively closed. 
\end{observation}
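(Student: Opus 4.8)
The plan is to verify transitivity of the edge relation of $G_{\alpha}$ directly from the definition, using nothing more than the standard mediant inequality for fractions with positive denominators. Suppose $(s,t) \in \Ehat$ and $(t,u) \in \Ehat$. By the definition of the $\alpha$-pressure terminal graph there is a terminal path $P$ from $s$ to $t$ in $G$ with $\nabla P(v_{0}) > \alpha$, and a terminal path $Q$ from $t$ to $u$ in $G$ with $\nabla Q(v_{0}) > \alpha$. The first step is to form the concatenation $R = (P,Q)$, which is a valid terminal path from $s$ to $u$ in $G$ (the definition of a path in Section~\ref{sec:lex_basics} allows repeated vertices, so it is irrelevant whether $R$ is simple), and which has $\len(R) = \len(P) + \len(Q)$.

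Next I would compute the gradient of $R$ by splitting the numerator at $t$:
\[
\nabla R(v_{0}) \;=\; \frac{v_{0}(s) - v_{0}(u)}{\len(P) + \len(Q)} \;=\; \frac{\bigl(v_{0}(s) - v_{0}(t)\bigr) + \bigl(v_{0}(t) - v_{0}(u)\bigr)}{\len(P) + \len(Q)}.
\]
Since all edge lengths are strictly positive, $\len(P), \len(Q) > 0$, so $\nabla P(v_{0}) > \alpha$ and $\nabla Q(v_{0}) > \alpha$ give $v_{0}(s) - v_{0}(t) > \alpha \len(P)$ and $v_{0}(t) - v_{0}(u) > \alpha \len(Q)$. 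Adding these two inequalities yields $v_{0}(s) - v_{0}(u) > \alpha\bigl(\len(P) + \len(Q)\bigr)$, hence $\nabla R(v_{0}) > \alpha$. Thus $R$ is a terminal path from $s$ to $u$ with pressure exceeding $\alpha$, which is exactly the witness required for $(s,u) \in \Ehat$; this establishes transitivity, i.e., $G_{\alpha}$ is transitively closed.

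There is essentially no hard step here. The only points needing a word of care are: (i) concatenating the two terminal paths is legitimate even when the result revisits vertices, which is permitted by the paper's definition of a path; and (ii) the denominators $\len(P), \len(Q)$ are strictly positive, so the mediant inequality applies as stated. (Incidentally, for $\alpha \ge 0$ — the only regime in which $G_{\alpha}$ is used — the displayed inequality also forces $v_{0}(s) - v_{0}(u) > 0$, so $s \neq u$ and no self-loop is ever created, consistent with $G_{\alpha}$ later being treated as a DAG.)
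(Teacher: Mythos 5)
Your proof is correct and matches the paper's argument in its essential step: both split $v_{0}(s)-v_{0}(u)$ at the intermediate terminal and apply the mediant inequality to conclude that the combined gradient still exceeds $\alpha$. The only (harmless) difference is the choice of witness --- you concatenate the two given paths directly, whereas the paper routes through shortest paths and the triangle inequality --- and your version is, if anything, slightly more self-contained since it avoids the implicit claim that the shortest paths themselves have gradient exceeding $\alpha$.
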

\begin{proof}
Suppose edges  $(s , t)$ and $(t , r)$ appear in $G_{\alpha}$. Let $P_{(s,t)}$, $P_{(t,r)}$, $P_{(s,r)}$ be the respective shortest paths in $G$ between these terminal pairs. Then 

\begin{equation}
\begin{aligned}
\nabla P_{(s,r)} (v_{0}) =
\frac{v_{0}(s) - v_{0}(r)}
{\len(P_{(s,r)})}
\geq
\frac{v_{0}(s) - v_{0}(r)}
{\len(P_{(s,t)}) + \len(P_{(t,r)})}
= \frac{v_{0}(s) - v_{0}(t) +v_{0}(t) - v_{0}(r)}
{\len(P_{(s,t)}) + \len(P_{(t,r)})} \\
\geq
\min
\left( 
\frac{v_{0}(s) - v_{0}(t)}
{\len(P_{(s,t)})}
,
\frac{v_{0}(t) - v_{0}(r)}
{\len(P_{(t,r)})}
\right)
>
\alpha.
\end{aligned}
\end{equation}
So edge  $(s,r)$ also appears in $G_{\alpha}$. This is sufficient for $G_{\alpha}$ to be transitively closed.
\end{proof}

\begin{observation}
\label{obs:DAG}
$G_{\alpha}$ is a directed acyclic graph.
\end{observation}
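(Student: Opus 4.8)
The plan is to show that every directed edge of $G_{\alpha}$ strictly decreases the potential $v_{0}$, which makes a directed cycle impossible. First I would unpack the definition: if $(s,t) \in \Ehat$, then there is a terminal path $P$ from $s$ to $t$ with
\[
\nabla P(v_{0}) = \frac{v_{0}(s) - v_{0}(t)}{\len(P)} > \alpha \geq 0.
\]
Since $\len_{G}$ assigns strictly positive lengths to edges we have $\len(P) > 0$ (in particular $s \neq t$, as the numerator is strictly positive), so multiplying through gives $v_{0}(s) - v_{0}(t) > \alpha \cdot \len(P) \geq 0$, i.e. $v_{0}(s) > v_{0}(t)$. Thus every arc of $G_{\alpha}$ points from a strictly larger $v_{0}$-value to a strictly smaller one.

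Next I would rule out directed cycles. Suppose toward a contradiction that $G_{\alpha}$ contained a directed cycle $s_{1} \to s_{2} \to \cdots \to s_{k} \to s_{1}$. Applying the inequality above to each arc of the cycle yields
\[
v_{0}(s_{1}) > v_{0}(s_{2}) > \cdots > v_{0}(s_{k}) > v_{0}(s_{1}),
\]
which is absurd. (The case of a self-loop is included here, since an arc $(s,s)$ would already force $v_{0}(s) > v_{0}(s)$.) Hence $G_{\alpha}$ has no directed cycle and is therefore a directed acyclic graph.

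I do not expect any real obstacle: the argument uses only that edge lengths are positive and that the threshold $\alpha$ is nonnegative, and needs none of the earlier structural theorems. The one point worth stating carefully is the passage to the \emph{strict} inequality $v_{0}(s) > v_{0}(t)$, which relies on $\alpha \geq 0$; this is the standing assumption on gradient/pressure values throughout (and indeed the same ``$> \alpha \geq 0$'' step already appears in the proof of Observation~\ref{obs:TC}).
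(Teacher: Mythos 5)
Your proof is correct and rests on the same key inequality as the paper's: each arc $(s,t)\in\Ehat$ forces $v_{0}(s)-v_{0}(t)>\alpha\cdot\len(P)\geq 0$, hence $v_{0}(s)>v_{0}(t)$. The only structural difference is that the paper first invokes Observation~\ref{obs:TC} to collapse a hypothetical cycle to a $2$-cycle between two of its vertices and then derives the contradiction from that single pair, whereas you telescope the strict inequalities around the entire cycle directly; your route is marginally more self-contained since it does not depend on the transitive-closure observation, but the two arguments are otherwise the same.
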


\begin{proof}
Suppose for a contradiction that a directed cycle appears in $G_{\alpha}$. Let $s$ and  $t$ be two vertices in this cycle. Let $P_{(s,t)}$ and $P_{(t,s)}$ be the respective shortest paths in $G$ between these terminal pairs. Because $G_{\alpha}$ is transitively closed, both edges $(s,t)$ and $(t,s)$ must appear in  $G_{\alpha}$. But $(s,t) \in \Ehat$ implies
\[
v_{0}(s) - v_{0}(t) > \alpha \len(P_{(s,t)}) > 0,
\]
and similarly $(t,s) \in \Ehat$ implies
 \[
v_{0}(t) - v_{0}(s)> \alpha \len(P_{(t,s)}) > 0.
\]
This is a contradiction.
\end{proof}

The usefulness of the $\alpha$-pressure terminal graph is captured in the following lemma. We define a vertex cover of a directed graph to be a vertex set that constitutes a vertex cover in the same graph with all edges taken to be undirected.

\begin{lemma}
\label{lem:vc-l0}
\leavevmode
Given a partially-labeled graph $(G,v_{0})$ and a set $U \subseteq V$,
there exists a voltage assignment $v \in \Reals{n}$ that satisfies
\[
\setof{ t \in T(v_{0}) : v(t) \neq  v_{0}(t)} \subseteq U \text{ and } \norm{\grad_{G}[v]}_{\infty} \leq \alpha,
\]
if and only if $U$ is a vertex cover in the $\alpha$-pressure terminal graph $G_{\alpha}$ of $(G,v_{0})$.
\end{lemma}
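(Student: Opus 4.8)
The plan is to prove both implications using the shortest‑path metric $\dis$ on $V$ and the telescoping identity $\sum_i \grad[v](x_{i-1},x_i)\len(x_{i-1},x_i) = v(x_0)-v(x_k)$ along a path $P=(x_0,\dots,x_k)$. Write $T' := T(v_{0})\setminus U$. The first step is to restate the vertex‑cover condition combinatorially. Since $\nabla P(v_{0}) = (v_{0}(s)-v_{0}(t))/\len(P)$ for a terminal path $P$ from $s$ to $t$, its maximum over all such $P$ equals $(v_{0}(s)-v_{0}(t))/\dis(s,t)$ when $v_{0}(s)\ge v_{0}(t)$ and is negative (hence $\le \alpha$) otherwise; so $(s,t)\in\Ehat$ iff $v_{0}(s)-v_{0}(t) > \alpha\,\dis(s,t)$. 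Therefore $U$ is a vertex cover of $G_{\alpha}$ if and only if no edge of $G_{\alpha}$ has both endpoints in $T'$, i.e.
\[
(\star)\qquad v_{0}(s) - v_{0}(t) \le \alpha\,\dis(s,t)\quad\text{for all } s,t\in T'.
\]

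For the forward direction I would argue the contrapositive. If $U$ is not a vertex cover, choose $(s,t)\in\Ehat$ with $s,t\notin U$ together with a witnessing terminal path $P=(x_0=s,\dots,x_k=t)$ having $\nabla P(v_{0})>\alpha$. Any $v$ with $\{t : v(t)\ne v_{0}(t)\}\subseteq U$ satisfies $v(s)=v_{0}(s)$ and $v(t)=v_{0}(t)$, so if $\norm{\grad_{G}[v]}_{\infty}\le\alpha$ then telescoping along $P$ gives $v_{0}(s)-v_{0}(t)=\sum_i \grad[v](x_{i-1},x_i)\len(x_{i-1},x_i)\le \alpha\,\len(P)$, contradicting $\nabla P(v_{0})>\alpha$. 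Hence no such $v$ exists.

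For the converse I would construct $v$ explicitly from $(\star)$ (this is essentially the McShane–Whitney extension). On each connected component of $G$ that meets $T'$, set $v(x) := \min_{t\in T'}\{v_{0}(t)+\alpha\,\dis(x,t)\}$, a restricted version of $\vLow[\alpha]$; on components containing no vertex of $T'$, set $v$ equal to an arbitrary constant. Then $\norm{\grad_{G}[v]}_{\infty}\le\alpha$: the gradient vanishes on terminal‑free components, and on a component meeting $T'$, for any edge $(x,y)$ the triangle inequality for $\dis$ gives $|v(x)-v(y)|\le \alpha\,\dis(x,y)\le \alpha\,\len(x,y)$ because $\dis(x,y)\le \len(x,y)$. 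Moreover $v(t)=v_{0}(t)$ for every $t\in T'$: taking $s=t$ in the minimum shows $v(t)\le v_{0}(t)$, while $(\star)$ gives $v_{0}(t)\le v_{0}(s)+\alpha\,\dis(t,s)$ for all $s\in T'$, so the minimum is attained at $s=t$. Thus $\{t\in T(v_{0}) : v(t)\ne v_{0}(t)\}\subseteq T(v_{0})\setminus T' = U$, as required.

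The only point needing care — and it is minor — is that deleting $U$ may leave a component of $G$ with no surviving terminal, so $(G,v_{0}|_{T'})$ need not be well‑posed and Lemma~\ref{lem:basics:inf-duality} cannot be applied to it verbatim; the explicit constant extension on such components sidesteps this (alternatively one applies Lemma~\ref{lem:basics:inf-duality} componentwise to the well‑posed components). One should also record that $(\star)$ relies on $\alpha\ge 0$, which holds since $\alpha$ bounds an $\ell_{\infty}$‑norm of gradients. I expect this bookkeeping about components and well‑posedness to be the main obstacle; the rest is the routine telescoping and triangle‑inequality computations sketched above.
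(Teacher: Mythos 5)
Your proof is correct, and it takes a more self-contained route than the paper's. The paper proves both directions by black-boxing Lemma~\ref{lem:basics:inf-duality}: for the ``only if'' direction it applies that lemma to the two endpoints of an uncovered edge, and for the ``if'' direction it applies it to the instance $(G,v_{U})$ obtained by deleting the labels in $U$, concluding that if no admissible extension existed there would be a too-steep terminal path between two surviving terminals, i.e.\ an uncovered edge of $G_{\alpha}$. You instead unfold the content of that lemma: the forward direction is the elementary telescoping inequality along a witnessing path, and the backward direction is an explicit McShane--Whitney construction (your restricted $\vLow[\alpha]$), verified directly via the triangle inequality for $\dis$ and your condition $(\star)$. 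What your version buys is twofold: it is independent of the machinery around \meta\ and Lemma~\ref{lem:basics:no-steeper-path} that underlies Lemma~\ref{lem:basics:inf-duality}, and it explicitly patches a point the paper elides --- Lemma~\ref{lem:basics:inf-duality} is stated only for well-posed instances, whereas $(G,v_{U})$ can fail to be well-posed when removing $U$ strips a component of all its terminals; your constant assignment on such components (or a componentwise application of the duality lemma) is exactly the right fix. What the paper's version buys is brevity, since the extension construction is already implicit in \compinfmin\ and the cited \cite{McShane,Whitney} results. One trivial slip in your write-up: $T(v_{0})\setminus T'$ equals $T(v_{0})\cap U$ rather than $U$ itself (as $U$ may contain non-terminals), but the needed containment in $U$ still holds.
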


\begin{proof}
We first show the ``only if'' direction. Suppose for a contradiction that there exists a voltage assignment $v$ for which $\norm{\grad_{G}[v]}_{\infty} \leq \alpha$, but $U$ is not a vertex cover in $G_{\alpha}$.
Let $(s,t)$ be an edge  $G_{\alpha}$ which is not covered by $U$. The presence of this edge in $G_{\alpha}$ implies that there exists a terminal path $P$ from $s$ to $t$ in $G$ for which
\[
\nabla P (v_{0}) > \alpha.
\]
But, by Lemma~\ref{lem:basics:inf-duality} this means there is no assignment $v$ for $G$ which agrees with $v_{0}$ on $s$ and $t$ and has $\norm{\grad_{G}[v]}_{\infty} \leq \alpha$.
This contradicts our assumption.

Now we show the ``if'' direction. Consider an arbitrary vertex cover $U$ of $G_{\alpha}$. 
Suppose for a contradiction that there does not exist a voltage assignment $v$ for $G$ with $\norm{\grad_{G}[v]}_{\infty} \leq \alpha$ and 
$\setof{ t \in T(v_{0}) : v(t) \neq  v_{0}(t)} \subseteq U$.
Define a partial voltage assignment  $v_{U}$ given by
\begin{equation*}
v_{U}(t) = 
\begin{cases}
v_{0}(t) & \text{ if } t \in T(v_{0}) \setminus U \\
* & \text{ o.w. }
\end{cases}
\end{equation*}
The preceding statement is equivalent to saying that there is no $v$ that extends $v_{U}$ and has  $\norm{\grad_{G}[v]}_{\infty} \leq \alpha$. By Lemma~\ref{lem:basics:inf-duality}, this means there is terminal path between $s,t \in T(v_{U})$ with gradient strictly larger than $\alpha$. But this means an edge $(s,t)$ is present in $G_{\alpha}$ and is not covered. This contradicts our assumption that $U$ is a vertex cover.
\end{proof}

We are now ready to prove Theorem~\ref{thm:l0-exact}.

\begin{proofof}{of Theorem~\ref{thm:l0-exact}}
We describe and prove the algorithm \outlieralg. The algorithm will reduce problem~\eqref{eq:outlier-min-fixed-k} to problem~\eqref{eq:outlier-min}:
Suppose $v^*$ is an optimal assignment for problem~\eqref{eq:outlier-min-fixed-k}. It achieves a maximum gradient
$\alpha^* = \norm{\grad_{G}[v^*]}_{\infty}$.
Using Dijkstra's algorithm we compute the pairwise shortest distances between all terminals in $G$. From these distances and the terminal voltages, we compute the gradient on the shortest path between each terminal pair. By Lemma~\ref{lem:basics:inf-duality}, $\alpha^*$ must equal one of these gradients. So we can solve problem~\eqref{eq:outlier-min-fixed-k} by iterating over the set of gradients between terminals and solving problem~\eqref{eq:outlier-min} for each of these $O(n^2)$ gradients. Among the assignments with $ \norm{v(T)-v_{0}(T)}_{0} \leq k$, we then pick the solution that minimizes $\norm{\grad_{G}[v]}_{\infty}$.

 In fact, we can do better. By Observation~\ref{obs:terminalsubmono}, $G_{\alpha}$ is a subgraph of $G_{\beta}$ for $\alpha \geq \beta$. This means a vertex cover of $G_{\alpha}$ is also a vertex cover of $G_{\beta}$, and hence the minimum vertex cover for $G_{\beta}$ is at least as large as the minimum vertex cover for $G_{\alpha}$. This means we can do a binary search on the set of $O(n^2)$ terminal gradients to find the minimum gradient for which there exists an assignment with $\norm{v(T)-v_{0}(T)}_{0} \leq k$. This way, we only make $O(\log n)$ calls to problem~\eqref{eq:outlier-min}, in order to solve problem~\eqref{eq:outlier-min-fixed-k}.

We use the following algorithm to solve
problem~\eqref{eq:outlier-min}.
\begin{mdframed}
\begin{enumerate}
\parskip0pt
\topsep30pt
\itemsep0pt
\item
\label{alg:outlier:tp}
Compute the $\alpha$-pressure terminal graph $G_{\alpha}$ of $G$ using the algorithm \terminalpressure.

\item
\label{alg:outlier:vc}
Compute a minimum vertex cover $U$ of $G_{\alpha}$ using the algorithm \tcdagvc \ from Theorem~\ref{thm:minvc-tcdag}.

\item
Define a partial voltage assignment  $v_{U}$ given by
\begin{equation*}
v_{U}(t) = 
\begin{cases}
v_{0}(t) & \text{ if } t \in T(v_{0}) \setminus U, \\
* & \text{ otherwise.}
\end{cases}
\end{equation*}
\item
Using Algorithm~\ref{alg:comp-inf-min}, compute voltages $v$ that extend $v_{U}$ and output $v$. 

\end{enumerate}
\end{mdframed}

From Lemma~\ref{lem:termpressurealg}, it follows that step~\ref{alg:outlier:tp} computes the $\alpha$-pressure terminal graph in polynomial time.
From Theorem~\ref{thm:minvc-tcdag} it follows that step~\ref{alg:outlier:vc} computes the a minimum vertex cover of the $\alpha$-pressure terminal graph in polynomial time, because our observations~\ref{obs:TC} and \ref{obs:DAG} establish that the graph is a TC-DAG. From Lemma~\ref{lem:vc-l0} and Theorem~\ref{thm:lineartimeinfmin}, it follows that the output voltages solve program~\eqref{eq:outlier-min}.

\end{proofof}
To prove Theorem~\ref{thm:l0-approx}, we use the standard greedy approximation algorithm for MIN-VC (\cite{Vazirani01}).

\begin{theorem}{\bf 2-Approximation Algorithm for Vertex Cover.}
\label{thm:vc-approx}
The following algorithm gives a 2-approximation to the Minimum Vertex
Cover problem on a graph $G = (V,E)$.
\begin{mdframed}
\begin{enumerate}[start=0]
\parskip0pt
\topsep30pt
\itemsep0pt

\item Initialize $U = \emptyset$.

\item 
\label{alg:greedy-vc:pick}
Pick an edge $(u,v) \in E$ that is not covered by $U$.

\item Add $u$ and $v$ to the set $U$.

\item Repeat from step \ref{alg:greedy-vc:pick} if there are still edges not covered by $U$.

\item Output $U$.

\end{enumerate}
\end{mdframed}

\end{theorem}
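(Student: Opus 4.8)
The plan is to use the textbook maximal-matching analysis. Let $M$ denote the set of edges $(u,v)$ chosen in step~\ref{alg:greedy-vc:pick} across all iterations of the algorithm, and let $U$ be the final output set.

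First I would dispense with termination and the cover property. Each iteration picks an edge not yet covered by $U$ and then adds both of its endpoints to $U$, so the set of edges covered by $U$ strictly grows each round; since $G$ is finite the loop halts, and it exits only once every edge of $E$ is covered. Hence $U$ is a vertex cover of $G$.

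The key structural step is that $M$ is a matching. At the moment an edge $e = (u,v)$ is selected, it is uncovered by the current $U$; but the endpoints of every previously selected edge were inserted into $U$ immediately after that selection, so neither $u$ nor $v$ can coincide with an endpoint of an earlier edge of $M$. Thus the edges of $M$ are pairwise vertex-disjoint, and consequently $|U| = 2|M|$.

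Finally, for the approximation guarantee I would invoke the standard lower bound: any vertex cover, and in particular a minimum vertex cover $U^{\star}$, must include at least one endpoint of each edge of $M$, and since the edges of $M$ share no vertices these endpoints are distinct, so $|U^{\star}| \ge |M|$. Combining the two facts gives $|U| = 2|M| \le 2|U^{\star}|$, which is exactly the claimed factor-$2$ guarantee. I do not anticipate any genuine obstacle here; the only point that deserves a sentence of care is the matching claim — that the selected edges are vertex-disjoint — and that follows directly from the algorithm adding both endpoints of a selected edge to $U$ before the next selection is made.
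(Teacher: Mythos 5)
Your proof is correct and is precisely the standard maximal-matching argument that the paper relies on: the paper does not prove this theorem itself but simply cites the textbook treatment (Vazirani), which uses exactly your decomposition into the matching $M$, the identity $|U| = 2|M|$, and the lower bound $|U^{\star}| \ge |M|$. No gaps.
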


We are now in a position to prove Theorem~\ref{thm:l0-approx}

\begin{proofof}{of Theorem~\ref{thm:l0-approx}}
Given an arbitrary $k$ and a partially-labeled graph $(G,v_{0})$, let $\alpha^*$ be the optimum value of program~\eqref{eq:outlier-min-fixed-k}. Observe that by Lemma~\ref{lem:vc-l0}, this implies that $G_{\alpha^*}$ has a vertex cover of size $k$.
Given the partial assignment $v_{0}$, for every vertex set $U$, we define

\begin{equation*}
v_{U}(t) \defeq 
\begin{cases}
v_{0}(t) & \text{ if } t \in T(v_{0}) \setminus U \\
* & \text{ o.w. }
\end{cases}
\end{equation*}

We claim the following algorithm \approxoutlieralg \  outputs a voltage assignment $v$ with $\norm{\grad_{G}[v]}_{\infty} \leq \alpha^*$ and $\norm{v(T)-v_{0}(T)}_{0} \leq 2 k$.

\begin{mdframed}
\vspace{\algtopspace}
\captionof*{table}{ 
Algorithm \approxoutlieralg: \hfill \hfill} 
    \vspace{\algpostcaptionspace}
\begin{enumerate}[start=0]
\parskip0pt
\topsep30pt
\itemsep0pt
\item Initialize $U = \emptyset$.

\item
\label{alg:greedy-l0:pick}
Using the algorithm \steepestpath \ (Algorithm \ref{alg:steepest-path}), find a steepest terminal path in $G$ w.r.t. $v_{U}$.  Denote this path $P$ and let $s$ and $t$ be its terminal endpoints. If there is no terminal path with positive gradient, skip to step~\ref{alg:greedy-l0:volt}.

\item
\label{alg:greedy-l0:add}
Add $s$ and $t$ to the set $U$.

\item
If $\sizeof{U} \leq 2k - 2$ then repeat from step \ref{alg:greedy-l0:pick}.

\item
\label{alg:greedy-l0:volt}
Using the algorithm \compinfmin \ (Algorithm~\ref{alg:comp-inf-min}), compute voltages $v$ that extend $v_{U}$ and output $v$. 

\end{enumerate}
   \end{mdframed}
From the stopping conditions, it is clear that $\sizeof{U} \leq 2k$. If in step~\ref{alg:greedy-l0:pick} we ever find that no terminal paths have positive gradient then our $v$ that extends $v_{U}$ will have $\norm{\grad_{G}[v]}_{\infty} = 0 \leq \alpha^*$, by Lemma~\ref{lem:basics:inf-duality}. Similarly if we find a steepest path with gradient less than $ \alpha^*$ w.r.t. $v_{U}$, then for this $U$ there exists $v$ that extends $v_{U}$ and has $\norm{\grad_{G}[v]}_{\infty}\leq \alpha^*$. This will continue to hold when if we add vertices to $U$. Therefore, for the final $U$, there will exist an $v$ that extends $v_{U}$ and has $\norm{\grad_{G}[v]}_{\infty}\leq \alpha^*$.

If we never find a steepest terminal path $P$ with $\nabla P(v_{0}) \leq \alpha^*$, then each steepest path we find corresponds to an edge in $G_{\alpha^*}$ that is not yet covered by $U$ and our algorithm in fact implements the greedy approximation algorithm for vertex cover described in Theorem~\ref{thm:vc-approx}. This implies that the final $U$ is a vertex cover of $G_{\alpha^*}$ of size at most $2 k$. By Lemma~\ref{lem:vc-l0}, this implies that there exists a voltage assignment $u$ extending $v_{U}$ that has $\norm{\grad_{G}[u]}_{\infty}\leq \alpha^*$. This implies by Theorem~\ref{thm:lineartimeinfmin} that the $v$ we output has $\norm{\grad_{G}[v]}_{\infty}\leq \alpha^*$.

In all cases, the $v$ we output extends $v_{U}$, so $\norm{v(T)-v_{0}(T)}_{0} \leq \sizeof{U} \leq 2 k$.
\end{proofof}

\section{Proof of Hardness of $l_{0}$ regularization for $l_{2}$}
\label{sec:l2_hardness_proof}

We will prove Theorem~\ref{thm:l0_l2_nphard}, by a reduction from minimum bisection.
To this end, let $G = (V,E)$ be any graph.
We will reduce the minimum bisection problem on $G$ to our regularization problem.
Let $n = \sizeof{V}$.
The graph on which we will perform regularization will have vertex set
\[
  V \union \Vhat,
\]
where $\Vhat$ is a set of $n$ vertices that are in $1$-to-$1$ correspondence with $V$.
We assume that every edge in $G$ has weight $1$.
We now connect every vertex in $\Vhat$ to the corresponding vertex in $V$
  by an edge of weight $B$, for some large $B$ to be determined later.
We also connect all of the vertices in $\Vhat$ to each other by edges
  of weight $B^{3}$.
So, we have a complete graph of weight $B^{3}$ edges on $\Vhat$,
  a matching of weight $B$ edges connecting $\Vhat$ to $V$,
  and the original graph $G$ on $V$.
The input potential function will be
\[
  v (a) = \begin{cases}
0 & \text{for $a \in \Vhat$, and}          
\\
1  &  \text{for $a \in V$.}          
\end{cases}
\]
Now set $k = n/2$.
We claim that we will be able to determine the value of the minimum bisection
  from the solution to the regularization problem.

If $S$ is the set of vertices on which $v$ and $w$ differ, then we know that
  the $w$ is harmonic on $S$: for every $a \in S$, $w (a)$ is the weighted
  average of the values at its neighbors.
In the following, we exploit the fact that $\sizeof{S} \leq n/2$.

\begin{claim}\label{clm:vhat}
For every $a \in S \intersect \Vhat$,
  $w (a) \leq 2/n B^{2}$.
\end{claim}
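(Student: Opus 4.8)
The plan is to combine harmonicity of $w$ on $S$ with the crude pointwise bound $0 \le w \le 1$ and the hypothesis $\sizeof{S} \le k = n/2$, which forces every vertex of $\Vhat$ that lies in $S$ to have a strict majority of its $\Vhat$-neighbours \emph{outside} $S$, where $w$ is pinned to the value $0$.

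First I would record that $0 \le w(x) \le 1$ for every vertex $x \in V \union \Vhat$. Since $w$ agrees with $v \in \setof{0,1}$ off $S$ and is harmonic at each vertex of $S$, the maximum principle says that on each connected component of $S$ the function $w$ lies between the minimum and the maximum of its values on the vertex boundary of that component, and those boundary values all lie in $\setof{0,1}$. The only thing to verify is that no connected component of $S$ is an entire connected component of the whole graph, so that the boundary is nonempty; but $V \union \Vhat$ is connected (the clique on $\Vhat$ is connected and the weight-$B$ matching attaches all of $V$ to it) and has $2n$ vertices, whereas $\sizeof{S} \le n/2 < 2n$, so $S$ cannot exhaust it. Hence $0 \le w \le 1$ everywhere.

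Next I would fix $a \in S \intersect \Vhat$ and write out the harmonic equation at $a$. Its neighbours are the $n-1$ other vertices of $\Vhat$, each joined to $a$ by an edge of weight $B^{3}$, together with its matched partner $a' \in V$, joined by an edge of weight $B$. Thus
\[
\bigl((n-1)B^{3} + B\bigr)\, w(a) \;=\; B^{3} \sum_{b \in \Vhat \setminus \setof{a}} w(b) \;+\; B\, w(a').
\]
Each $b \in \Vhat \setminus S$ has $w(b) = v(b) = 0$, so only the at most $\sizeof{S} - 1 \le n/2 - 1$ terms with $b \in (S \intersect \Vhat)\setminus\setof{a}$ survive. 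Writing $M \defeq \max_{b \in S \intersect \Vhat} w(b)$ and using $w(a') \le 1$, this gives
\[
\bigl((n-1)B^{3} + B\bigr)\, M \;\le\; \bigl(n/2 - 1\bigr) B^{3} M \;+\; B .
\]

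Finally I would rearrange: the coefficient of $M$ on the left minus that on the right equals $(n-1)B^{3} + B - (n/2 - 1)B^{3} = (n/2)\,B^{3} + B$, so $\bigl((n/2)B^{3} + B\bigr) M \le B$, hence $M \le \bigl((n/2)B^{2} + 1\bigr)^{-1} \le \frac{2}{nB^{2}}$, which is exactly the claimed bound. The only genuinely delicate point is the maximum-principle step, whose entire content is verifying that $S$ misses the boundary — precisely where $\sizeof{S} \le k = n/2$ and connectedness of the constructed graph get used; everything else is a one-line averaging estimate followed by arithmetic.
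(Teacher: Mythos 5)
Your proof is correct and is essentially the paper's argument: apply the harmonic (weighted-averaging) identity at a vertex of $S \intersect \Vhat$ achieving the maximum value $M$, kill the $\Vhat$-neighbours outside $S$ (at least $n/2$ of them), bound the at most $n/2-1$ surviving clique terms by $B^{3}M$ each and the matched $V$-neighbour by $B$, and rearrange to get $((n/2)B^{3}+B)M \le B$. The only differences are cosmetic — you explicitly justify $0 \le w \le 1$ via the maximum principle, which the paper takes for granted, and you should say explicitly that the harmonic identity is applied at the \emph{maximizing} vertex $a$ (so that $w(a)=M$), since for an arbitrary $a$ the left-hand side of your displayed inequality would only carry $w(a)$, not $M$.
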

\begin{proof}
Let $a$ be the vertex in  $S \intersect \Vhat$ that maximizes $w (a)$.
So, $a$ is connected to at least $n/2$ neighbors in $\Vhat$ with $w$-value equal to $0$
  by edges of weight $B^{3}$.
On the other hand, $a$ has only one neighbor that is not in $\Vhat $,
  that vertex has $w$-value at most $1$, and it is connected to that vertex
  by an edge of weight $B$.
Call that vertex $c$.
We have
\begin{align*}
  ((n-1)B^{3} + B) w (a) & = B w (c) + \sum_{b \in \Vhat, b \not = a} B^{3} w (b) \\
& = B w (c) + \sum_{b \in \Vhat \intersect S, b \not = a} B^{3} w (b) 
   + \sum_{b \in \Vhat - S} B^{3} w (b)
\\
& \leq  B  + \sum_{b \in \Vhat \intersect S, b \not = a} B^{3} w (a)\\
& \leq  B  + (n/2-1) B^{3} w (a).
\end{align*}
Subtracting $(n/2-1) B^{3} w (a)$ from both sides gives
\[
  (  (n/2)B^{3} + B) w (a) \leq B,
\]
which implies the claim.
\end{proof}

\begin{claim}\label{clm:V}
For $a \in S \intersect V$, $w (a) \leq n / B$.
\end{claim}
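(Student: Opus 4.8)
The plan is to run the same argument as in the proof of Claim~\ref{clm:vhat}, but now at a vertex $a \in S \cap V$. Since $S$ is exactly the set where $w$ and $v$ disagree, $w$ restricted to $S$ is the harmonic extension of $v|_{(V\cup\Vhat)\setminus S}$, so $w$ obeys the weighted mean-value identity at $a$. In the constructed graph, the neighbors of $a$ are its partner $\hat a \in \Vhat$, joined to $a$ by an edge of weight $B$, and the $G$-neighbors of $a$, each joined to $a$ by an edge of weight $1$; hence
\[
  (B + \deg_G(a))\, w(a) \;=\; B\, w(\hat a) \;+\; \sum_{b : (a,b)\in E} w(b).
\]

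Next I would plug in three bounds. First, $0 \le w(b) \le 1$ for every vertex $b$, by the maximum principle: $w|_S$ is a harmonic function whose boundary values are those of $v$, which lie in $\{0,1\}$ (this is the same fact used implicitly for the vertex $c$ in the proof of Claim~\ref{clm:vhat}). Second, $w(\hat a) \le \tfrac{2}{nB^2}$: if $\hat a \in S$ this is precisely Claim~\ref{clm:vhat}, and if $\hat a \notin S$ then $w(\hat a) = v(\hat a) = 0$. Third, $\deg_G(a) \le n-1$. Dropping the nonnegative $\deg_G(a)$ from the denominator and using $\sum_{b:(a,b)\in E} w(b) \le \deg_G(a) \le n-1$ then gives
\[
  w(a) \;\le\; \frac{B\cdot\frac{2}{nB^2} + (n-1)}{B} \;=\; \frac{2}{nB^2} + \frac{n-1}{B} \;\le\; \frac{1}{B} + \frac{n-1}{B} \;=\; \frac{n}{B},
\]
where the final step uses $\frac{2}{nB^2}\le\frac1B$, which holds since $n\ge2$ and $B\ge1$ (with a comfortable margin once $B$ is taken to be the large value fixed later).

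The argument is essentially immediate once the harmonic identity is written down, so there is no real obstacle; the only point worth stating carefully is the $[0,1]$ bound on $w$, which follows from the maximum principle applied to each connected component of the induced subgraph on $S$ --- every such component has at least one neighbor outside $S$, since otherwise it would be a union of connected components of the whole graph $V\cup\Vhat$, which is connected and has $2n > n/2 \ge |S|$ vertices. Conceptually, the claim just says that the single heavy weight-$B$ matching edge out of $a$, whose far endpoint $\hat a$ has potential $O(1/(nB^2))$ by Claim~\ref{clm:vhat}, overwhelms the at most $n-1$ light weight-$1$ edges of $G$ at $a$: even if those edges pull $w(a)$ toward $1$, they can raise it only to $O(n/B)$.
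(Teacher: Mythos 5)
Your proof is correct and follows essentially the same route as the paper's: write the weighted mean-value identity at $a$, bound the contribution of the single weight-$B$ edge to $\Vhat$ using Claim~\ref{clm:vhat}, and bound the at most $n-1$ unit-weight $G$-neighbors by $1$. The only (harmless) difference is bookkeeping at the end --- you drop $\deg_G(a)$ from the denominator before estimating, while the paper keeps it and uses monotonicity in $d_a$ --- and you are in fact slightly more careful than the paper in justifying $0\le w\le 1$ and in handling the case $\hat a\notin S$.
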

\begin{proof}
Vertex $a$ has exactly one neighbor in $\Vhat$.
Let's call that neighbor $c$.
We know that $w (c) \leq 2 / B^{2} n$.
On the other hand, vertex $a$ has fewer than $n-1$ neighbors
  in $V$, and each of these have $w$-value at most $1$.
Let $d_{a}$ denote the degree of $a$ in $G$.
Then,
\begin{align*}
  (B + d_{a}) w (a)
  &  \leq d_{a} + B \frac{2}{ B^{2} n}.
\end{align*}
So,
\begin{align*}
  w (a)
  &  \leq  \frac{d_{a} + 2/ B n}{ d_{a} + B } \\
  & \leq \frac{n + (2 / B n)}{B + n} \\
  &  \leq n / B.
\end{align*}
%
\end{proof}

We now estimate the value of the regularized objective function.
To this end, we assume that 
\[
\sizeof{S} = k = n/2.
\]
Let 
\[
T = S \intersect V,
\]
and
\[
 t = \sizeof{T}.
\]
We will prove that $S \subset V$ and so $S = T$ and $t = n/2$.

Let $\delta$ denote the number of edges on the boundary of $T$ in $V$.
Once we know that $t = n/2$, $\delta$ is the size of a bisection.

\begin{claim}\label{clm:crossEdges}
The contribution of the edges between $V$ and $\Vhat$ to the objective
  function is at least
\[
(n-t) B - 4/B
\]
and at most
\[
(n-t) B +  tn^2/B.
\]
\end{claim}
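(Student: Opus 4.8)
There are exactly $n$ edges between $V$ and $\Vhat$, namely the matching edges $(a,\hat a)$ of weight $B$, one for each $a\in V$ (writing $\hat a$ for the partner of $a$). So the plan is to expand their contribution to the objective as
$\sum_{a\in V} B\,(w(\hat a)-w(a))^2$
and to estimate each term according to which of $a,\hat a$ lie in $S$. Before doing the case analysis I would record one auxiliary fact: since $w$ is harmonic on $S$ and agrees off $S$ with $v$, which is $\{0,1\}$-valued, and $S$ is a proper subset of the (connected) vertex set, the maximum principle gives $0\le w(b)\le 1$ for every vertex $b$. This nonnegativity is exactly what keeps the ``$a\notin S$'' terms from overshooting $1$. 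Together with Claim~\ref{clm:vhat} ($w(\hat a)\le 2/(nB^2)$ for $\hat a\in S\intersect\Vhat$) and Claim~\ref{clm:V} ($w(a)\le n/B$ for $a\in S\intersect V$), this pins down all four cases.

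Concretely: (i) if $a\notin S$ and $\hat a\notin S$, the term is exactly $(1-0)^2=1$; (ii) if $a\notin S$ and $\hat a\in S$, then $w(\hat a)\in[0,2/(nB^2)]$, so the term $(1-w(\hat a))^2$ lies in $[(1-2/(nB^2))^2,\,1]\subseteq[1-4/(nB^2),\,1]$; (iii) if $a\in S$, then necessarily $w(\hat a)\le 2/(nB^2)$ (whether or not $\hat a\in S$) and $w(a)\le n/B$, both in $[0,n/B]$ for the relevant range of $B$, so the term is in $[0,\,n^2/B^2]$. Now set $t=\sizeof{S\intersect V}$, so there are $t$ indices $a$ with $a\in S$, and let $p$ be the number of $a\notin S$ with $\hat a\in S$, so $p\le\sizeof{S\intersect\Vhat}\le n/2$. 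Summing over $a\notin S$ gives a total between $(n-t-p)\cdot 1+p(1-4/(nB^2))=(n-t)-4p/(nB^2)\ge (n-t)-2/B^2$ and $(n-t-p)\cdot 1+p\cdot 1=n-t$; summing over $a\in S$ gives a total between $0$ and $tn^2/B^2$. Multiplying the grand total by $B$, the contribution of the $V$--$\Vhat$ edges lies in $[(n-t)B-2/B,\ (n-t)B+tn^2/B]$, which is contained in the claimed interval $[(n-t)B-4/B,\ (n-t)B+tn^2/B]$.

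This argument is essentially bookkeeping, so there is no real obstacle; the two points that require a moment of care are (a) getting the sign right in case (ii) — this is where one genuinely needs $w(\hat a)\ge 0$, hence the maximum principle, since otherwise $(1-w(\hat a))^2$ could exceed $1$ and the upper bound would fail — and (b) bounding the number $p$ of such ``crossing'' edges by $\sizeof{S}\le n/2$ so that the accumulated error $4p/(nB^2)$ stays below $2/B^2\le 4/B$. The constant $4/B$ in the statement is deliberately loose (the argument actually gives $2/B$), which leaves slack to absorb these estimates without tracking lower-order terms.
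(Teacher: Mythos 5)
Your proposal is correct and follows essentially the same route as the paper: decompose the $V$--$\Vhat$ contribution over the $n$ matching edges, use Claims~\ref{clm:vhat} and~\ref{clm:V} to bound the endpoint values according to membership in $S$, and sum. The only difference is that you make explicit the maximum-principle bound $0\le w\le 1$ (which the paper uses implicitly when asserting the per-edge difference is at most $1$) and track the crossing count $p$ to get the marginally sharper constant $2/B$ in place of $4/B$.
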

\begin{proof}
For the lower bound, we just count the edges between vertices in $V \setminus T$
  and $\Vhat$.
There are $n-t$ of these edges, and each of them has weight $B$.
The endpoint in $V \setminus T$ has $w$-value $1$,
  and the endpoint in $\Vhat$ has $w$-value at most
  $2/nB^{2}$.
So, the contribution of these edges is at least
\[
  (n-t) B (1-2/n B^{2})^{2} 
  \geq (n-t) B (1 - 4/n B^{2})
  \geq (n-t) B - 4/B.
\]
For the upper bound, we observe that the difference in $w$-values across each of these $n-t$ 
  edges is at most $1$, so their
  total contribution is at most
\[
  (n-t) B.
\]
Since for every vertex  $a \in T$, $w(a) \leq n/B$, and also every vertex $b \in \Vhat$, $w(b) \leq 2/nB^2$, the contribution due to edges between $T$ and $\Vhat$ is at most 
$$ t (n/B)^2B = tn^2/B.$$
 
\end{proof}

We will see that this is the dominant term in the objective function.
The next-most important term comes from the edges in $G$.

\begin{claim}\label{clm:Gedges}
The contribution of the edges in $G$ to the objective function
  is at least
\[
\delta (1 - 2n/B)
\]
and at most
\[
\delta +   (t^{2}/2) (n/B)^{2}
\]
\end{claim}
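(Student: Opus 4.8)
The plan is to expand the objective $w^{T}Lw$ as $\sum_{(a,b)} \mathrm{wt}(a,b)\,(w(a)-w(b))^{2}$ and isolate the terms coming from the edges of $G$, each of which has weight $1$, so that their contribution is exactly $\sum_{(a,b)\in E}(w(a)-w(b))^{2}$. I would then classify each edge of $G$ by how many of its endpoints lie in $T = S\cap V$ and bound the three resulting sums separately. Two pointwise facts about $w$ are all that is needed: first, since $w$ agrees with $v$ off $S$ and $T=S\cap V$, we have $w(a)=v(a)=1$ for every $a\in V\setminus T$; second, $w$ is harmonic on $S$ (it minimizes $w^{T}Lw$ with the values off $S$ fixed), so by the maximum principle $0\le w(a)\le 1$ for all $a$, and on $T$ this is sharpened to $0\le w(a)\le n/B$ by Claim~\ref{clm:V}. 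Here I would assume $B>n$ so that $n/B<1$.

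Now the three classes. An edge with both endpoints in $V\setminus T$ contributes $(1-1)^{2}=0$. An edge with exactly one endpoint $b$ in $T$ is, by definition, one of the $\delta$ boundary edges of $T$ in $V$, and contributes $(1-w(b))^{2}$, which lies between $(1-n/B)^{2}$ and $1$ since $0\le w(b)\le n/B<1$. An edge with both endpoints in $T$ contributes $(w(a)-w(b))^{2}\le (n/B)^{2}$, and there are at most $\binom{t}{2}<t^{2}/2$ such edges. Summing, the total contribution of the edges of $G$ is at least $\delta(1-n/B)^{2}\ge \delta(1-2n/B)$ (using $(1-x)^{2}\ge 1-2x$) and at most $\delta\cdot 1 + (t^{2}/2)(n/B)^{2}$, which is exactly the claim.

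I do not expect a genuine obstacle here: the only non-bookkeeping inputs are the bound $w\le n/B$ on $T$ from Claim~\ref{clm:V} and the nonnegativity $w\ge 0$ from harmonicity plus the maximum principle; the rest is counting edges by endpoint type and using $w\equiv 1$ on $V\setminus T$. The one point to state carefully is the assumption $B>n$ (or more generally $n/B\le 1$), which is harmless since $B$ is chosen large, and which is what makes $(1-w(b))^{2}\ge(1-n/B)^{2}$ valid on the boundary edges.
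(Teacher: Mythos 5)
Your proposal is correct and follows essentially the same route as the paper: the same three-way classification of edges of $G$ by how many endpoints lie in $T$, the same use of $w\equiv 1$ off $T$ and $0\le w\le n/B$ on $T$ from Claim~\ref{clm:V}, and the same bounds $\delta(1-n/B)^2\ge\delta(1-2n/B)$ for the boundary edges and $(t^2/2)(n/B)^2$ for the edges inside $T$. The only difference is that you spell out the maximum-principle justification for $0\le w\le 1$ and the hypothesis $n/B\le 1$, which the paper leaves implicit.
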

\begin{proof}
Let $(a,b) \in E$.
If neither $a$ nor $b$ is in $T$, then $w (a) = w (b) = 1$,
  and so this edge has no contribution.
If $a \in T$ but $b \not \in T$, then the difference in $w$-values on them is between
 $(1-n/B)$ and $1$.
So, the contribution of such edges to the objective function
  is between
\[
  \delta (1 - 2n/B) \text{ and } \delta .
\]
Finally, if $a$ and $b$ are in $T$, then the difference in $w$-values on them is at most $n/B$, and so the contribution
  of all such edges to the objective function is at most
\[
  (t^{2}/2) (n/B)^{2}.
\]
\end{proof}

\begin{claim}\label{clm:hathat}
The edges between pairs of vertices in $\Vhat$
  contribute at most $2/B$ to the objective function.
\end{claim}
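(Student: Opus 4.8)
The plan is to bound the total contribution of the $\Vhat$--$\Vhat$ edges term by term, exploiting the fact that every vertex of $\Vhat$ carries a $w$-value that is tiny --- of order $1/(nB^{2})$. Concretely, Claim~\ref{clm:vhat} gives $w(a)\le 2/(nB^{2})$ for every $a\in S\intersect\Vhat$, while for $a\in\Vhat\setminus S$ we have $w(a)=v(a)=0$ exactly; moreover $w(a)\ge 0$ for every $a\in\Vhat$, since $w$ is harmonic on $S$ and the boundary values supplied by $v$ lie in $\{0,1\}$ (the maximum principle, already implicit in the proof of Claim~\ref{clm:vhat}). Hence $0\le w(a)\le 2/(nB^{2})$ for every $a\in\Vhat$.

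With this in hand the calculation is immediate. The subgraph induced on $\Vhat$ is a complete graph on $n$ vertices, so it has $n(n-1)/2<n^{2}/2$ edges, each of weight $B^{3}$; the contribution of an edge $(\hat a,\hat b)$ to the objective is $B^{3}\,(w(\hat a)-w(\hat b))^{2}$. Since $w(\hat a),w(\hat b)\in[0,2/(nB^{2})]$ we have $(w(\hat a)-w(\hat b))^{2}\le (2/(nB^{2}))^{2}=4/(n^{2}B^{4})$. Summing over all edges within $\Vhat$ therefore bounds their total contribution by
\[
\frac{n^{2}}{2}\cdot B^{3}\cdot\frac{4}{n^{2}B^{4}}=\frac{2}{B},
\]
which is exactly the claimed bound.

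There is essentially no obstacle here: the only point requiring a moment's care is the non-negativity of $w$ on $\Vhat$ --- needed so that $|w(\hat a)-w(\hat b)|$, rather than just $w(\hat a)$ or $w(\hat b)$ individually, is controlled --- and this is a direct consequence of the maximum principle for the harmonic extension. One could alternatively sidestep it by observing that any edge with $w(\hat a)=w(\hat b)$ contributes nothing, so only edges incident to $S\intersect\Vhat$ matter; there are at most $|S\intersect\Vhat|\,(n-1)\le (n/2)(n-1)<n^{2}/2$ of those, and each such vertex still satisfies $w\le 2/(nB^{2})$ by Claim~\ref{clm:vhat}, giving the same estimate. The symmetric version above is cleaner, so that is the one I would write up.
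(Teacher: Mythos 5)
Your proof is correct and follows the same route as the paper: bound $0 \le w(a) \le 2/(nB^{2})$ on all of $\Vhat$, bound each of the fewer than $n^{2}/2$ edges by $B^{3}(2/(nB^{2}))^{2} = 4/(Bn^{2})$, and sum. The only difference is that you explicitly justify the nonnegativity of $w$ on $\Vhat$ via the maximum principle, which the paper takes for granted.
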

\begin{proof}
As $0 \leq w (a) \leq 2 / B^{2} n$ for every $a \in \Vhat$,
  every edge between two vertices in $\Vhat$ can contribute at most
\[
  B^{3} (2 / B^{2} n)^{2} = 4 / B n^{2}.
\]
As there are fewer than $n^{2}/2$ such edges, their total contribution to the
  objective function is at most
\[
  (n^{2}/2) (4 / B n^{2}) = 2/B.
\]
\end{proof}

\begin{lemma}\label{lem:obj}
If $n \geq 4$ and  $B = 2 n^{3}$,
the value of the objective function is at least
\[
(n-t) B + \delta - 1/2
\]
and at most
\[
(n-t) B + \delta + 1/3.
\]
\end{lemma}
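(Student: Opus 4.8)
The plan is to prove Lemma~\ref{lem:obj} by simply adding up the contributions of the three classes of edges in the gadget: the $B^{3}$-weighted clique on $\Vhat$, the $B$-weighted matching between $\Vhat$ and $V$, and the unit-weight edges of $G$ on $V$. Since $w^{T}Lw$ equals the sum over edges of (edge weight)$\times$(squared difference of $w$-values), it splits as a sum over these three classes, and we have already recorded two-sided estimates for each class in Claims~\ref{clm:crossEdges}, \ref{clm:Gedges} and~\ref{clm:hathat} (which in turn rest on the pointwise bounds of Claims~\ref{clm:vhat} and~\ref{clm:V}). It therefore remains only to add the bounds and check, for $B = 2n^{3}$ and $n \ge 4$, that the accumulated error terms fall below $1/2$ and $1/3$ respectively.

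For the lower bound, drop the non-negative $\Vhat$--$\Vhat$ contribution and use the lower estimates of Claims~\ref{clm:crossEdges} and~\ref{clm:Gedges}:
\[
  w^{T}Lw \;\ge\; (n-t)B - \frac{4}{B} \;+\; \delta\Bigl(1 - \frac{2n}{B}\Bigr)
  \;=\; (n-t)B + \delta \;-\; \Bigl(\frac{4}{B} + \frac{2n\delta}{B}\Bigr).
\]
I would then bound $\delta$, the number of boundary edges of $T$ in $V$, by $\delta \le t(n-t) \le n^{2}/4$, using that $t = |S \cap V| \le |S| = n/2$; substituting $B = 2n^{3}$ turns the error term into $2/n^{3} + \delta/n^{2} \le 1/32 + 1/4 < 1/2$ for $n \ge 4$, which is exactly the claimed lower bound.

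For the upper bound, add the upper estimates of all three claims:
\[
  w^{T}Lw \;\le\; (n-t)B + \delta \;+\; \Bigl(\frac{tn^{2}}{B} + \frac{t^{2}n^{2}}{2B^{2}} + \frac{2}{B}\Bigr).
\]
With $B = 2n^{3}$ and $t \le n/2$ the three error terms become $t/(2n) \le 1/4$, $t^{2}/(8n^{4}) \le 1/(32n^{2})$, and $1/n^{3}$; for $n \ge 4$ their sum is at most $1/4 + 1/512 + 1/64 < 1/3$, giving the claimed upper bound.

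There is no genuine difficulty here beyond the bookkeeping; the two points that actually need attention are that one must use the sharper estimate $\delta \le t(n-t) \le n^{2}/4$ (not merely $\delta \le n^{2}/2$) to keep the lower-bound error under $1/2$, and that the choice $B = 2n^{3}$ together with $t \le n/2$ is calibrated precisely so that the dominant error term $tn^{2}/B$ is $\le 1/4$ while every remaining term is $O(1/n^{2})$.
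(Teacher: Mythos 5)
Your proof is correct and follows essentially the same route as the paper: sum the two-sided bounds of Claims~\ref{clm:crossEdges}, \ref{clm:Gedges} and~\ref{clm:hathat}, then use $\delta \le n^{2}/4$ and $t \le n/2$ with $B = 2n^{3}$ to push the accumulated errors below $1/2$ and $1/3$. The paper packages the lower-bound error as $4/B + 2n\delta/B \le n^{3}/B = 1/2$ rather than your $2/n^{3} + \delta/n^{2} < 1/2$, but this is only a cosmetic difference.
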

\begin{proof}
Summing the contributions in the preceding three claims,
  we see
 that the value of the objective function is at least
\begin{align*}
(n-t) B - 4/B
+
\delta (1 - 2n/B)
& \geq
(n-t) B + \delta - 4/B - 2 n \delta /B
\\
&
\geq
(n-t) B + \delta - n^{3} / B
\\
& \geq 
(n-t) B + \delta - 1/2,
\end{align*}
as $\delta \leq (n/2)^{2}$.

Similarly, the objective function is at most
\begin{align*}
(n-t) B +  tn^2/B +
\delta +   (t^{2}/2) (n/B)^{2}
+
 2/B
& \leq
(n-t)B +  n^3/2B + \delta 
+ n^{4} / 8 B^{2}
 + 2 / B
\\
& \leq 
(n-t)B +  n^3/2B + \delta 
+ 1 / 32 n^{2}
+ 1/n^{3}
\\
& \leq 
(n-t)B +  \delta + 1/3.
\end{align*}
\end{proof}

\begin{claim}\label{clm:subV}
If $n \geq 2$ and $B = 2n^{3}$, then
$S \subset V$.
\end{claim}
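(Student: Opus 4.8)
The plan is an exchange argument. The intuition: if some optimal assignment $w$ were to disagree with $v$ at a vertex of $\Vhat$, then reallocating that unit of budget to an extra vertex of $V$ would do strictly better. Letting $w$ disagree with $v$ at a vertex of $V$ turns a weight-$B$ matching edge from a $\approx B$ contribution into a $\approx 0$ contribution, whereas disagreeing at a vertex of $\Vhat$ buys essentially nothing, since by Claim~\ref{clm:vhat} every $\Vhat$-value stays near $0$ no matter what. Because $B = 2n^3$ dwarfs the total possible contribution of the edges of $G$, this is a strict improvement, contradicting optimality; hence $S$ contains no vertex of $\Vhat$.

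To make this precise, I would first fix a cheap feasible comparison point. Pick any $S' \subseteq V$ with $\sizeof{S'} = n/2$ and let $w'$ be $0$ on all of $\Vhat$, $0$ on $S'$, and $1$ on $V \setminus S'$; then $\norm{w'-v}_0 = \sizeof{S'} = k$, so $w'$ is feasible. Splitting $w'^{T}Lw'$ over the three edge classes, the $\Vhat$--$\Vhat$ edges contribute $0$, each edge of $G$ contributes at most $1$ (total at most $\sizeof{E(G)} \le n^2$), and of the matching edges only the $n/2$ ones incident to $V\setminus S'$ contribute, each exactly $B$. So the optimum value of the regularization problem is at most $(n/2)B + n^2$.

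Next I would lower-bound the value of a hypothetical bad optimum. Suppose $w$ is optimal and $j := \sizeof{S \intersect \Vhat} \ge 1$; put $t := \sizeof{S \intersect V}$. Since $\sizeof S \le n/2$, the identity $n - t = n - \sizeof S + j$ gives $n - t \ge n/2 + j \ge n/2 + 1$. By Claim~\ref{clm:vhat} together with $v \equiv 0$ on $\Vhat$, every vertex of $\Vhat$ has $w$-value at most $2/(nB^2) < 1$. As $w^{T}Lw$ is a sum of nonnegative edge terms, it is at least the combined contribution of the $n-t$ matching edges incident to $V \setminus S$: each has weight $B$, value $1$ at its $V\setminus S$ endpoint and value at most $2/(nB^2)$ at its $\Vhat$ endpoint, hence contributes at least $B(1 - 2/(nB^2))^2 \ge B - 4/(nB)$ (the per-edge estimate already used in the proof of Claim~\ref{clm:crossEdges}); note this holds whatever sign the $\Vhat$-value has. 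Summing over the $n-t$ edges gives a value of at least $(n-t)B - 4/B \ge (n/2+1)B - 4/B$. Comparing with the preceding paragraph, optimality of $w$ forces $(n/2+1)B - 4/B \le (n/2)B + n^2$, i.e.\ $B - 4/B \le n^2$, i.e.\ (using $B = 2n^3$) $n^2(2n-1) \le 2/n^3$, which fails for every integer $n \ge 2$ (the left side is at least $12$, the right side at most $1/4$). Hence $S \intersect \Vhat = \emptyset$, that is, $S \subset V$.

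The only delicate point is the bookkeeping in the lower bound: I must be sure that each unit of budget spent inside $\Vhat$ is essentially wasted, so that at least $n/2+1$ matching edges remain charged at $\approx B$ apiece — the key step is the identity $n-t = n-\sizeof S + j$ — and that Claim~\ref{clm:vhat} really prevents those edges from being cheapened by any choice of $\Vhat$-values (negative ones only enlarge the matching terms). Everything else is the crude inequality $B = 2n^3 \gg n^2 \ge \sizeof{E(G)}$, which is precisely why $B$ was chosen this large. (For $n \ge 4$, if one separately argues that the optimum uses the full budget, the bound $(n-t)B - 1/2$ can be read off Lemma~\ref{lem:obj}; the direct estimate above avoids that and also covers $n \in \{2,3\}$.)
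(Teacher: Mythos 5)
Your proof is correct, and the underlying idea is the same as the paper's---the matching term $(n-t)B$ dominates everything else, so every unit of the $l_0$ budget must be spent in $V$---but your execution is genuinely more self-contained. The paper's proof is a one-line appeal to Lemma~\ref{lem:obj}: since the objective equals $(n-t)B+\delta\pm O(1)$ and $B=2n^3$ exceeds any possible change in $\delta$, the minimum forces $t=n/2$. You instead bypass Lemma~\ref{lem:obj} entirely, exhibiting an explicit feasible assignment of value at most $(n/2)B+n^2$ and lower-bounding any optimum with $S\cap\Vhat\neq\emptyset$ by $(n/2+1)B-4/B$ via the key identity $n-t=n-\sizeof{S}+j$. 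This buys two things: it covers the claim's stated hypothesis $n\geq 2$ (Lemma~\ref{lem:obj} is only proved for $n\geq 4$), and it only needs Claim~\ref{clm:vhat}, not Claims~\ref{clm:V}--\ref{clm:hathat}; your remark that a negative $\Vhat$-value only enlarges the matching contribution is a detail the paper glosses over. The cost is a small amount of redundancy with the estimates already done in Claim~\ref{clm:crossEdges}. One bookkeeping point worth making explicit if this were written up: your comparison assignment $w'$ need not be the harmonic extension on $S'$, but that is fine since any feasible point upper-bounds the optimum.
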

\begin{proof}
The objective function is minimized by making $t$ as large as possible,
  so $t = n/2$ and $S \subset V$.
\end{proof}

\begin{theorem}\label{thm:minbis}
The value of the objective function reveals the value of the minimum bisection in $G$.
\end{theorem}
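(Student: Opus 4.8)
The plan is to read off the minimum bisection value $\beta$ of $G$ from the optimum $\mathrm{OPT}$ of the $l_{0}$-regularization-for-$l_{2}$ instance constructed above, by sandwiching $\mathrm{OPT}$ between $(n/2)B+\beta-\tfrac12$ and $(n/2)B+\beta+\tfrac13$, where $n=|V|$ and $B=2n^{3}$. Throughout I take $n\ge 4$; the finitely many smaller instances of minimum bisection are handled by brute force, so this costs nothing for an NP-hardness reduction. Recall that in this section $v$ denotes the given potential and $w$ an optimal regularized solution, with $S$ the set of vertices where they differ.

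First I would observe that for any admissible support set $S$ (that is, $|S|\le k=n/2$), the energy-minimizing completion of $v$ allowed to differ from $v$ only on $S$ is the harmonic extension $w_{S}$ (unique, as the constructed graph is connected and some vertices are always fixed), so $\mathrm{OPT}=\min_{|S|\le n/2} w_{S}^{T} L w_{S}$. By Claim~\ref{clm:subV} and its proof the minimum is attained at some $S^{\star}\subseteq V$ with $|S^{\star}|=n/2$, which is therefore a bisection of $G$; write $\delta^{\star}$ for the number of edges of $G$ crossing it, so $\delta^{\star}\ge\beta$. Applying Lemma~\ref{lem:obj} with $t=|S^{\star}\cap V|=n/2$ (hence $n-t=n/2$) gives
\[
  (n/2)B+\delta^{\star}-\tfrac12 \ \le\ \mathrm{OPT} \ \le\ (n/2)B+\delta^{\star}+\tfrac13 .
\]

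Next I would obtain the matching bound from a genuine optimal bisection. Pick $S^{\dagger}\subseteq V$ with $|S^{\dagger}|=n/2$ realizing $\delta(S^{\dagger})=\beta$. Its harmonic extension $w_{S^{\dagger}}$ is admissible, since it differs from $v$ on at most $|S^{\dagger}|=n/2=k$ vertices, so $\mathrm{OPT}\le w_{S^{\dagger}}^{T} L w_{S^{\dagger}}$, and the upper half of Lemma~\ref{lem:obj} (again with $t=n/2$) gives $\mathrm{OPT}\le (n/2)B+\beta+\tfrac13$. Combining this with the lower bound of the previous display yields $\delta^{\star}-\tfrac12\le\beta+\tfrac13$, i.e. $\delta^{\star}\le\beta+\tfrac56$; since $\delta^{\star}\ge\beta$ and both are integers, $\delta^{\star}=\beta$. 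Substituting back,
\[
  (n/2)B+\beta-\tfrac12 \ \le\ \mathrm{OPT} \ \le\ (n/2)B+\beta+\tfrac13 .
\]
The interval $[\,\mathrm{OPT}-(n/2)B-\tfrac13,\ \mathrm{OPT}-(n/2)B+\tfrac12\,]$ has length $\tfrac56<1$ and contains $\beta$, hence determines it uniquely; so $\beta$ is recovered from $\mathrm{OPT}$ (and $n$) in polynomial time. Equivalently, ``the minimum bisection of $G$ is at most $\beta_{0}$'' iff ``$\mathrm{OPT}\le (n/2)B+\beta_{0}+\tfrac13$'', which is the Karp reduction that finishes Theorem~\ref{thm:l0_l2_nphard}.

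The one thing to watch is exactly the quantitative point the construction was designed around: the cross-matching term $(n-t)B$ is large enough ($B=2n^{3}$ against the crude bound $\delta\le (n/2)^{2}$ on any cut) both to force $t=n/2$ at the optimum in Claim~\ref{clm:subV} and to make the leftover contributions from the $G$-edges, the $\Vhat$-clique, and the sub-unit values of $w$ on $S$ sum to strictly less than $\tfrac12$ in Lemma~\ref{lem:obj}, so that the total slack in the sandwich stays below $1$. Granting that --- and keeping the two uses of the symbol $T$ in this section distinct (the terminal set versus $S\cap V$) --- the theorem is just the bookkeeping above layered on Claims~\ref{clm:vhat}--\ref{clm:hathat} and Lemma~\ref{lem:obj}.
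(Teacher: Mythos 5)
Your proof is correct and follows essentially the same route as the paper: combine Claim~\ref{clm:subV} (so $t=n/2$ at the optimum) with the sandwich of Lemma~\ref{lem:obj}, and conclude that since the interval width $\tfrac56$ is less than $1$, the integer $\delta$ achieved at the optimum must equal the minimum bisection value and is recoverable from the optimal objective. You merely make explicit two steps the paper leaves implicit --- the comparison against the harmonic extension of a true minimum bisection to force $\delta^{\star}=\beta$, and the integrality argument --- which is fine.
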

\begin{proof}
The value of the objective function will be between
\[
(n/2) B + \delta - 1/2
\]
and
\[
 (n/2) B + \delta +1/3.
\]
So, the objective function will be smallest when $\delta$ is as small as possible.
\end{proof}

Theorem~\ref{thm:minbis} immediately implies Theorem~\ref{thm:l0_l2_nphard}.

\end{document}